\documentclass[twoside]{article}

\usepackage{xcolor}
%
%

%
\usepackage[preprint]{aistats2026}
%


\usepackage[round, authoryear]{natbib}
\usepackage{wrapfig,caption, graphicx, subfig}
\usepackage{comment}
\usepackage{subcaption}

\usepackage{amsthm, amsfonts, amssymb, enumitem, cancel}
\newtheorem{theorem}{Theorem}
\newtheorem{proposition}{Proposition}
\newtheorem{lemma}{Lemma}
\newtheorem{corollary}{Corollary}
\newtheorem{example}{Example}
\newtheorem{assumption}{Assumption}
\newtheorem{remark}{Remark}
\newtheorem{definition}{Definition}
\newcommand{\Rectflow}{\sf Rectflow}
\newcommand{\Rectify}{\sf Rectify}

\newcommand{\blue}{\color{blue}}
\newcommand{\bk}{\color{black}}
\newcommand{\vansh}[1]{\color{green}{Vansh: #1} \bk}

\usepackage{style}
\usepackage{mathrsfs}

\def\argmin{{\arg\min}}

\def\bbB{\mathbb{B}}

\def\bbE{\mathbb{E}}

\def\bbP{\mathbb{P}}

\def\bbR{\mathbb{R}}
\def\bbS{\mathbb{S}}

\def\bmu{{\boldsymbol\mu}}

\def\cC{\mathcal{C}}

\def\cN{\mathcal{N}}
\def\cO{\mathcal{O}}

\def\cS{\mathcal{S}}
\def\cT{\mathcal{T}}

\def\cZ{\mathcal{Z}}

\def\ccF{\mathscr{F}}

\usepackage{yfonts}


\def\Norm#1{\Vert#1\Vert}
\def\innerprod#1{\left\langle#1\right\rangle}

\def\pr{{\bbP}}

\def\tr{{\sf Tr}}

\def\ddt{{\frac{\rm d}{\rm dt}}}
\def\ddtau{{\frac{\rm d}{\rm d\tau}}}


\def\Unif{{\rm Unif}}

\def\evel{{\varepsilon_{\rm vl}}}
\def\sqevel{{\varepsilon^2_{\rm vl}}}



\usepackage{bbm}
\def\ind{{\mathbbm{1}}}
\mathchardef\mhyphen="2D





\newcommand{\bas}[1]{\begin{align*}#1\end{align*}}
\newcommand{\ba}[1]{\begin{align}#1\end{align}}
















\def\1{\bm{1}}











\DeclareMathAlphabet{\mathsfit}{\encodingdefault}{\sfdefault}{m}{sl}
\SetMathAlphabet{\mathsfit}{bold}{\encodingdefault}{\sfdefault}{bx}{n}










\newcommand{\Law}{\text{Law}}





\hypersetup{
    colorlinks=true,
    linkcolor=red,
    citecolor=blue,
    urlcolor=blue
}    
\begin{document}

%

%
%
\runningauthor{Bansal, Roy, Rinaldo, Sarkar}

\twocolumn[

\aistatstitle{On the Convergence and Straightness of Rectified Flow}

\aistatsauthor{Vansh Bansal$^*$ \And
Saptarshi Roy$^*$ \And
Alessandro Rinaldo \And
Purnamrita Sarkar}

\aistatsaddress{Department of Statistics and Data Sciences, UT Austin\\
$^*$Equal contribution}
]
\begin{abstract}
Flow Matching has become a cornerstone of modern generative models like Stable Diffusion 3, largely due to the efficiency of its Rectified Flow (RF) variant. The success of RF hinges on iteratively learning straight trajectories, pushing generation towards fewer sampling steps. However, the theoretical link between path geometry and sampling efficiency has been under-explored. This paper fills this gap by introducing a novel \textit{Piecewise Straightness} parameter, $\gamma_{2,T}$. We establish the first Wasserstein convergence bound that explicitly links the discretization error of \textit{any} general flow-model to $\gamma_{2,T}$, proving that minimizing curvature is the key to achieving high-fidelity, one-step sampling. 

Building on this theory, we establish the first theoretical framework to analyze the straightness of RF. We begin by offering intuitive geometric arguments for simple cases before identifying sufficient conditions under which a single rectification step (1-RF) yields a perfectly straight or even a Monge optimal coupling. While whether these sufficient conditions are met depends on the problem geometry, they enable the first concrete proofs in this area. Critically, fulfilling these conditions makes the subsequent flow (2-RF) perfectly straight ($\gamma_{2,T}=0$). This eliminates the discretization error in our bound and makes flawless, single-step sampling possible. 
\end{abstract}
\section{{Introduction}}

In recent years, diffusion models have become the mainstream approach for image generation tasks \citep{image_ho2022cascaded, image_balaji2022ediff, image_rombach2022high}.
They leverage the score-based generative model (SGM) framework \citep{sohl2015deep,ho2020denoising}, where data is gradually perturbed according to a pre-defined diffusion process, and the process is reversed using Stochastic Differential Equations (SDEs) for sample generation. While powerful, the stochastic nature of the reverse SDEs makes sampling computationally expensive as it requires fine a discretization. Deterministic alternatives, such as probability-flow ordinary differential equations (ODEs) and DDIM \citep{song2020score, song2023consistency, song2020denoising, dpm-lu2022fast, dpmv3-zheng2023dpm} 
can be faster but often produce less faithful outputs with the coarse discretizations needed for rapid sampling. The primary reason for this inaccuracy is the discretization error introduced by numerical solvers when approximating highly curved, nonlinear trajectories.

The key to overcoming this limitation lies in learning generative paths that are inherently straight, as this would minimize the error from numerical solvers. This has motivated the development of Flow Matching (FM) \citep{lipman2022flow, albergo2023stochastic, albergo2022building}, a powerful framework that allows adoption of general probability paths to supervise ODE-based flow models. A prominent application of FM is Rectified Flow (RF) \citep{liu2023flow}, which is uniquely designed to learn perfectly straight trajectories from a simple noise distribution to the target data. Through an iterative ``reflow" procedure, RF progressively straightens the flow, thereby reducing the transport cost \citep{liu2022rectified, shaul2023kineticoptimalprobabilitypaths}. 

Recent empirical works \citep{liu2023instaflow, liu2023flow} have demonstrated RF's ability to generate high-quality images with just one or two discretization steps after 2-rectification procedures (2-RF). Moreover, \citep{lee2024improvingtrainingrectifiedflows} offered a heuristic explanation for why 2-RF should often produce straight flows and developed an improved training routine to achieve this directly, avoiding the potential performance degradation from excessive reflowing.
Despite these empirical breakthroughs, the underlying principle remains a heuristic. The formal connection between a flow's geometric straightness, its convergence rate, and the conditions under which RF can provably achieve such straightness remains elusive. In this paper, we establish this missing theoretical foundation by connecting a generative flow's geometry to its sampling efficiency.

First, we introduce a novel and robust Piecewise Straightness (PWS) parameter, $\gamma_{2,T}(\mathcal{Z})$, to quantify a flow's curvature. Using this metric, we establish the first 2-Wasserstein convergence bound that explicitly connects flow geometry to sampling efficiency. Our bound proves that the discretization error scales with $\mathcal{O}(\gamma_{2,T}/T^2)$, providing a rigorous theoretical justification for why straighter flows enable high-fidelity generation with fewer sampling steps. We also give ways to estimate $\gamma_{2, T}$ and show that for Gaussian-mixture target distributions it crucially depends on the maximum separation between the component means.

Building on this theory, we develop the first theoretical framework to analyze RF and identify a sufficient condition on the flow's Jacobian that makes the 2-RF flow perfectly straight $(\gamma_{2,T}=0)$. Furthermore, we identify a stronger commutativity condition under which this flow is not only straight but also the \textit{Monge optimal} transport map. We also provide the first concrete proofs that these conditions are met for key multi-dimensional problems, such as Gaussian-to-Gaussian and Gaussian-to-Gaussian-mixture flows, confirming they yield optimal and straight flows.
\paragraph{Notations.} For a matrix $A$, we define the matrix exponential as $\exp(A):= \sum_{k=}^\infty A^k/k!$. For two matrices $A$ and $B$, we define the Lie-bracket operation as $[A,B]:= AB - BA$.
\section{{Preliminaries}}\label{sec: background}
Flow-based generative models define a mapping between samples $X_0$ from the noise distribution $\rho_0$ (typically standard Gaussian) and the samples $X_1$ from the target distribution $\rho_1$ through an ODE:
\begin{equation}
    dZ_t = v(Z_t, t)\ dt, \quad  Z_0  = X_0 \sim \rho_0, \label{eq: ode-true}
\end{equation}
where $v: \bbR^d \times [0,1] \to \bbR^d$ is a time-varying drift (velocity) field that defines a probability path between $\rho_0$ and $\rho_1$. A typical recipe for constructing $v$ is to first consider a stochastic interpolation path $X_t = a_t X_0 + b_t X_1$ such that $a_0 = b_1 = 1$ and $a_1 = b_0 = 0$, and $(X_0, X_1) \sim \rho_0 \otimes \rho_1$. Then one can construct a drift field by optimizing the following:
\begin{equation}
    v := \mathop{\argmin}\limits_{f: \bbR^d \times [0,1] \to \bbR^d} \int_0^1\E{\norm{\dot X_t - f(X_t, t)}_2^2} dt,
    \label{eq: drift_obj}
\end{equation}
which has the solution $v(x, t):= v_t(x) = \bbE[\dot X_t \mid X_t = x]$. However, in practice, we parameterize $v$ by a neural network class $v_\theta$ to solve \eqref{eq: drift_obj} as the aforementioned conditional expectation is typically intractable. Let $\widehat{v}$ be an approximate solution of $\eqref{eq: drift_obj}$, and consider the ODE
\begin{equation}
    d \tilde Y_t = \widehat v_t(\tilde Y_t)\, dt , \quad  \tilde Y_0 \sim \rho_0 \label{eq: ode-approx}.
\end{equation}
As proposed in \cite{liu2023flow}, we apply the Euler discretization of the ODE to obtain our final sample estimates:
\begin{equation}
\label{eq: emp-ode-disc}
    \widehat Y_{t_i} = \widehat Y_{t_{i-1}} + \widehat v_{t_{i-1}}(\widehat Y_{t_{i-1}}) (t_i-t_{i-1}), \quad  i \in [T],
\end{equation}
where the ODE is discretized into $T$ uniformly spaced steps, with $t_i = i/T$. The final sample estimate $\widehat Y_1$ follows the distribution $\widehat\rho_1 := \Law(\widehat Y_1)$.

\paragraph{The Optimal Transport (OT) problem}:
The OT problem, first formulated by Monge (1781), seeks to find a deterministic map $\mathcal{T}$ that transports mass from an initial distribution $\rho_0$ to a target distribution $\rho_1$ with minimal cost. This is expressed as
{
\begin{equation} \label{eq: monge-problem}
\inf_{\mathcal{T}} \Ee{\rho_0}{c(\mathcal{T}(X_0) - X_0)} \ \  \text{s.t.} \ \ \text{Law}(\mathcal{T}(X_0)) = \rho_1
\end{equation}
}
An equivalent dynamic formulation recasts this as finding an optimal continuous-time path $\{X_t\}_{t \in [0,1]}$ between the distributions. For convex cost functions $c$, the optimal path is the straight-line \textit{displacement interpolant}, $X_t = tX_1 + (1-t)X_0$, which forms a geodesic in the Wasserstein space. This specific interpolant minimizes the kinetic energy of the flow, resulting in straight trajectories. RF, as we will discuss next, is uniquely designed to learn the drift function $v_t$ corresponding to this displacement interpolant, simplifying the complex OT problem into a series of tractable least-squares optimization tasks. For the remainder of this paper, we assume $c=\norm{\cdot}^2$ when referring to the OT problem \eqref{eq: monge-problem} unless stated otherwise.

\paragraph{Rectified flow (RF)}: RF \citep{liu2023flow} interpolates between the two distributions in straight line paths as $X_t = tX_1 + (1-t)X_0$ for $t \in [0,1]$, giving $v_t(x) = \E{X_1 - X_0\mid X_t=x}$. Therefore, the resulting sampling ODE \eqref{eq: ode-true} also approximates a geodesic path between $\rho_0$ and $\rho_1$ that allows faster sampling.
\paragraph{Straight couplings and flows}:
In context of RFs, the optimization step \eqref{eq: drift_obj} is known as a rectification step, and the solution path $\cZ: = \{Z_t\}_{t \in [0,1]}$ of ODE \eqref{eq: ode-true} is known as the rectified flow, denoted by $\cZ = {\Rectflow}(X_0, X_1)$. It is known that RF is marginal preserving \cite{liu2023flow}, i.e., $\Law(X_t) = \Law(Z_t)$. Hence, RF yields a new \textit{dependent} coupling $(Z_0, Z_1):= {\Rectify}(X_0, X_1)$ between $\rho_0$ and $\rho_1$. Moreover, a coupling $(X_0, X_1)$ is called a \textit{straight coupling} if $\bbE[X_1 - X_0 \mid tX_1 + (1-t)X_0] = X_1 - X_0$, a.s. with respect to the distribution of $(X_0, X_1)$ and for $t \sim \text{Unif}(0, 1)$, i.e., the drift $v$ along its linear interpolation is a constant function of time almost surely. Therefore, the flow generated by rectification of a straight coupling $(X_0, X_1)$ is defined as a \emph{straight flow}. \cite{liu2023flow} showed that successive rectifications eventually lead to a near straight flow that allows faster sampling in the generation stage. However,  quantitative bounds on the effects of straightness of a flow on its convergence rate still remain unknown.

\section{{Straightness of a flow}}

This section introduces our novel parameters for quantifying the straightness of the ODE flow in \eqref{eq: ode-true}. We will subsequently show that our more precise notions of straightness are critical for efficient numerical integration of flow-based models with fewer discretization steps. 

Intuitively, a perfectly straight path has no curvature. For a parametric curve, $\alpha(t):= (t, Z_t)$ for $t \in [0,1]$, this corresponds to zero magnitude of its acceleration, i.e., 
\begin{align} \label{eq: acceleration-zero}
    \norm{\ddot \alpha(t)}_2 = \norm{(0, \dot v_t(Z_t))}_2 = \norm{\dot v_t(Z_t)}_2 = 0
\end{align}
for all $t$. For the definition of a straight flow mentioned in Section \ref{sec: background}, assuming $\cZ = \{Z_t\}_{t \in [0,1]}$ is a twice differentiable curve defined through ODE \eqref{eq: ode-true} with $Z_0 \sim \rho_0 = N(0,I_d)$, we require \eqref{eq: acceleration-zero} to hold for almost every $t \sim \text{Unif}(0, 1).$ This motivates us to quantify the straightness of the entire flow $\cZ$ by measuring its magnitude of acceleration along the path by the following two quantities:
\begin{definition}
Let $\cZ = \{Z_t\}_{t \in [0,1]}$ be twice-differentiable flow following the ODE \eqref{eq: ode-true}. 

\begin{enumerate}
    \item The average straightness (AS) parameter of $\cZ$  is defined as
\[
\gamma_1(\cZ) := \int_0^1 \E{ \norm{\dot v_t(Z_t)}_2^2} \; dt.
\]

\item Let $0 = t_0 < t_1 <\ldots <t_T =1$ be a partition of $[0,1]$ into $T$ intervals of equal length. The piece-wise straightness (PWS) parameter of the flow $\cZ$ is defined as 
\[
\gamma_{2, T}(\cZ) := \max_{i \in [T]}\frac{1}{t_i - t_{i-1}}\int_{t_{i-1}}^{t_i} \E{ \norm{\dot v_t(Z_t)}_2^2} \; dt.
\]
\end{enumerate}

\end{definition}

The quantity $\gamma_1(\cZ)$ essentially captures the average curvature of the flow over time $t \in [0,1]$. On the other hand, $\gamma_{2,T}(\cZ)$ captures the maximum average curvature of $\cZ$ over the sub-intervals $[t_{i-1}, t_{i}]$ for all $i \in [T]$. Therefore, $\gamma_{2, T}(\cZ)$ captures a more stringent notion of straightness, and it's easy to show that $\gamma_{2,T}(\cZ)\ge \gamma_1(\cZ)$. See Lemma \ref{lemma: straightness comparison} in the appendix.

A small value for $\gamma_1(\cZ)$ or $\gamma_{2,T}(\cZ)$ indicates the flow is nearly straight. Below, we argue that these measures provide a more robust notion of straightness than the criterion 
$$S(\cZ) := \int_{0}^1 \E{\norm{Z_1 - Z_0 - v_t(Z_t)}_2^2} \; dt$$
proposed by \cite{liu2023flow}.  While, it can be shown that $\gamma_1(\cZ)=\gamma_{2,T}(\cZ)=0 \implies S(\cZ)=0$, the quantity $S(\cZ)$ can be misleadingly small even for highly curved paths. For instance, consider the wavy flow given by $Z_t = Z_0 + (t, 50 N^{-2}\sin(2 \pi N t))^\top$. A straightforward calculation shows that $S(\cZ) = O(N^{-2}) \to 0$ as $N \to \infty$, which incorrectly implies the path is becoming straighter. In reality, the flow just oscillates more rapidly. Our metrics, on the other hand, remain bounded away from zero ($\gamma_{2,T}(\cZ) \geq \gamma_1(\cZ) = 2 \times 10^4 \pi^4$), and correctly identify the increasing oscillations as a departure from straightness. 
\paragraph{Estimating the straightness parameters}: 
Since the true drift field is not typically tractable in practice, we give the following estimators for our straightness parameters which use the learnt field and do not require any extra computation:
\begin{align} 
    &\widehat{\gamma}_{2, T} = \frac{1}{(\Delta t)^2} \max_{i=1, \dots, T-1} \left( \frac{1}{N} \sum_{j=1}^{N} \left\| \widehat v_i^{(j)} - \widehat v_{i-1}^{(j)} \right\|_2^2 \right)\label{eq: gamma_2_estimate}\\
    & \text{and }\widehat{\gamma}_1 = \frac{1}{\Delta t} \frac{1}{N} \sum_{i=0}^{T-2} \sum_{j=1}^{N} \left\| \widehat v_{i+1}^{(j)} - \widehat v_i^{(j)} \right\|_2^2,\nonumber
\end{align}
where $\widehat v_i^{(j)} = \widehat v_{t_{i}}(\widehat Y^{(j)}_{t_{i}})$ denotes the estimated drift for each of the $j \in [N]$ samples flowing through the discretized ODE~\eqref{eq: emp-ode-disc}.

\section{{Wasserstein Convergence}}
\label{sec: convergence rate}
In this section, we analyze error rates for the final sampling distribution of a learnt flow model in terms of the 2-Wasserstein distance from the target distribution. To this end, we make the following assumptions on the drift function and its estimate that are necessary for establishing our error bounds:

\begin{assumption}
\label{assmp: main assumption}
Assume that 
    \begin{enumerate}[label=(\alph*)]
    \item \label{assmp: estimation error}
    There exists $\evel\ge 0$ such that 
    $
        \smash{\displaystyle \max_{0 \le i \le T}\Ee{X_{t_i} \sim \rho_{t_i}}{\norm{v_{t_i}(X_{t_i}) - \widehat v_{t_i}(X_{t_i})}_2^2} \leq \sqevel}$.
    \item \label{assmp: lipschitz cond} 
     There exists $\widehat{L}>0$ such that $\norm{\widehat{v}_t(x) - \widehat{v}_t(y)}_2 \le \widehat{L}\norm{x - y}_2$ almost surely.
     \end{enumerate}
\end{assumption}
Assumption \ref{assmp: main assumption}\ref{assmp: estimation error} requires $\widehat{v}_{t}$ to closely estimate the true drift $v_t$ across all $t \in \{t_i\}_{i \in [T]}$, a standard condition in the diffusion model literature \citep{gupta2024improvedsamplecomplexitybounds,li2023towards, li2024towards, chen2022sampling} essential for controlling error rates. We emphasize that it is notably weaker than assuming a uniform bound on the estimation error for all $t \in [0, 1].$ Assumption \ref{assmp: main assumption}\ref{assmp: lipschitz cond} imposes a Lipschitz condition (similar to \textit{one-sided} Lipschitzness) on $\widehat{v}_t$, also common for score functions in both score-based and flow-based generative models \citep{chen2022sampling, kwon2022scorebasedgenerativemodelingsecretly,li2023towards, pedrotti2024improvedconvergencescorebaseddiffusion, boffi2024flow}. Since $\widehat{v}$ is typically parameterized by neural networks with Lipschitz activations, this condition is both natural and practical. Moreover, in flow-based models, Lipschitz continuity of $v_t$ is crucial for the well-posedness of the ODE \eqref{eq: ode-true} \citep{liu2023flow, boffi2024flow}, further justifying the use of Lipschitz-constrained networks in training.
\subsection{Convergence under exact integration}
 We begin by analyzing the continuous-time flow~\eqref{eq: ode-approx} with estimated drift under exact integration. Our first main result bounds its Wasserstein distance to the target distribution in terms of the drift estimation error:

\begin{theorem}
\label{thm: pdata bound}
    Let $\rho_1$ be absolutely continuous with respect to the Lebesgue measure in $\bbR^d$. Define $\varepsilon^2(t) := \Ee{X_t\sim \rho_t}{ \norm{v_t(X_t) - \widehat{v}_t(X_t)}^2}$ for $t \in [0,1]$, and $\tilde \rho_1 := \Law(\tilde Y_1)$ for $\tilde Y_1$ obtained through the flow in~\eqref{eq: ode-approx}. Then, under Assumption \ref{assmp: main assumption}\ref{assmp: lipschitz cond}, we have that
    \[
    W_2^2(\tilde \rho_1, \rho_1) \le e^{1 + 2\widehat{L}} \int_{0}^{1} \varepsilon^2(t)\; dt \quad \text{almost surely.}
    \]
\end{theorem}
The bound presented in Theorem~\ref{thm: pdata bound} closely resembles those established in prior works \citep{kwon2022scorebasedgenerativemodelingsecretly, pedrotti2024improvedconvergencescorebaseddiffusion, boffi2024flow}, as it primarily depends on the  estimation error \(\varepsilon(t)\) for all \(t \in [0,1]\). Specifically, if there exists an \(\varepsilon > 0\) such that \(\sup_{t \in [0,1]} \varepsilon^2(t) \le \varepsilon^2\), then the squared 2-Wasserstein distance between the two distributions is of order \(\mathcal{O}(\varepsilon^2)\). 
The full proof is deferred to Appendix~\ref{sec: proof W2 ode rate}.

\begin{remark}
\label{remark: mollification}
    The absolute continuity requirement in Theorem \ref{thm: pdata bound} can be relaxed. If the density of $\rho_1$ does not exist, then one can convolve $X_1$ with an independent noise $W_\eta \sim N(0, \eta I_d)$ for a very small $\eta >0$, and consider the mollified distribution $\rho_1^\eta := \Law(X + W_\eta)$ as the target distribution. Note that $ \rho_1^\eta$ is absolutely continuous and satisfies $W_2^2(\rho_1^\eta , \rho_1) \le \eta^2 d$.
    Therefore, under the condition of Theorem \ref{thm: pdata bound}, and using triangle inequality we have 
    $
        W_2^2(\tilde \rho_1,  \rho_1) \lesssim \eta^2d +  e^{1 + 2\widehat{L}} \int_{0}^{1} \varepsilon^2(t)\; dt. 
    $
\end{remark}

\subsection{Convergence of the discretized flow}
In this section we show that the more accurate perception of the straightness of a flow captured by our AS and PWS parameters is crucial for analyzing discretization error. We present our main result below:

\begin{theorem}
    \label{thm: W2 ode disc}
    Let Assumption~\ref{assmp: main assumption} hold, and suppose the continuous-time flow \(\mathcal{Z} := \{Z_t\}_{t \in [0,1]}\) is defined by the ODE~\eqref{eq: ode-true} with a differentiable drift field \(v : \mathbb{R}^d \times [0,1] \to \mathbb{R}^d\). Then the sampling distribution $\widehat{\rho}_1$ obtained through the discretized ODE \eqref{eq: emp-ode-disc} satisfies the following almost sure inequality:
    \[
    W_2^2(\widehat \rho_1, \rho_1) \le \frac{27  e^{4 \widehat{L}}}{\max\{\widehat{L}^2,1\} } \left(\frac{\gamma_{2,T}(\cZ)}{T^2} + \sqevel \right),
    \]
\end{theorem}
The term involving the PWS parameters could be referred to as an error term due to discretization. More importantly, the above Wasserstein error bound shows that $T = \Omega\left(  \sqrt{\gamma_{2,T}(\cZ)/\epsilon}\right)$ is sufficient to achieve a discretization error of the order $O(\epsilon)$. Therefore, Theorem \ref{thm: W2 ode disc} indicates that if the flow is near-straight  (i.e., $\gamma_{2,T}(\cZ) \approx 0$), then accurate estimation of the data distribution can be achieved with a very few discretization steps. This finding indeed aligns with the prior empirical findings \citep{liu2023flow, lee2024improvingtrainingrectifiedflows, liu2023instaflow} related to the rectified flow. 
It is also consistent with the empirical behavior of Perflow \citep{yan2024perflow}, a methodology that has achieved improved performance by further straightening the rectified flow in each interval $[t_{i-1}, t_i]$ for all $i \in [T]$. The proof of the theorem can be found in Appendix \ref{sec: proof W2 ode disc}. 
Finally, the elementary inequality $\gamma_{2,T}(\cZ) \le T \gamma_{1}(\cZ)$ (see Appendix \ref{sec: proof of lemma straightness comparison}) also yields the following corollary.
\begin{corollary}
\label{cor: W2 error bound}
    Under the same conditions of Theorem \ref{thm: W2 ode disc}, we have the following almost sure inequality:
    \[
    W_2^2(\widehat{\rho}_1, \rho_1) \le \frac{27 e^{4 \widehat{L}}}{\max\{\widehat{L}^2,1\}} \left(\frac{\gamma_{1}(\cZ)}{T} + \sqevel \right).
    \]
\end{corollary}

\subsection{Convergence rates for Rectified Flow}
In this section, we focus our analysis on RF for an independent coupling between noise samples $X_0 \sim \rho_0 = N(0, I_d)$ and the target samples $X_1 \sim \rho_1$. We first obtain an expression for the acceleration $\dot v_t(x)$ and then obtains bounds on the Wasserstein error through $\gamma_{2, T}$ under certain assumptions.

By definition, we have $X_t = tX_1 + (1-t)X_0$ and $\rho_t = \Law(X_t).$ Using Tweedie's formula, we obtain:
\begin{equation}\label{eq: drift-score}
    v_t(x) := \E{X_1 - X_0 \mid X_t = x} =  \frac{x}{t} + \br{\frac{1-t}{t}} s_t(x),
\end{equation}
where $s_t(x)$ is the score, i.e. the gradient of perturbed log density w.r.t $x$ for $t\in(0, 1]$ and $v_0(x) = \mathbb{E}[X_1]-x$; see Lemma \ref{lemma: drift-score}.
\begin{figure}
    \centering
    \subfloat[\centering ]{{\includegraphics[width=0.23\textwidth]{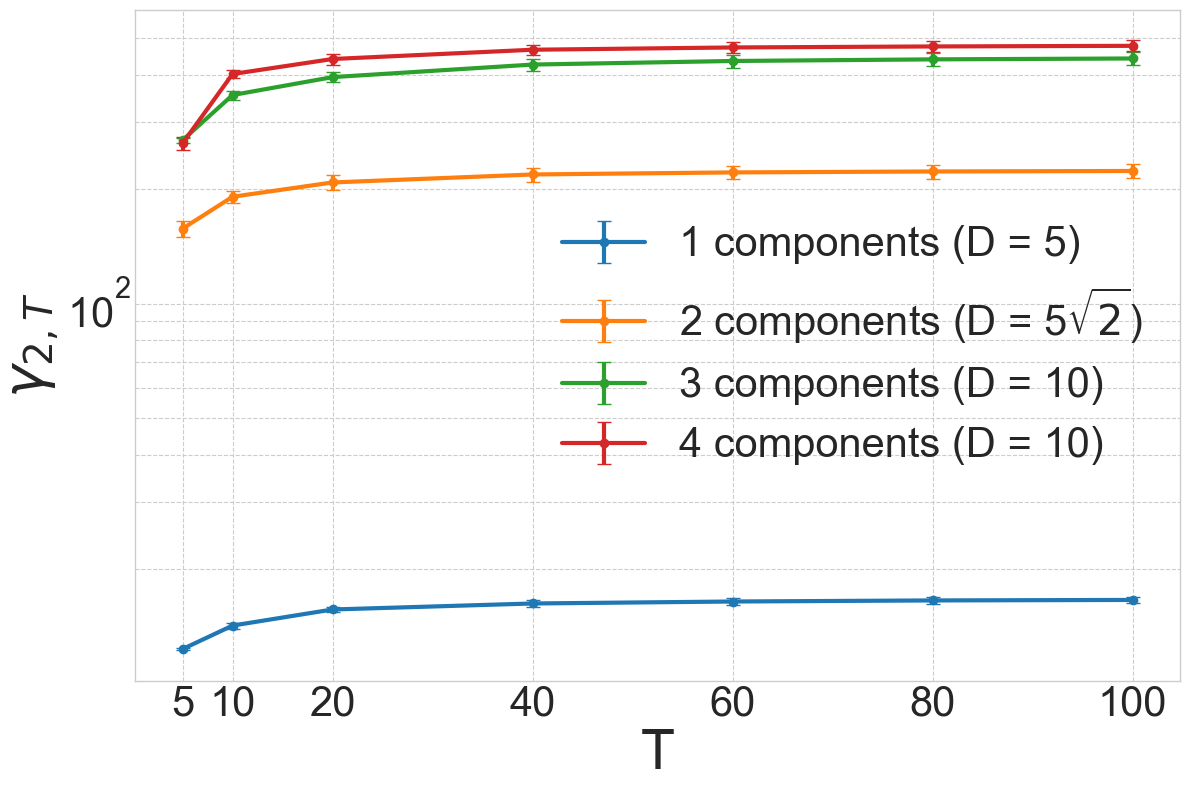} }}%
    \hfill
    \subfloat[\centering  ]{{\includegraphics[width=0.23\textwidth]{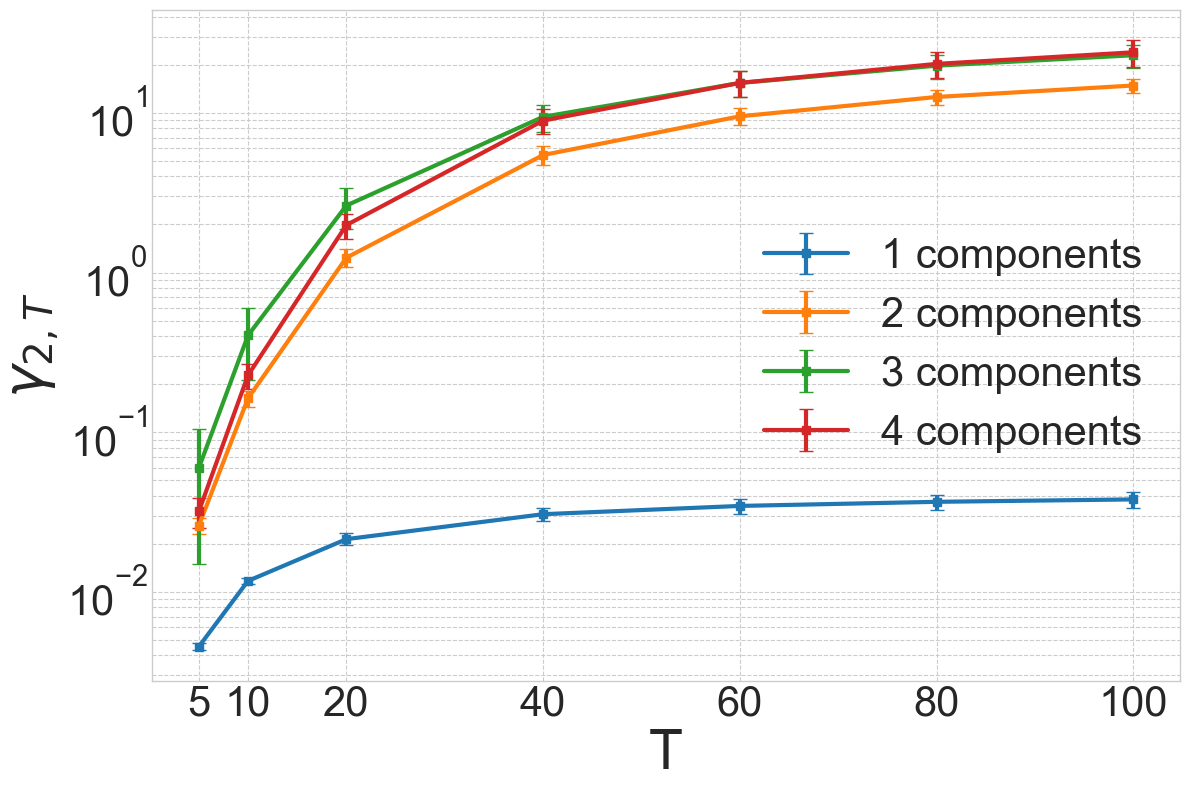} }}%
    \hfill
    \caption{For mixtures-of-Gaussians target distributions with varying components $K$ and maximum mean separation $D$ (see details in Appendix \ref{sec: exp mixture of Gaussian}), the figure shows (a) $\hat \gamma_{2, T}(\cZ)$ vs $T$ for the 1-RF flow (b) $\hat \gamma_{2, T}(\cZ)$ vs $T$ for the 2-RF flow}
    \label{fig: gmm-gamma}
\end{figure}
\begin{lemma} \label{lem: acceleration-rf}
    Let $(X_0, X_1) \sim \rho_0\otimes \rho_1$ and $v$ be the RF field defined in \eqref{eq: drift-score}. Then we have that:
    \begin{equation}
    \dot{v}_{t}(x)=-\frac{s_t(x)}{t^{2}}-\frac{(1-t)^{2}}{t^{2}}\left(H_{t}(x)s_{t}(x)+\nabla_{x}\tr(H_{t}(x))\right)
\end{equation}
where $H_t(x) := \nabla_xs_t(x)$ is the gradient of the score function $s_t(x)$.
\end{lemma}
The result of Lemma \ref{lem: acceleration-rf} is critical, as it shows that the acceleration $\dot v_t$-- and therefore the parameters $\gamma_{2, T}$ and $\gamma_1$-- is determined by the score $s_t$, Hessian $H_t$ and the gradient of the Hessian's trace. While $s_t$ is known to be a sub-Gaussian for $0\leq t<1$ \cite[Lemma F.2]{gupta2024improvedsamplecomplexitybounds}, bounding the other two higher order terms requires the knowledge of the geometry of the target distribution $\rho_1$. As an example, the following lemma considers the expected squared acceleration norm for a Gaussian-mixture target with bounded mean separation.

\begin{lemma}
\label{lemma: bound on gamma_2}
    Let $(X_0, X_1) \sim N(0, I_d) \otimes \rho_1$ where $\rho_1 = \sum_{i=1}^{K} \pi_k N(\mu_k, \sigma^2 I_d)$, such that $\mu_i \in \bbR^d$ and $D^2 :=\max_{i, j} \norm{\mu_i - \mu_j}_2^2$. For $\cZ={\Rectflow}(X_0, X_1)$ obtained by integrating \eqref{eq: ode-true} with drift field $v$ defined in \eqref{eq: drift-score}, we have
    \begin{equation*}
        \gamma_{2, T}(\cZ) = \cO\br{D^2 d^2 + D^4 d}
    \end{equation*}
\end{lemma}
The lemma reveals a surprising result: for a fixed maximum-mean separation $D$, the straightness metric $\gamma_{2,T}$ is largely independent of the number of mixture components $K$. This theoretical finding is supported by our empirical results in Figure~\ref{fig: gmm-gamma}(a) which demonstrates that the value of $\hat{\gamma}_{2, T}$ estimated as in \ref{eq: gamma_2_estimate} is a function of $D$, increasing as $D$ grows, while showing little variation with changing $K$ when $D$ is held constant.
Moreover, it also immediately yields an $\cO(d^2/T^2)$ bound on the $W_2^2$-error for a given $D$; see Lemma \ref{lemma: bound on L for GMM}, suggesting that the discretization steps should scale linearly with $d$. We defer the proof to Appendix \ref{sec: bound on v_dot_t for GMM}. 

Another attractive property of RF is its \emph{reflow} procedure, where one iteratively rectifies the coupling generated by the preceding flow. \cite{liu2023flow} show that their straightness parameter $S(\cZ)$ decreases with successive applications of the reflow procedure. In the same vein, Figure \ref{fig: gmm-gamma}(b) shows that even our straightness parameter $\gamma_{2, T}$ decreases with successive rectifications. 

Empirical studies argue that not only a single reflow (2-RF) is sufficient to generate a near-straight flow, but too many reflows also hurt the performance due to model collapse caused by error accumulation \citep{lee2024improvingtrainingrectifiedflows}. However, most prior arguments are based on heuristics and a theoretical framework to argue the straightness of 2-RF remains elusive.

\section{{(When) does 2-RF yield a straight flow?}} \label{sec: when is 2-RF straight}
In this section, we pose the following question, which has mostly been addressed empirically for general target distributions previously. 

\textit{When does 2-RF produce a provably straight flow, or equivalently, 1-RF yield a straight coupling?}

\cite{liu2023flow} prove that for one-dimensional target distributions, 2-RF does yield a straight flow since the 1-RF coupling is deterministic and monotonic, and therefore straight (Appendix~\ref{sec: RF in 1-D}). We first illustrate that for some target distributions, such as Gaussians and two-Gaussians mixtures, 1-RF indeed yields a straight coupling even in $d~(\geq 2)$ dimensions. This provides the first concrete theoretical evidence for straightness of 2-RF beyond one-dimensional targets. Motivated by these examples, we develop a general framework for proving straightness and Monge optimality in Section~\ref{sec:sufficient} and \ref{subsec:monge} respectively.

\subsection{Concrete illustrative examples}
We start present a few examples of $d$-dimensional target distributions with $d\geq 2$, where 1-RF yeilds a straight coupling using the true RF drift field in~\eqref{eq: ode-true}.

 
\begin{example}[\textbf{Gaussian to Gaussian}]\label{ex:gtog}
Let $\rho_0=N(0,I_d)$ and $\rho_1=N(\mu,\Sigma)$, where $\mu \in \bbR^d$ and $\Sigma \in \bbR^{d\times d}$ is a symmetric positive-definite matrix.
\end{example}



\begin{theorem}\label{thm:ItoSigma}
    Let $(X_0, X_1) \sim \rho_0 \times \rho_1$ be an independent coupling where $\rho_0$ and $\rho_1$ are specified in Example~\ref{ex:gtog}. 
    The coupling $(Z_0, Z_1) = {\Rectify}{(X_0, X_1)}$ is straight and is given by
 $
        Z_1 = \Sigma^{1/2}Z_0 + \mu.
  $
\end{theorem}
\textbf{Proof sketch.} In this simple case, one can check that the RF velocity is $v_t(Z_t) = \frac{x}{t} + \frac{1-t}{t} \Sigma_t^{-1}(t \mu - x)$, where $\Sigma_t = t^2 \Sigma + (1-t)^2 I_d$. Therefore, ODE \eqref{eq: ode-true} can be solved exactly, yielding the solution $Z_1 = \Sigma^{1/2}Z_0  + \mu$. Consequently, this is also the Monge coupling between $N(0, I_d)$ and $N(\mu, \Sigma)$. Additionally, it is worth mentioning that the optimality of the 1-RF coupling is also a byproduct of a certain commutativity property of $\nabla_{Z_t} v_t(Z_t)$ (see Theorem \ref{prop: 1-rf Monge}). The details of the proof is deferred to Appendix \ref{sec: proof gaussian RF}.

The rest of the section considers target distributions that are multimodal.

\begin{figure}
    \centering
    {\includegraphics[width=0.35\textwidth]{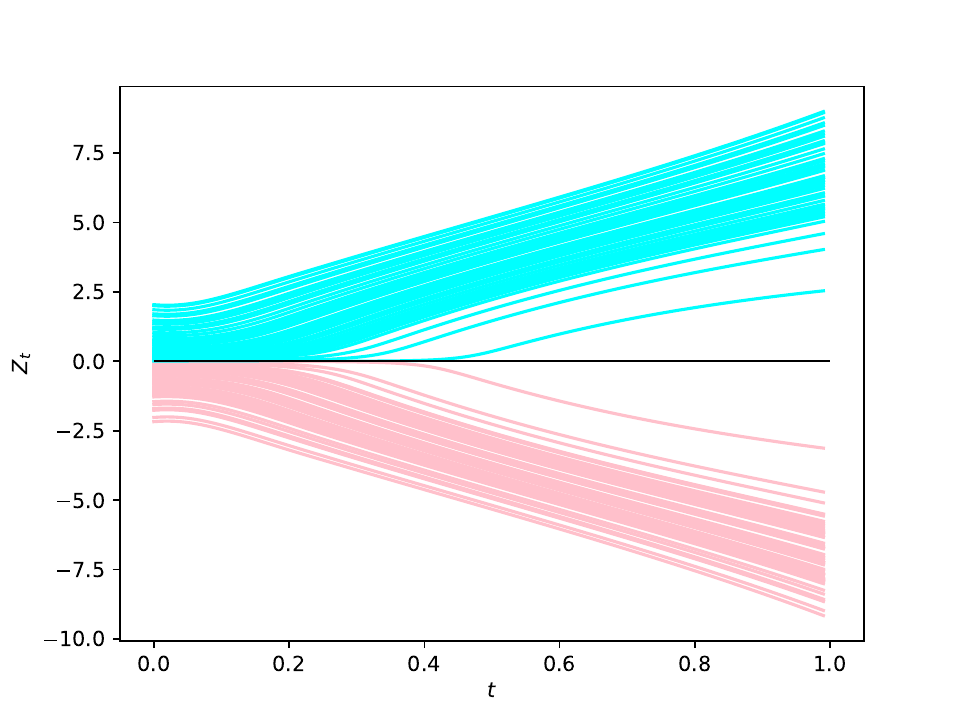} }%
    \caption{
     shows the flow of points from a standard Gaussian $\rho_0 = N(0, 1)$ to a symmetric mixture of two Gaussians $\rho_1=.5N(10,1)+.5N(-10,1)$, where the black line represents $y=0$. 
    }
    \label{fig: quantile}%
\end{figure}
\begin{example}[\textbf{Gaussian to 2-mixture of Gaussian.}]
\label{ex:1to2}
Here, $\rho_0$ is $N(0,I_d)$ and $\rho_1$ is a mixture of two Gaussians with the isotropic covariance matrix, i.e., $\rho_1 := \pi N(\mu_1, \sigma^2I_d) + (1-\pi) N(\mu_2, \sigma^2I_d)$ with $\mu_1, \mu_2 \in \bbR^d$ and $\pi \in [0,1]$.
\end{example}
\begin{theorem}\label{thm:2gauss_rd}
Let $(X_0, X_1) \sim \rho_0 \otimes \rho_1$ be an independent coupling where $\rho_0$ and $\rho_1$ are specified in Example~\ref{ex:1to2}. 
Then 1-RF yields a straight coupling.
\end{theorem}

\textbf{Intuitive proof:} First, note that $(Z_0, Z_1):= {\Rectify}(X_0, X_1)$ is an deterministic coupling. Therefore, the global invertibility of the map $H_t: Z_0 \mapsto (1-t)Z_0 + tZ_1$ is enough to ensure the straightness of $(Z_0, Z_1)$.
Now, consider a simple case of Example~\ref{ex:1to2} with $\mu_1=\mu=-\mu_2$. 
It turns out that in this case, the flow induced by $v_t$ has an interesting geometric structure (see, for example, Figure~\ref{fig: quantile}). In particular, if $z_0$ is positive (negative), then $z_t$ is also positive (negative) \textit{for all $t$}. This holds coordinate-wise. Here $z_0$ and $z_t$ are obtained from ODE \eqref{eq: ode-true}. This follows from a very fundamental fact that flow ODE decouples into $d$ one-dimensional ODEs after an orthonormal transformation.  Since, in the transformed space, the linear interpolation paths of a straight coupling $(Z_{0, i}, Z_{1, i})$ can not intersect for scalar random variables, we must have that it is monotonically increasing,
i.e., for $(z_{0, i}, z_{1, i})$ and $(z_{0, i}', z_{1, i}')$ such that $z_{0, i} < z_{0, i}'$, we must have that $z_{1, i} < z_{1, i}'$ for each co-ordinate $i \in [d]$. This ensures that the RF transport map is co-ordinate wise increasing. Therefore, $H_t(\cdot)$ is invertible and 1-Rf yields a straight coupling.
Furthermore, this also allows to conclude that the flow preserves the \textit{quantiles} co-ordinate wise; see Lemma~\ref{lem:quantile_invariance}.

Finally, we come to the Gaussian mixture to Gaussian mixture setting. 
\begin{example}[\textbf{2-mixture of Gaussians to 2-mixture of Gaussians}]\label{ex:2to2}
 Consider $\mu_{01} = (0,a)^\top, \mu_{02} = (0,-a)^\top$ and $\mu_{11} = (a,a)^\top, \mu_{12} = (a,-a)^\top$ for some $a>0$.
Let $X_0 \sim 0.5 N(\mu_{01}, I_2) + 0.5 N(\mu_{02}, I_2) \text{ and }  X_1 \sim 0.5 N(\mu_{11}, I_2) + 0.5 N(\mu_{12},I_2)$.   
\end{example}

\begin{theorem}\label{thm:mixtomix}
Let $(X_0, X_1) \sim \rho_0 \otimes \rho_1$ be an independent coupling where $\rho_0$ and $\rho_1$ are specified in Example~\ref{ex:2to2}.
Then 1-RF gives a straight coupling.
\end{theorem}
The intuitive explanation is that even in this case, the flows along each coordinate decouples and leads to a straight coupling. We defer the proofs of Theorems \ref{thm:ItoSigma}, \ref{thm:2gauss_rd} and \ref{thm:mixtomix} to Appendix~\ref{sec: proof of main straightness results}. The above proof techniques, while intuitive, do not generalize to examples like a mixture of three or more Gaussians. They also do not provide a way to establish whether the 1-RF coupling gives a Monge map. Now we present an overarching theoretical framework for proving both.
\subsection{Sufficient condition for straightness}
\label{sec:sufficient}
In this section we provide a sufficient analytical condition that makes the 2-RF a straight flow.
To this end, let us consider the ODE \eqref{eq: ode-true} with a fixed initial condition, i.e., 
\begin{equation}
\label{eq: modified ode}
    dZ_t = v_t(Z_t)\; dt, \quad Z_0 = z_0,
\end{equation}
where $v_t$ is the solution of \eqref{eq: drift_obj}.
For clarity, we denote the solution of the above ODE as $Z_t(z_0)$.
It is well known that under a locally-Lipschitz field $v_t$, ODE \eqref{eq: modified ode} admits a unique solution if it's non-explosive (see  Appendix~\ref{sec: general straightness}). These conditions are milder than those imposed in prior literature and are satisfied by a large class of target distributions such as Gaussian-mixtures. Therefore, in the remainder of this section, we always assume that \eqref{eq: modified ode} admits a unique solution. 

Under the above conditions, \citet[page 25-27]{coddington1956theory} show that the Jacobian $J_t^{z_0}:= \nabla Z_t(z)\mid_{z = z_0}$ obeys the following ODE:
 \begin{equation}
 \label{eq: jacobian ODE}
     \frac{dJ_t^{z_0}}{dt} = \nabla_{Z_t}v_t (Z_t(z_0)) J_t^{z_0}; \quad J_0^{z_0} = I_d. 
 \end{equation}
Since the initial coupling is assumed to be independent, Equation \eqref{eq: drift-score} yields that $\nabla_z v_t(z) = (1/t) I_d + t^{-1}(1-t)\nabla_x^2 \log \rho_t(z) $, which is a symmetric matrix for all $ t>0$. Next, we state the sufficient condition on the Jacobian $J_1^{z_0}$ which makes 2-RF straight.

\begin{assumption}
\label{assumption: J1 psd}
    The minimum eigenvalue of $J_1^{z_0} + J_1^{z_0\top}$ is non-negative for all $z_0 \in \bbR^d$.
\end{assumption}

Although, this assumption can be partially checked by simulating ODE \eqref{eq: jacobian ODE} for a large set of initial values (see Appendix \ref{appendix: verifying-assumption empirically}),  
verifying it in practice is challenging, and it may not hold for general target distributions (see Figure \ref{fig: non-Monge example}(b)).  However, in the later sections we theoretically show that Assumption \ref{assumption: J1 psd} is satisfied for some motivating examples.  
We now state our main straightness result below:

\begin{theorem}[1-RF is straight]
    \label{thm: straightness}
    Under Assumption \ref{assumption: J1 psd}, 1-RF yields a straight coupling.
\end{theorem}
As mentioned previous section, the main step of the proof relies on showing that the map $H_t(z_0) := (1-t)z_0 + t Z_1(z_0)$ is globally invertible almost surely in $t \in \Unif([0,1])$. \citet{plastock1974homeomorphisms} shows that a necessary and sufficient condition for a map $f \colon \bbR^d \rightarrow \mathbb{R}^d$ to be an homeomorphism is its local invertibility (namely, that the Jacobian of $f$ is everywhere non-zero) and properness, i.e. $\| f(x) \|_2 \to \infty$ whenever $\|x\|_2 \to \infty$. When specialized to our problem, by \eqref{eq: jacobian ODE} this amounts to showing that, for all $z_0 \in \mathbb{R}^d$, the determinant of $
(1-t) I_d + t J_1^{z_0}$
is non-zero, for almost all $t \in (0,1)$. While, in general, this condition is hard to verify, in Appendix \ref{appendix: 1-rf straight} we show that Assumption \ref{assumption: J1 psd} guarantees both local invertibility and properness of the map under consideration.
Thus, Theorem \ref{thm: straightness} provides an explicit condition for the straightness of the 1-RF coupling or, equivalently,  the 2-RF flow.


While Assumption~\ref{assumption: J1 psd} seems stringent, we would show in Section~\ref{subsec:monge} the surprising fact that both Examples~\ref{ex:gtog} and ~\ref{ex:1to2} satisfy it. We further consider a more general family of target distributions, a mixture of more than two Gaussians, all with the same covariance matrix. We show that as long as the maximum pairwise mean separation is suitably small, 1-RF produces a straight coupling.  
\begin{theorem}
    \label{prop: K-mixture rf}
    Let $(X_0, X_1) \sim N(0, I_d) \otimes \rho_1$ where $\rho_1 := \sum_{j=1}^K \pi_j N(\mu_j,  \sigma^2 I_d)$ with mixture proportions $\{\pi_j\}_{j\in [K]}\in (0,1)^K$. If $\max_{i \ne j} \norm{\mu_i - \mu_j}_2^2 \le 4 \sigma^2$, then 1-RF yields a straight coupling.
\end{theorem}
We note that, for our framework to guarantee straightness, the bound on the maximum pairwise distance is important. See Figure \ref{fig: non-Monge example}(b) where a large pairwise mean separation leads to a violation of Assumption \ref{assumption: J1 psd}. The detailed proof is present in Appendix \ref{appendix: proof of K-mixture rf}. A similar straightness result is also true when $\rho_1 = \sum_{j=1}^K \pi_j N(\mu_j, \Sigma)$ for some positive definite matrix $\Sigma$. The details of the result and its proof is deferred to Appendix \ref{sec: proof of general version of K-mixture}.

We note that Theorem \ref{prop: K-mixture rf} assumes the ideal RF coupling, i.e., one generated by exactly integrating the true RF field. This is analytically unavailable even though the field is exactly known (see Lemma \ref{lemma: GMM general covariance v}.) However, as we show in Lemma~\ref{lemma: bound on gamma_2}, even the discretizated ODE leads to small errors, resulting in a near straight 1-RF coupling, i.e.,  2-RF flow is straight.
\begin{figure}%
    \centering
    \subfloat[\centering]{
    {\includegraphics[width=0.22\textwidth]{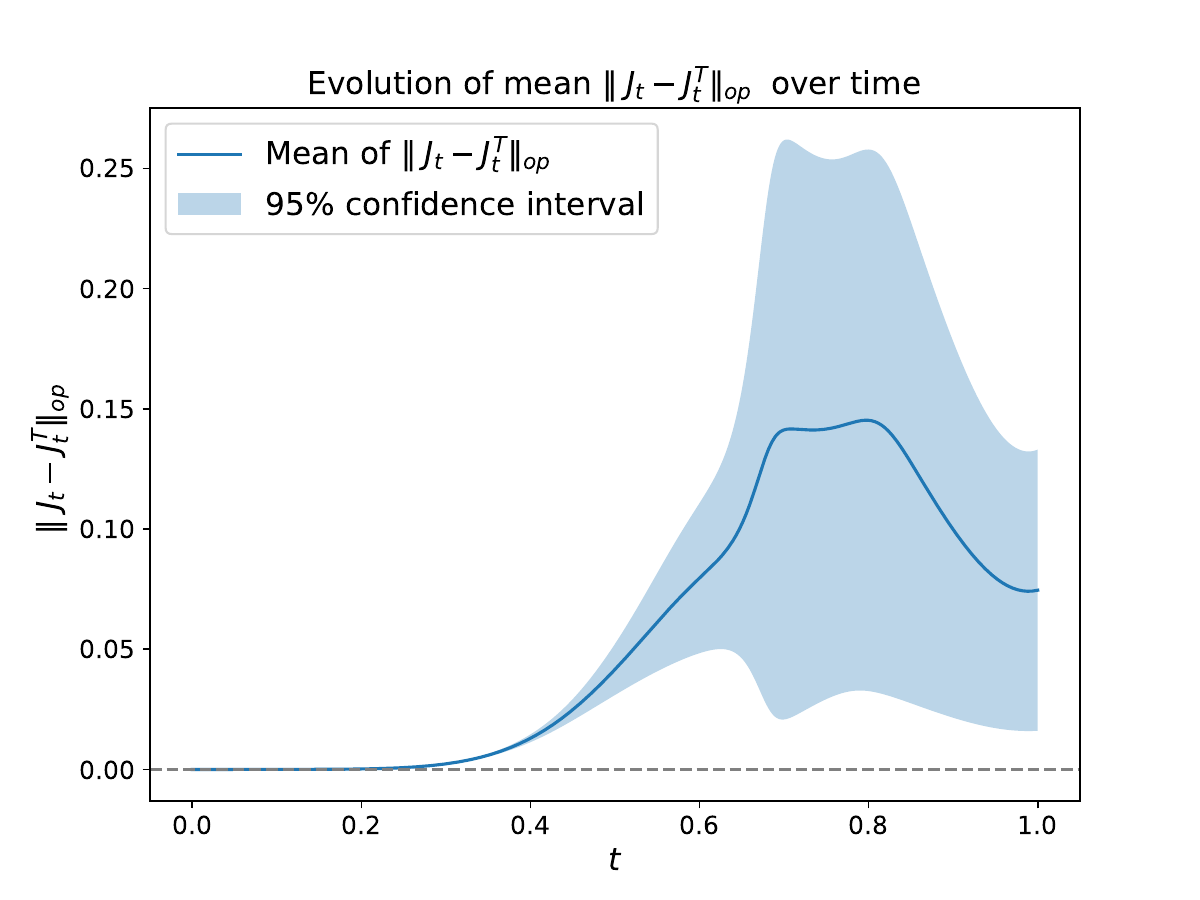} }}
    \subfloat[\centering]{
    {\includegraphics[width=0.22\textwidth]{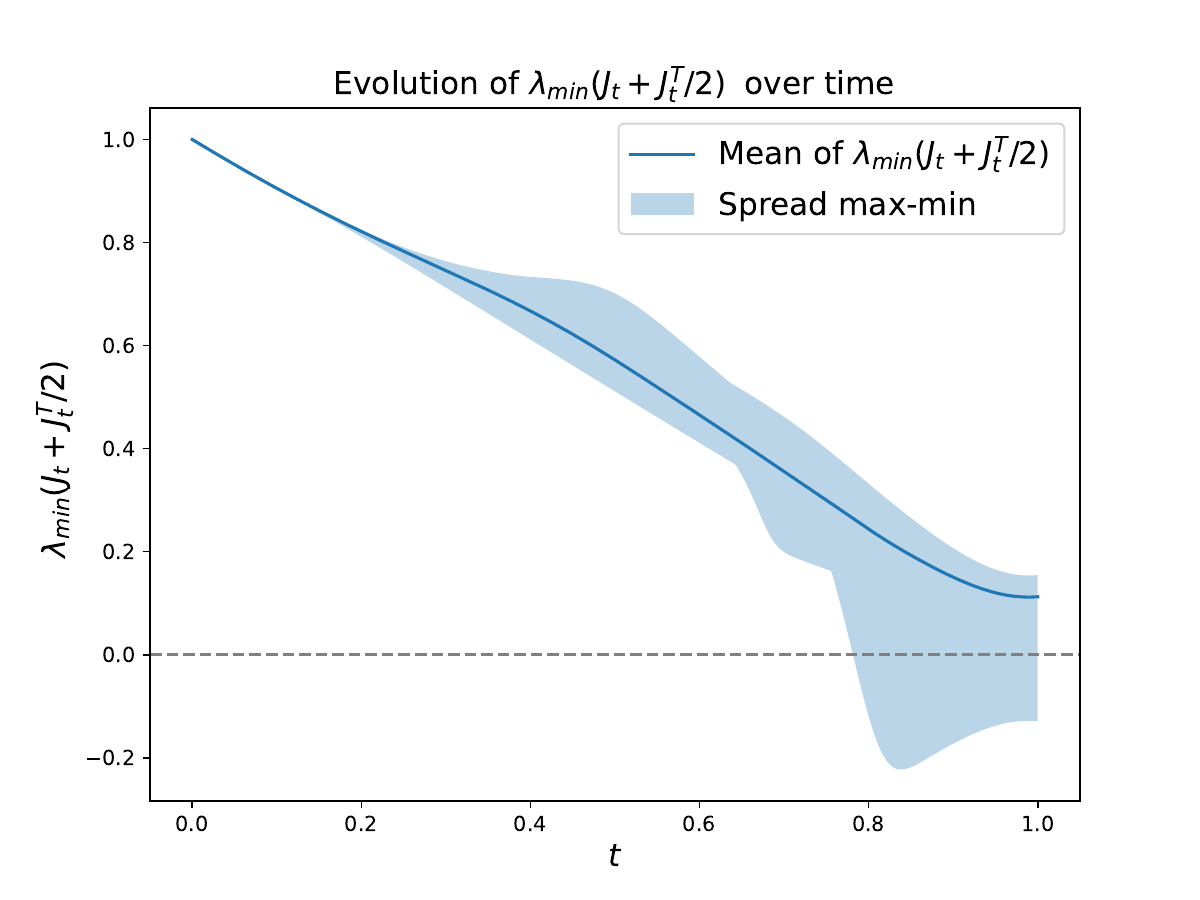} }}
    \caption{The above plots show the evolution of mean $\Norm{J_t^{z_0} - J_t^{z_0 \top}}_{op}$ and $0.5*\lambda_{min}(J_t^{z_0} + J_t^{z_0 \top})$ over 100 different random initial points $z_0\sim N(0, I_2)$ for $\rho_1 = \sum_{k = 1}^4 N(\mu_k, \sigma^2 I_2)$ with weights $\pi_k = k/10$ for $k \in \{1,2,3,4\}$, and $\mu_1 = (0,-2), \mu_2 = (-1,-2), \mu_3 = (1, 3) , \mu_4 = (2,0)$ and $\sigma = 0.1$. 
    }
    \label{fig: non-Monge example}%
\end{figure}
\subsection{Connection to Monge map}
\label{subsec:monge}
It is important to clarify the distinction between straight and optimal couplings. While an OT (Monge) map for convex costs is always straight, the converse is not true; any straight coupling is not necessarily the optimal solution to the Monge problem. In this section, we show that 1-RF does lead to the Monge map between $\rho_0$ and $\rho_1$ under an additional sufficient condition on the velocity drift $v_t$. We emphasize that 1-RF may not yield a Monge map in general, and one needs to apply $c$-Rectified Flow \citep{liu2022rectified} to obtain an approximate solution to Monge problem.

\begin{theorem}[1-RF yields Monge map]
\label{prop: 1-rf Monge}
    Assume that $[\nabla_{Z_t}v_t(Z_t(z_0)) ,\nabla_{Z_s}v_s(Z_s(z_0))] = 0$ for all $t \ne s$, and initial points $z_0$. Then 1-RF yields the optimal transport plan for the OT problem \eqref{eq: monge-problem}, and its Jacobian satisfies
    \[
    J_1^{z_0} = \exp\left(\int_0^1 \nabla_{Z_t}v_t(Z_t(z_0))\;dt\right).
    \]
\end{theorem}

The proof of the theorem is provided in Appendix~\ref{appendix: 1-rf monge}. Since \(\nabla_{Z_t} v_t(Z_t(z_0))\) is symmetric for all \(t \in [0,1]\), the matrix \(J_1^{z_0}\) is symmetric and positive-definite. Brenier's theorem~\citep{chewi2024statisticaloptimaltransport} suggests that for $z_0 \mapsto Z_1(z_0)$ to be the Monge map, $Z_1 = \nabla \varphi$ for some \textit{convex function} $\varphi: \bbR^d \to \bbR$, i.e.,  the Jacobian $J_1^{z_0} = \nabla^2 \varphi(z_0)$ has to be a \textit{symmetric and positive semi-definite} matrix. Thus, under commutativity condition above, \textit{1-RF yields the \(\ell_2\)-optimal (Monge) coupling, which is also straight}. However, we emphasize that the converse might not be true. 

In one dimension, 1-RF always recovers the Monge map, as the commutativity condition holds trivially; see Appendix~\ref{sec: RF in 1-D}. We show more examples below:

\begin{proposition}
\label{prop: monge map gaussian & 2-mix}
    Let $(X_0, X_1) \sim \rho_0 \otimes \rho_1$ be an independent coupling. When $\rho_0$ and $\rho_1$ are specified in Example~\ref{ex:gtog} or~\ref{ex:1to2}, 1-RF produces the optimal Monge map, and therefore a straight coupling. 
\end{proposition}
In these examples, one can easily verify that the conditions in Theorem 
 \ref{prop: 1-rf Monge} are satisfied. Therefore, it follows that 1-RF yields the straight and optimal coupling; see Appendix \ref{sec: proof of monge map gaussian & 2-mix}.

In contrast, for a general mixture of Gaussians, the gradient of the velocity may not commute. So, even though under a certain condition on the means, the resulting 1-RF yields a straight coupling (Theorem~\ref{prop: K-mixture rf}), it may not be the Monge map (also see Figure~\ref{fig: non-Monge example}).

\section{{Experiments}}
\label{sec: experiments}
In this section, we primarily explore the effect of the number of discretization steps $T$ and the straightness parameter $\gamma_{2,T}(\cZ)$ on the $W_2$ distance between the target distribution and the sampling distribution of the first and second RFs. We present numerical experiments for some synthetic and real datasets. Additional experiments can be found in Appendix \ref{sec: app experiments}.  We provide the code at \href{https://github.com/bansal-vansh/rectified-flow-straightness}{\texttt{{https://github.com/bansal-\\vansh/rectified-flow-straightness}}}.


\begin{figure}
    \centering
    \subfloat[\centering]{{\includegraphics[width=0.23\textwidth]{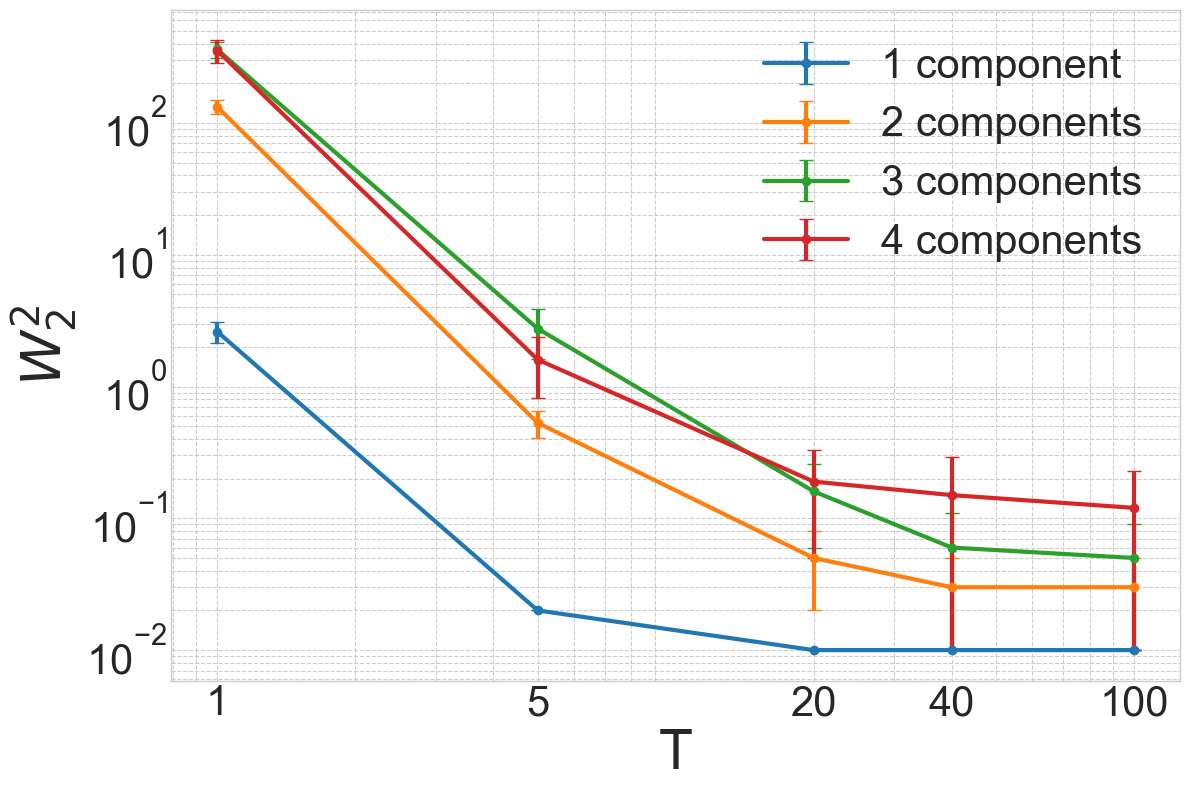} }}
    \hfill
    \subfloat[\centering]{{\includegraphics[width=0.23\textwidth]{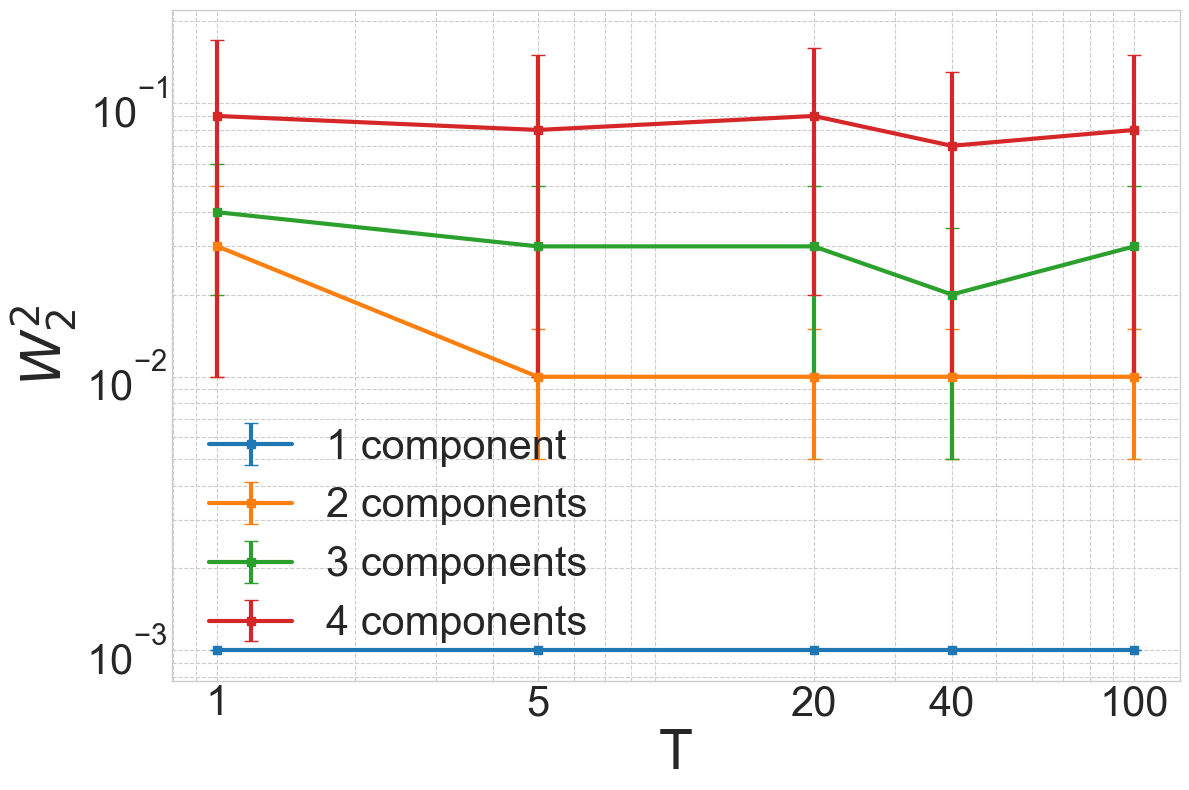} }}
    \caption{For the same mixture-of-Gaussians target distributions and estimated drift model in Figure~\ref{fig: gmm-gamma}, we show (a) $W_2^2 (\widehat{\rho}_1, \rho_1)$ vs $T$ for the first RF (b) $W_2^2 (\widehat{\rho}_1, \rho_1)$ vs $T$ for the second RF.}
    \label{fig: gmm}
\end{figure}

\paragraph{Synthetic data:} We first start with the independent 
coupling $(X_0, X_1) \sim N(0, I_d) \otimes \rho_1$, where $\rho_1$ is a mixture of Gaussians with $K\in \{1, 2, 3, 4\}$ components and varying maximum mean separation $D$ (see details in Appendix \ref{sec: exp mixture of Gaussian}). For all four cases, we estimate the RF drift using a feed-forward neural network and generate the 1-RF samples using the ODE \ref{eq: emp-ode-disc} with $T$ discretization steps. Figure \ref{fig: gmm}(a)-(b) shows that $W_2(\widehat{\rho}_1, \rho_1)$ steeply decreases with increasing number of steps $T$, 
validating our Theorem \ref{thm: W2 ode disc}. Moreover, we note that $W_2$ distance is consistently larger for the flow corresponding to a larger component mean separation $D$, owing to a larger value of the straightness parameter $\gamma_{2,T}(\cZ)$ as shown in Figure \ref{fig: gmm-gamma}(a)-(b). Lastly, in accordance with the $\gamma_{2, T}$ plots in Figure~\ref{fig: gmm-gamma}(b), Figure \ref{fig: gmm}(b), further empirically validates that the 2-RF for Gaussian mixtures produces a near-straight flow, since the Wasserstein error even with a single discretization step is close to 0.
\begin{figure}
    \centering
    \subfloat[\centering ]{{\includegraphics[width=0.23\textwidth]{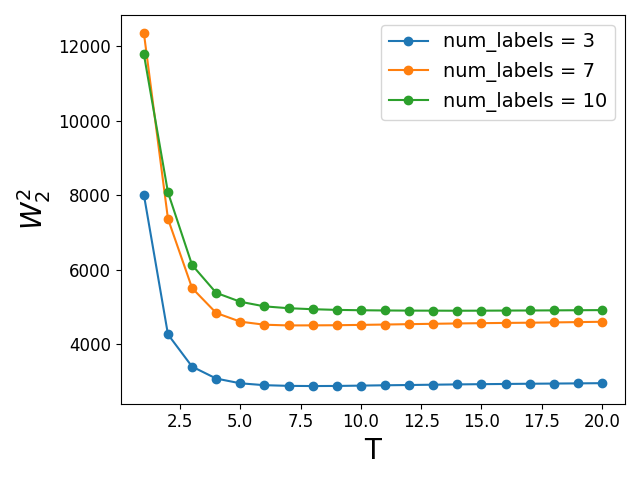} }}%
    \hfill
    \subfloat[\centering  ]{{\includegraphics[width=0.23\textwidth]{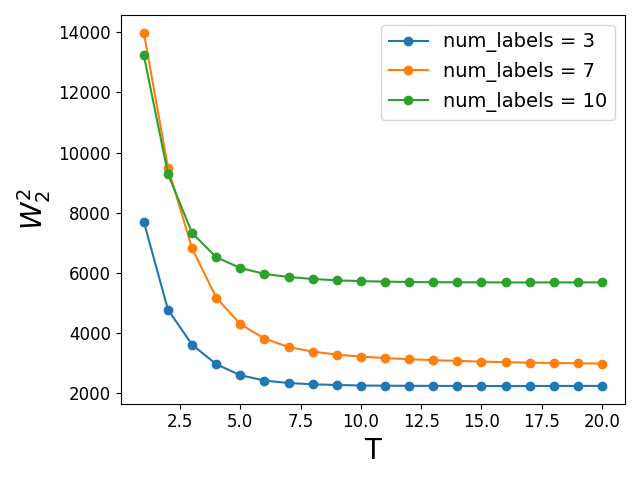} }}%
    \hfill
    \caption{The figure shows $W_2^2 (\widehat{\rho}_1, \rho_1)$ vs $T$ for the first RF for (a) MNIST dataset (b) FashionMNIST dataset with varying number of labels.}
    \label{fig: real}
\end{figure}
\paragraph{Real data:}
For the real data experiments, we consider the MNIST and FashionMNIST datasets. In both examples, we train a UNet architecture-based network on training data to estimate the drift function and then evaluate the Wasserstein distance of the generated samples from the test split of the dataset.
To emulate the behavior of having different number of modes, we consider three subsets of both the datasets consisting of the first 3 labels, the first 7 labels, and all 10 labels. We observe in Figure \ref{fig: real} that similar to the Gaussian mixture example,  the presence of a higher number of components, possibly indicating towards increasing mode separation, increases the Wasserstein error, again indicating that the flow becomes less straight with the increasing number of modes.

 \section{{Conclusion}}
This paper establishes the first formal link between flow geometry and sampling efficiency by introducing the piecewise straightness parameter, $\gamma_{2,T}$. We derive a novel Wasserstein convergence bound showing that the $W_2^2$-error scales with $\cO(\gamma_{2,T}/T^2)$, providing a rigorous foundation for why straighter flows enable high-fidelity, few-step generation.

Building on this, we propose the first theoretical framework to analyze straightness in Rectified Flow (RF). While the sufficient conditions we identify for achieving a perfectly straight flow ($\gamma_{2,T}=0$) can be conservative in some cases, they eliminate the discretization error in our bound and allow for the first concrete proofs of perfect one-step generation in key multi-dimensional settings. Ultimately, our work provides a methodology to advance the study of flow straightness from an empirical heuristic to a provable principle.

\paragraph{Open problem:} Recall that Assumption \ref{assumption: J1 psd} is only sufficient to ensure global invertibility of the map $H_t(\cdot)$ almost surely for all $t \sim \Unif([0,1])$. Even in our simulations, it appears to not be necessary in some cases, thus suggesting that $H_t(\cdot)$ remains globally invertible (almost surely in $t$) under milder conditions. In particular, we conjecture that $H_t(\cdot)$ is globally invertible almost surely in $t \sim \Unif([0,1])$ as long as $\rho_0$ is the standard Gaussian and $\rho_1$ is a general mixture of Gaussians. Proving this conjecture remains a challenging open problem. 

\paragraph{Acknowledgments.} We thank Dr. Dheeraj Nagaraj at Google DeepMind India and Dr.
Qiang Liu at UT Austin for engaging in helpful discussions which helped improve the manuscript significantly. We also thank Dr. Adam Klivans and the Institute for Foundations of Machine Learning (IFML) at UT Austin for providing the compute resources. The work was also supported by the NSF grants CCF-2019844, CCF-2505865,  2217069, NIH Award RF1NS121913, and DMS grant 2109155.

\bibliography{ref}
\bibliographystyle{plainnat}
\appendix

\thispagestyle{empty}

\onecolumn
\aistatstitle{Appendix}


\section{{Experimental details}}
\label{sec: app experiments}

\subsection{Details of synthetic data experiments with mixture of Gaussian distributions}
\label{sec: exp mixture of Gaussian}
We choose the target distribution to be mixture of Gaussians with equal cluster probability and unit variance, where the number of components $K$ varies within $\{1,2,3,4\}$. In all the cases, we show that the actual Wasserstein error is closely characterized by $\gamma_{2, T}.$ 
For $K = 1$, we set mean of the target distribution to be $\mu_1 = (5, 0)^\top$; for $K = 2 $: $\mu_1 = (5, 0)^\top, \mu_2 = (0, 5)^\top$; for $K = 3$: $\mu_1 = (5, 0)^\top, \mu_2 =  (0, 5)^\top, \mu_3=  (-5, 0)^\top$; for $K = 4$:  $\mu_1 = (5, 0)^\top, \mu_2 =  (0, 5)^\top, \mu_3=  (-5, 0)^\top, \mu_4 = (0, -5)^\top$. 

\subsection{Checker board example}
\label{sec: app checker board}
We consider the checker-board distribution with $2$, $5$ and $8$ components. We use training datasets of size 10,000 to train a feed-forward neural network in order to learn the velocity drift function and evaluate $W_2^2(\widehat{\rho}_1, \rho_1)$ using POT \citep{feydy2019interpolating} for different levels of discretization $T$ over test data of size 5000. Figure \ref{fig: checker board}(d) also shows that larger component size has a negative effect on the Wasserstein distance, which stems from the fact that a larger number of components typically pushes the flow away from straightness.

\begin{figure}[h!]
\centering
\begin{tabular}{c c c c}
     \includegraphics[width=0.2\textwidth]{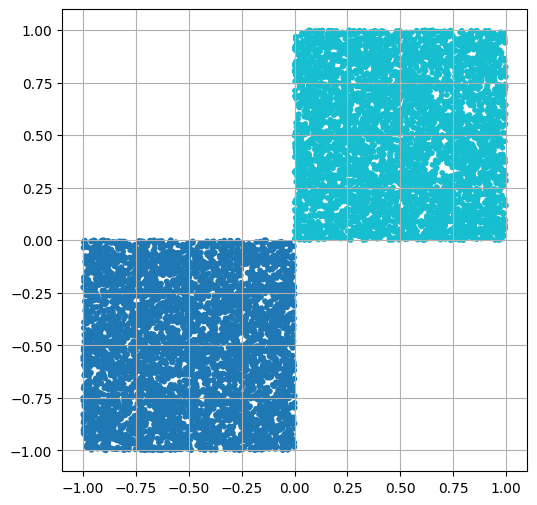}& 
     \includegraphics[width=0.2\textwidth]{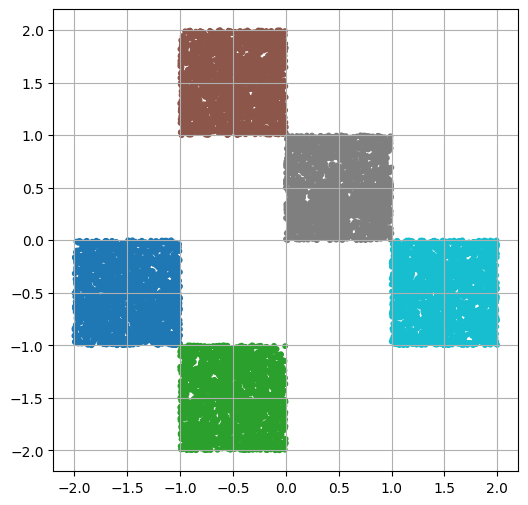}&
     \includegraphics[width=0.2\textwidth]{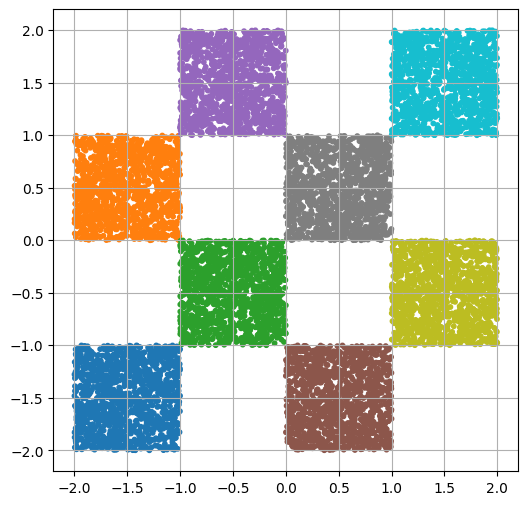}&
     \includegraphics[width=0.23\textwidth]{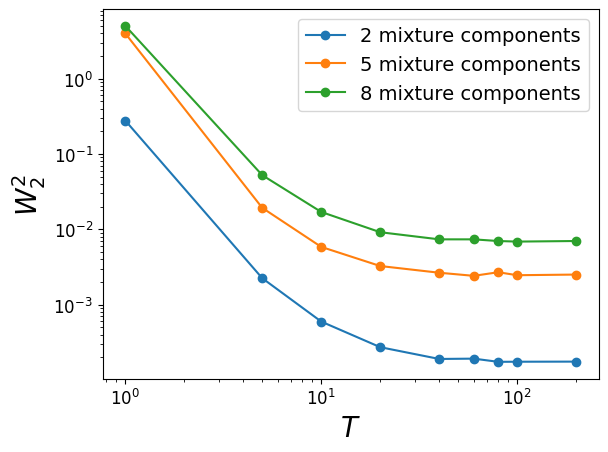}\\
     (a)&(b)&(c)&(d)
\end{tabular}
    \caption{(a) Checker-board distribution with 2 components. (b) Checker-board distribution with 5 components. (c) Checker-board distribution with 8 components. (d)  shows the $W_2^2 (\widehat{\rho}_1, \rho_1)$ vs $T$ (on log-log scale) for the Checker-board distribution with varying components.}
    \label{fig: checker board}
\end{figure}


\subsection{Additional Large-Scale Real Data Experiment on CelebA}
\label{app:celeba}

To further examine whether our theoretical findings extend beyond relatively small benchmark datasets such as MNIST and FashionMNIST, we conducted an additional experiment on the CelebA dataset, which contains roughly 200{,}000 images and is substantially larger and more diverse. This experiment was designed to probe the effect of increasing modal complexity on the discretization error of Rectified Flow, and to test whether the trends predicted by our theory continue to hold at a larger scale.

\paragraph{Experimental setup.}
We trained Rectified Flow models on three CelebA subsets with increasing diversity:
\begin{enumerate}
    \item \textbf{Blonde Female}: a relatively homogeneous subset with the fewest modes,
    \item \textbf{Blonde All}: a broader subset containing all blonde subjects,
    \item \textbf{All Classes}: the full dataset subset with the highest diversity and the largest effective number of modes.
\end{enumerate}
This ordering was chosen to approximately vary the underlying mode complexity from low to high while keeping the evaluation protocol fixed across all settings. For each trained model, we evaluated the squared 2-Wasserstein error between the generated distribution and the target data distribution under Euler discretization with varying numbers of sampling steps $T \in \{1,\dots,10\}$. The resulting values are reported in Figure~\ref{fig:celeba_w2}

\begin{figure}[t]
    \centering
    \includegraphics[width=0.72\linewidth]{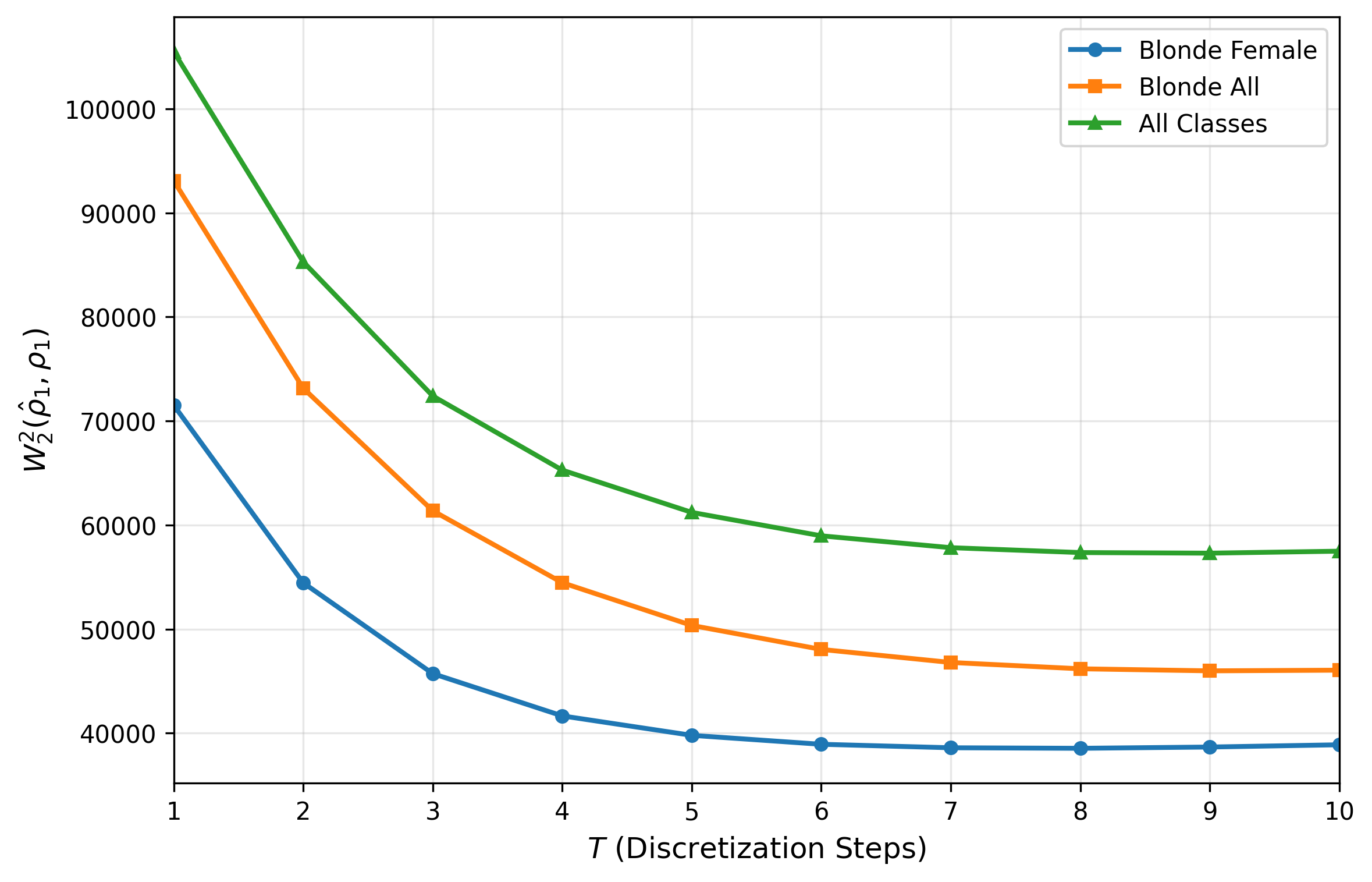}
    \caption{Squared Wasserstein error $W_2^2(\hat{\rho}_1,\rho_1)$ versus the number of discretization steps $T$ on three CelebA subsets with increasing modal complexity. The discretization error decreases rapidly with $T$ and stabilizes around $T\approx 8$. Moreover, for every fixed $T$, the error increases monotonically as the dataset becomes more multimodal, from \emph{Blonde Female} to \emph{Blonde All} to \emph{All Classes}.}
    \label{fig:celeba_w2}
\end{figure}

\paragraph{Observations.}
The CelebA experiment reveals two clear trends.

First, for all three subsets, the squared Wasserstein error decreases sharply as the number of discretization steps increases, and then stabilizes around $T \approx 8$. This is consistent with our theoretical analysis in Theorem~2, which predicts that discretization error should decrease as the flow is integrated more finely.

Second, for every fixed value of $T$, the error is smallest for \emph{Blonde Female}, larger for \emph{Blonde All}, and largest for \emph{All Classes}. Since these three settings are ordered by increasing diversity and effective mode complexity, this provides further evidence that more complex multimodal datasets induce less straight flows and therefore require more discretization steps to achieve the same sampling fidelity.

\paragraph{Implication.}
These results support the broader relevance of our theory beyond small benchmark datasets. Although training on full ImageNet is outside the scope of the present work, the CelebA results already demonstrate the same qualitative scaling behavior predicted by our straightness-based analysis: increasing mode complexity worsens discretization error, while increasing $T$ compensates for this by better resolving the flow trajectory.

Overall, this experiment strengthens the empirical evidence that the geometry of the learned flow---as captured by our straightness perspective---continues to govern sampling accuracy even in substantially larger and more realistic image datasets.
    
\subsection{Verifying Assumption \ref{assumption: J1 psd} empricially} \label{appendix: verifying-assumption empirically}

\begin{figure*}[htbp]
    \centering
    \subfloat[\centering  ]{{\includegraphics[width=0.25\textwidth]{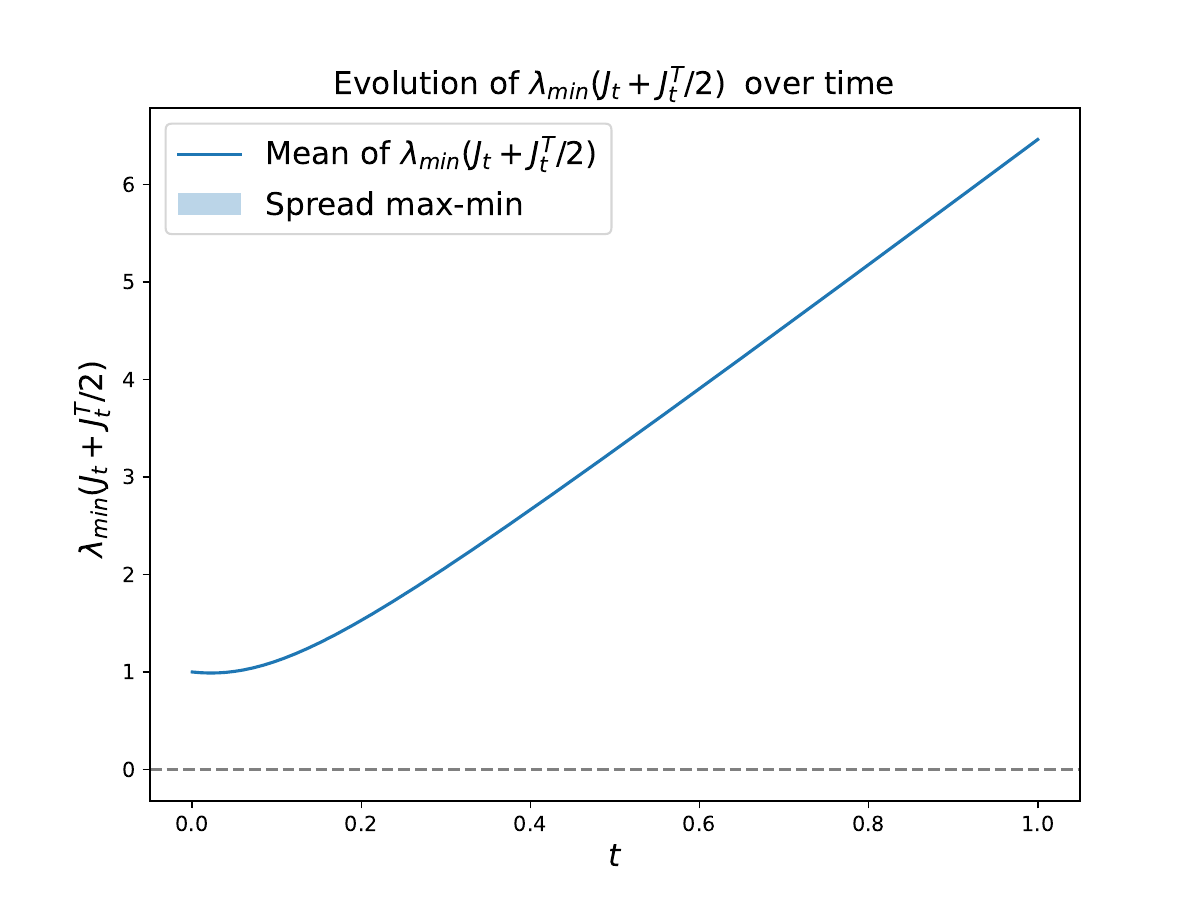} }}%
    \hfill
    \subfloat[\centering]{{\includegraphics[width=0.25\textwidth]{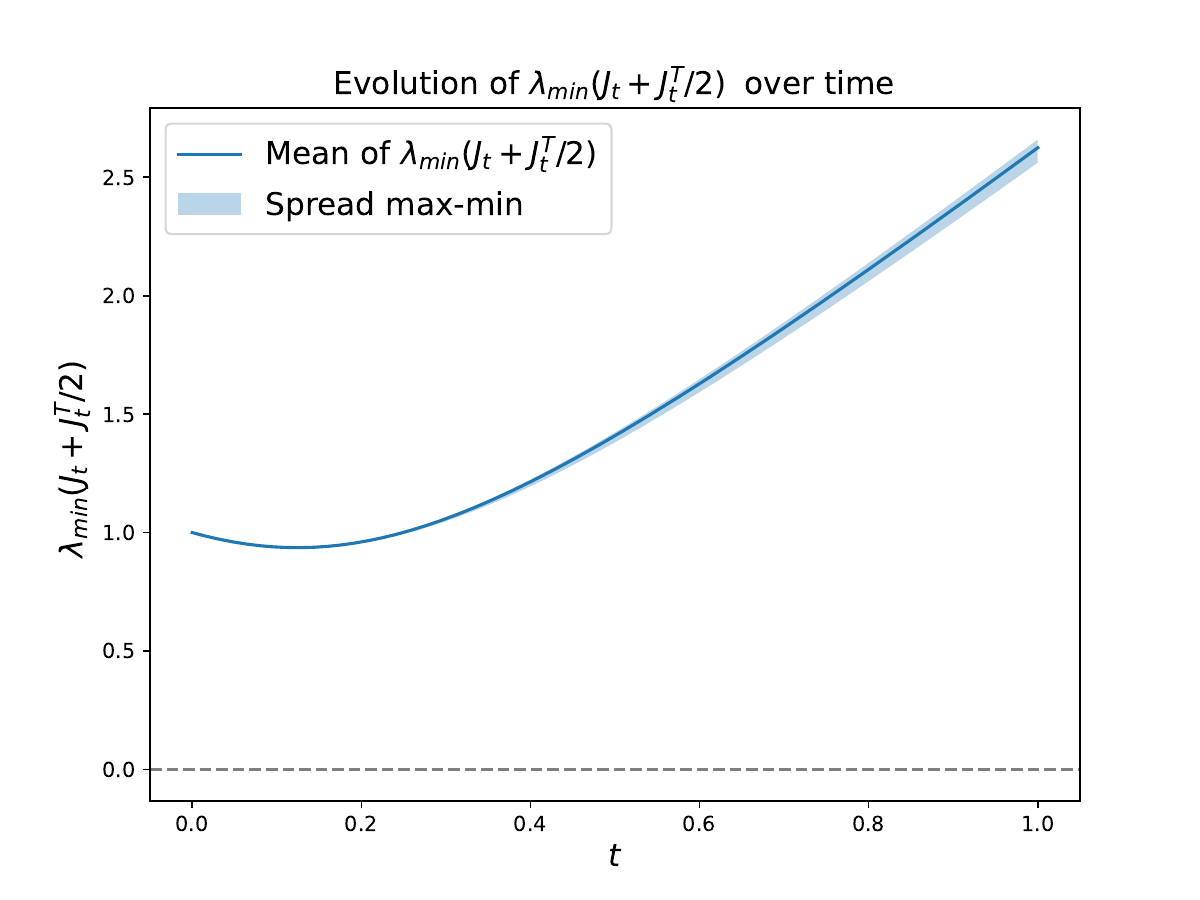} }}
    \hfill
    \subfloat[\centering]{{\includegraphics[width=0.25\textwidth]{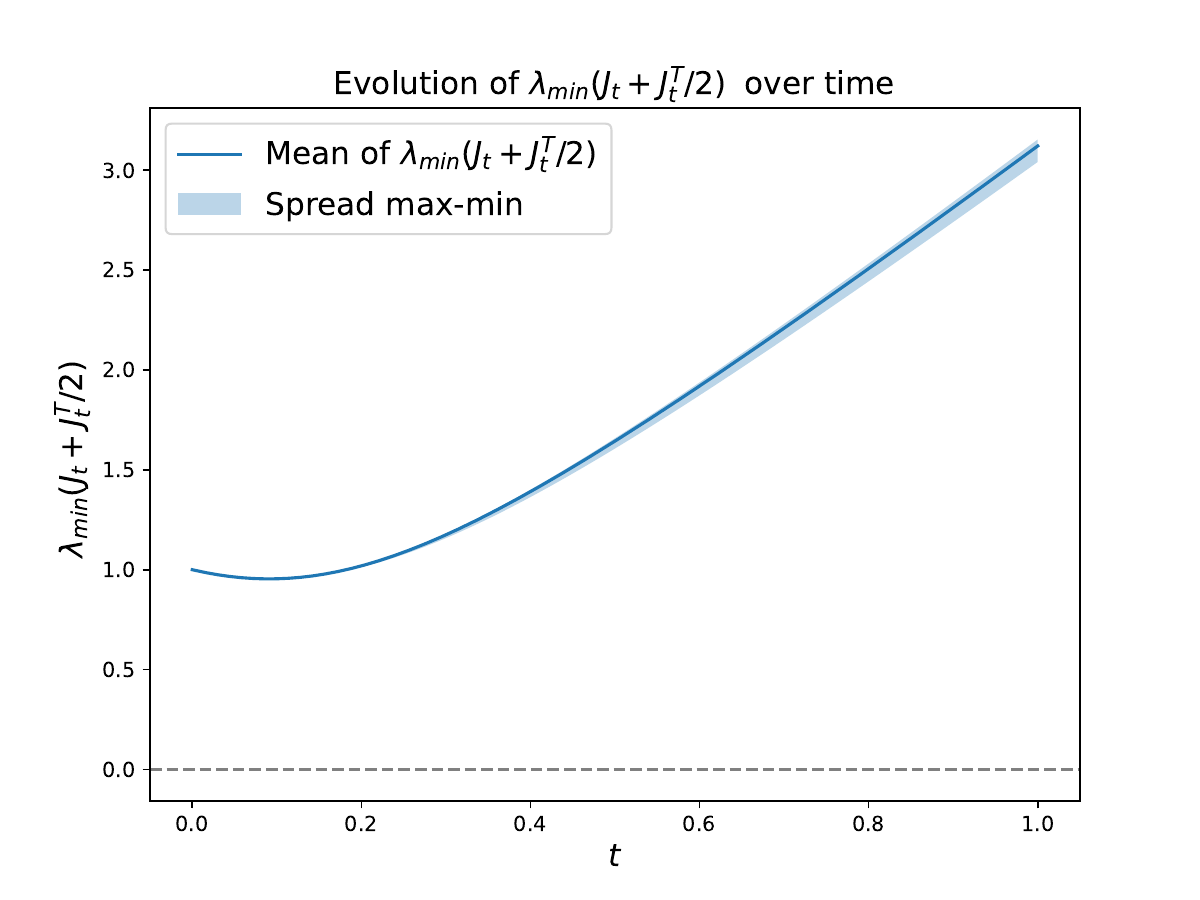} }}
    \caption{
Evolution of \(0.5 \times \lambda_{\min}(J_t^{z_0} + J_t^{z_0\top})\) over time for 100 samples \(z_0 \sim \mathcal{N}(0, I_2)\), under target distributions \(\rho_1 = \sum_{i=1}^K \pi_i\, \mathcal{N}(\mu_i, \sigma^2 I)\) with \(K \in \{2, 3, 4\}\), corresponding to subplots (a), (b), and (c), respectively.
}
 \label{fig: psd_assumption verification}
\end{figure*}


Figure \ref{fig: psd_assumption verification} shows an empirical verification of Assumption \ref{assumption: J1 psd}.
The parameter settings given as follows: 
\begin{itemize}
    \item[(a)] \(K=2\): \(\mu_1 = (5, 1),\ \mu_2 = (-7, -2),\ \sigma = 6.5,\ \pi_1 = 0.6,\ \pi_2 = 0.4\).
    \item[(b)] \(K=3\): \(\mu_1 = (1, 2),\ \mu_2 = (2, 0),\ \mu_3 = (-1, -2),\ \sigma = 2.5,\ \pi_1 = \pi_2 = 0.4,\ \pi_3 = 0.2\).
    \item[(c)] \(K=4\): \(\mu_1 = (1, 3),\ \mu_2 = (2, 0),\ \mu_3 = (-1, -2),\ \mu_4 = (0, -2),\ \sigma = 3,\ \pi_1 = 0.3,\ \pi_2 = 0.4,\ \pi_3 = 0.2,\ \pi_4 = 0.1\).
\end{itemize}

\section{{Existence and uniqueness of Rectified flow}}
\label{sec: general straightness}
Prior works \cite{liu2023flow, liu2022rectified} assume (also Section \ref{sec: convergence rate}) that the velocity is Lipschitz smooth, which is a sufficient condition for the existence of a unique solution to ODE \eqref{eq: ode-true}. However, such conditions could be restrictive in practice. In this section, we will work with somewhat more pragmatic conditions on the true velocity functions $v_t$. Recall ODE \eqref{eq: modified ode}, i.e., 
\begin{equation*}
    dZ_t = v_t(Z_t)\; dt, \quad Z_0 = z_0.
\end{equation*}



\begin{definition}
  For a positive integer $k$, a function $f: \bbR^d \to \bbR^d$ is said to be $\cC^{k}$  if it is $k$-times continuously differentiable. Additionally, $f$ is called a $\cC^{1,1}$ function if $f$ is a $\cC^1$ function and its Jacobian is  locally Lipschitz, i.e., for every $x\in \bbR^d$, there exists $\delta >0$ and $L_{loc}>0$ (which may depend on $x$) such that 
    \[
    \max\{\norm{x-x_1}_2, \norm{x-x_2}_2\} \le \delta \Rightarrow \norm{\nabla_x f(x_1) - \nabla_x f(x_2)}_{op} \le L_{loc} \norm{x_1-x_2}_2.
    \]
\end{definition}

\begin{assumption}
    \label{assumption: locally L-lipschitz}
    We assume that the velocity function $v_t(\cdot)$ is a $\cC^{1,1}$ function for all $t \in [0,1]$.
\end{assumption}
Note that, if $v_t(\cdot)$ is a $\cC^{2}$ function, then it automatically satisfies Assumption \ref{assumption: locally L-lipschitz}. This is satisfied as long as the target distribution has the second moment (See Theorem \ref{thm: gaussian straight flow}). Therefore, we argue that Assumption \ref{assumption: locally L-lipschitz} is less restrictive compared to the global Lipschitzness condition. 
Now we present a general existence and uniqueness result for Rectified Flow. 
\begin{proposition}[Existence and Uniqueness]
    \label{thm: general straight flow}
    Let $\bbE\norm{X_1}_2< \infty$ and the Assumption \ref{assumption: locally L-lipschitz} hold. Also, assume that the solution to the ODE \eqref{eq: modified ode} satisfies the non-explosive condition
    \begin{equation}
    \label{eq: non-explosive}
    \sup_{t \in  [0,1]} \norm{Z_t(z_0)}_2 < \infty \quad \text{for all initial values $z_0 \in \bbR^d$}.
    \end{equation}
    Then there exists a unique solution to ODE \eqref{eq: modified ode}.
\end{proposition}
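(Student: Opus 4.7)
The plan is to combine the classical Picard--Lindel\"of local existence-and-uniqueness theorem with the non-explosive condition \eqref{eq: non-explosive} to obtain global existence and uniqueness on $[0,1]$. The central technical point is that the $\cC^{1,1}$ regularity of $v_t$ together with the a priori boundedness of the trajectory upgrades the pointwise local Lipschitz bounds into a uniform Lipschitz constant for $v_t$ along the trajectory, which in turn prevents the maximal interval of existence from being strictly shorter than $[0,1]$.

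First I would establish local existence: at any point $(t_0, x_0) \in [0,1] \times \bbR^d$, Assumption \ref{assumption: locally L-lipschitz} says $v_t(\cdot)$ is $\cC^{1,1}$, so the spatial Jacobian $\nabla v_{t_0}$ is locally Lipschitz near $x_0$ and hence $v_{t_0}$ itself is Lipschitz in $x$ on a small ball around $x_0$ with some constant $L_{\text{loc}}$. The Cauchy--Lipschitz theorem then produces some $\tau > 0$ and a unique $\cC^1$ solution of \eqref{eq: modified ode} on $[0,\tau)$. Define $t^\star \in (0,1]$ to be the supremum of times $s \le 1$ for which a (necessarily unique) solution exists on $[0,s)$.

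Next I would argue $t^\star = 1$ by contradiction. If $t^\star < 1$, then by \eqref{eq: non-explosive} the trajectory $\{Z_t(z_0) : t \in [0,t^\star)\}$ is contained in a closed ball $\bar B(0, R)$ with $R = R(z_0) < \infty$. A compactness/covering argument on $[0,1] \times \bar B(0, R+1)$ then promotes the pointwise local Lipschitz constants to a uniform Lipschitz constant $L^\star < \infty$ for $v_t$ on this compact set, and $\sup_{(t,x)\in[0,1]\times \bar B(0,R+1)}\norm{v_t(x)}_2 < \infty$ by continuity. Applying Picard--Lindel\"of at $(\tilde t, Z_{\tilde t}(z_0))$ for $\tilde t$ slightly less than $t^\star$ with the uniform constant $L^\star$ yields a solution on an interval of length bounded below by a quantity depending only on $L^\star$ and that sup, independent of $\tilde t$. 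This extends the solution past $t^\star$, contradicting maximality, so $t^\star = 1$.

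Finally, uniqueness on $[0,1]$ follows from a standard Gr\"onwall argument: if $Z_t^{(1)}$ and $Z_t^{(2)}$ are two solutions from the same $z_0$, both trajectories lie in a common ball $\bar B(0, R')$ by non-explosivity, and on this ball the Lipschitz bound $L^\star$ applies, so $\norm{Z_t^{(1)} - Z_t^{(2)}}_2 \le L^\star \int_0^t \norm{Z_s^{(1)} - Z_s^{(2)}}_2\, ds$ and Gr\"onwall forces the difference to vanish. The main technical obstacle I anticipate is precisely the extraction of the uniform constant $L^\star$ in the step above: the $\cC^{1,1}$ definition is stated pointwise in $x$ for each fixed $t$, so producing a uniform constant over a compact subset of $[0,1] \times \bbR^d$ requires implicit joint continuity of $v_t(x)$ in $(t,x)$. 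This is natural for the rectified-flow velocity $v_t(x) = \bbE[X_1 - X_0 \mid X_t = x]$ under mild moment assumptions (e.g., $\bbE\norm{X_1}_2 < \infty$), but has to be argued carefully to keep the statement self-contained.
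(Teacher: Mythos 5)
Your argument is correct and is the classical one: local existence and uniqueness from Picard--Lindel\"of (using that $\cC^{1}$ regularity in $x$ gives local Lipschitzness), extension to a maximal interval, exclusion of blow-up before $t=1$ via the a priori bound \eqref{eq: non-explosive}, and uniqueness by Gr\"onwall on a common compact set containing both trajectories. The paper, by contrast, does not carry out this argument at all: it disposes of the proposition in one line by citing Theorem 5.2 of \cite{kunita1984stochastic}, which packages exactly this local-Lipschitz-plus-non-explosivity criterion (the earlier, now-commented-out version of the appendix additionally invoked Kunita's Theorem 5.4 to get $\cC^1$ dependence on the initial condition, which is needed later for the Jacobian ODE \eqref{eq: jacobian ODE} but not for the statement as given). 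So you have supplied a self-contained elementary proof of a result the paper treats as a black box; what the citation buys the paper is brevity and, more importantly, the differentiability of the flow map $z_0 \mapsto Z_t(z_0)$, which your argument does not address (and does not need to for this proposition). The one caveat you raise is genuine and worth emphasizing: Assumption \ref{assumption: locally L-lipschitz} is stated pointwise in $x$ for each fixed $t$ with no explicit regularity in $t$, so both your covering argument for a uniform Lipschitz constant on $[0,1]\times \bar B(0,R+1)$ and the hypotheses of Kunita's theorem require joint continuity of $(t,x)\mapsto v_t(x)$; the paper implicitly supplies this only in the concrete Gaussian-source setting of Theorem \ref{thm: gaussian straight flow}, where $v_t$ is computed explicitly. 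Your reading of \eqref{eq: non-explosive} as a bound on the trajectory over its maximal interval of existence (rather than over all of $[0,1]$, which would be circular) is the right one and matches the paper's intent.
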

The above proposition is a consequence of Theorem 5.2 of \cite{kunita1984stochastic}. The non-explosivity condition is particularly important for the existence of the ODE as it avoids any singularities in the solution path. However, this condition is hard to verify apriori. However, we provide a sufficient condition for non-explosivity that is easier to check so that Proposition \ref{thm: general straight flow} can be of practical use. 

\begin{assumption}[Osgood type criterion \citep{osgood1898beweis, groisman2007explosion}]
\label{assumption : osgood}
    Let $Z_t(z_0) \in \bbR^d$ be the solution of the ODE \eqref{eq: modified ode}, where $(z_0,t) \in \bbR^d \times [0,1]$. There exists a non-negative locally-Lipschitz (or strictly increasing) function $h: \bbR_+ \to \bbR_+$ such that 
     \begin{equation}
    \label{eq: osgood}
    \int_{u_0}^\infty \frac{1}{h(u)}\; du  > 1, \quad \text{for all $u_0 > 0$},
    \end{equation}
   and  $ \innerprod{Z_t(z_0), v_t(Z_t(z_0))} \le h(\norm{Z_t(z_0)}_2^2)$, for all $(z_0,t) \in \bbR^d \times [0,1]$.
\end{assumption}
\vspace{-2mm}
  One sufficient condition is that $\sup_{t \in [0,1]} \innerprod{x, v_t(x)} \le h(\norm{x}_2^2)$ for all $x \in \bbR^d$ and for a positive locally-Lipschitz (or strictly increasing) function $h$  satisfying \eqref{eq: osgood}. The above criterion ensures that $\norm{Z_t(z_0)}_2$ is always finite for all $t \in [0,1]$ \citep{groisman2007explosion}, i.e., the solutions does not explode. To be precise, the integral in \eqref{eq: osgood} quantizes the explosion time of $\norm{Z_t(z_0)}_2$, and it ensures that the explosion time falls outside $[0,1]$. 
  Moreover, as opposed to condition  \eqref{eq: non-explosive}, this can be easily checked for a large class of target distributions, e.g., a general mixture of Gaussians. For example, for $(X_0, X_1) \sim N(0, I_d) \times \rho_1$ with $\rho_1 = \sum_{j = 1}^J \pi_j N(\mu_j, \Sigma_j)$, it follows that $\sup_{t \in [0,1]} \innerprod{x, v_t(x)} \le A\norm{x}_2^2 + B \norm{x}_2$ for some $A, B>0$ (see  Appendix \ref{sec: app gaussian to general mixture}). Therefore, $h(u) = A u + B \sqrt{u}$ is a valid choice and it also satisfies Assumption \ref{assumption : osgood}, as $\int_{u_0}^\infty (A u + B \sqrt{u})^{-1} \; du = \infty$ for all $u_0>0$. 
  We now state the main result below.
\begin{theorem}
\label{thm: gaussian straight flow}

    Let $(X_0,X_1) \sim N(0, I_d) \times \rho_1$ such that $\bbE \norm{X_1}_2< \infty$. Then, the velocity $v_t(\cdot)$ satisfies Assumption \ref{assumption: locally L-lipschitz}. Moreover, under Assumption \ref{assumption : osgood}, there exists a unique solution to ODE \eqref{eq: modified ode}. 
    
\end{theorem}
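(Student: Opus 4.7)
The statement breaks into two claims that I would handle separately. The $\mathcal{C}^{1,1}$ regularity is a purely analytic consequence of the Gaussian smoothing by $X_0$ and uses only $\bbE\Norm{X_1}_2<\infty$. Given this regularity, existence and uniqueness reduce, via Proposition~\ref{thm: general straight flow}, to verifying the non-explosion condition \eqref{eq: non-explosive}, which I would derive from Assumption~\ref{assumption : osgood} by a Gr\"onwall-type comparison.

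\textbf{Step 1: $v_t\in\mathcal{C}^{1,1}$.} For $t\in[0,1)$, the identity $X_1-X_0=(X_1-X_t)/(1-t)$ gives
\[
v_t(x)=\frac{1}{1-t}\bigl(m_t(x)-x\bigr),\qquad m_t(x):=\bbE[X_1\mid X_t=x].
\]
Since $X_0\sim N(0,I_d)$ is independent of $X_1$, the conditional law of $X_t$ given $X_1=y$ is $N(ty,(1-t)^2I_d)$, and Bayes' rule yields $m_t(x)=g_t(x)/p_t(x)$, with
\[
p_t(x)=\int\varphi_t(x-ty)\,d\rho_1(y),\qquad g_t(x)=\int y\,\varphi_t(x-ty)\,d\rho_1(y),
\]
where $\varphi_t$ is the density of $N(0,(1-t)^2I_d)$ and $p_t(x)>0$ everywhere. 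Any spatial derivative of $\varphi_t(x-ty)$ produces a polynomial in $(x-ty)/(1-t)$ times $\varphi_t(x-ty)$; for fixed $t\in(0,1)$ and $x$ in any bounded neighborhood, I would dominate this uniformly by $C(1+\Norm{y}_2^k)\exp(-c\Norm{y}_2^2)$, which is bounded in $y$ and hence $\rho_1$-integrable. The finite-first-moment hypothesis enters only to guarantee $g_t$ is well-defined. Dominated convergence then gives $p_t,g_t\in\mathcal{C}^\infty(\bbR^d)$, so $v_t\in\mathcal{C}^\infty\subset\mathcal{C}^{1,1}$. The endpoints are immediate from independence: $v_0(x)=\bbE[X_1]-x$ and $v_1(x)=x$, both affine.

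\textbf{Step 2: Global existence on $[0,1]$.} With regularity established, I would apply Proposition~\ref{thm: general straight flow} after verifying \eqref{eq: non-explosive}. Fix $z_0\in\bbR^d$ and let $r(t):=\Norm{Z_t(z_0)}_2^2$. Differentiating along \eqref{eq: modified ode} and invoking Assumption~\ref{assumption : osgood},
\[
\dot r(t)=2\innerprod{Z_t(z_0),\,v_t(Z_t(z_0))}\le 2h(r(t)).
\]
If $r$ blew up at some $t^\ast\in(0,1]$, separating variables would force $\int_{r(0)}^{\infty}du/h(u)\le 2t^\ast\le 2$, which (after absorbing the factor $2$ into $h$) contradicts the divergence prescribed by \eqref{eq: osgood}. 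Hence $\sup_{t\in[0,1]}\Norm{Z_t(z_0)}_2<\infty$, and Proposition~\ref{thm: general straight flow} yields a unique global solution.

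\textbf{Main obstacle.} The technically delicate point is the dominated-convergence argument near $t\uparrow 1$, where $\varphi_t$ becomes singular and uniform envelopes deteriorate. My plan sidesteps this by restricting the integral representation to $t\in[0,1)$ and reading $v_1$ off directly from the independence of $X_0$ and $X_1$; any attempt to obtain \emph{joint} smoothness in $(x,t)$ up to $t=1$ would demand a much finer estimate of the ratio $g_t/p_t$ in that regime.
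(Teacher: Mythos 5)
Your proposal follows essentially the same route as the paper's proof: smoothness of $v_t$ is extracted from the Gaussian-convolution representation $v_t(x)=(m_t(x)-x)/(1-t)$ with $m_t=g_t/p_t$, $p_t>0$, via a dominated-convergence argument (the paper establishes $\cC^2$, which already gives $\cC^{1,1}$, and treats $t=0,1$ separately exactly as you do), and non-explosion follows from Assumption~\ref{assumption : osgood} by an Osgood-type comparison before invoking Proposition~\ref{thm: general straight flow}. The one point to flag is the factor of $2$ in $\dot r(t)=2\innerprod{Z_t(z_0),v_t(Z_t(z_0))}$, which you honestly acknowledge but cannot literally ``absorb into $h$'' since \eqref{eq: osgood} only guarantees $\int_{u_0}^\infty h(u)^{-1}\,du>1$ rather than $>2$; the paper's own proof silently drops this factor, and in all examples considered the integral is actually $+\infty$, so the discrepancy is immaterial in practice but shared by both arguments.
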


The above theorem gives a fairly general existence and uniqueness (without Lipschitz smoothness) result for Rectified Flow starting from an independent coupling that covers a large class of target distributions. Essentially, the first moment ensures that Assumption \ref{assumption: locally L-lipschitz} is satisfied. Therefore, coupled with Assumption \ref{assumption : osgood}, the conditions of Proposition \ref{thm: general straight flow} are satisfied, and hence, the result follows.  The complete proof is deferred to Appendix \ref{sec: app gaussian straight flow}.

\section{{Proofs for Wasserstein convergence Bounds}}

\subsection{Proof of Theorem \ref{thm: pdata bound}}
\label{sec: proof W2 ode rate}
Let $\{\rho_t\}_{t \in [0,1]}$ and $\{\tilde{\rho}_t\}_{t\in [0,1]}$ be distribution of the solution of \eqref{eq: ode-true} and \eqref{eq: ode-approx} respectively. Let $\pi_t$ be the optimal coupling between $\rho_t$ and $\tilde{\rho}_t$. Therefore, using Corollary 5.25 of \cite{santambrogio2015optimal}, we have 
\begin{align*}
    \frac{1}{2} \frac{d W_2^2(\rho_t, \tilde \rho_t)}{dt} &= \int \innerprod{x - y, v_t(x) - \widehat{v}_t(y)} \; d\pi_t(x,y)\\
    & =  \int \innerprod{x - y, v_t(x) - \widehat{v}_t(x)} \; d\pi_t(x,y)  + \int \innerprod{x - y, \widehat{v}_t(x) - \widehat{v}_t(y)}  \; d\pi_t(x,y)\\  
    & \le \frac{1}{2}\int \norm{x - y}_2^2 \; d\pi_t(x,y) + \frac{1}{2}\int \norm{v_t(x) - \widehat{v}_t(x)}_2^2 \; d\pi_t(x,y) + \widehat{L}  \int \norm{x - y}_2^2 \; d\pi_t(x,y)\\
    & = (1/2 + \widehat{L}) W_2^2(\rho_t, \tilde \rho_t) + \frac{b(t)}{2}.
\end{align*}

Solving the above differential inequality leads to the following inequality
\[
W_2^2(\rho_\tau, \tilde \rho_\tau) \le  W_2^2(\rho_0, \tilde \rho_0) + e^{1 + 2\widehat{L}} \int_{0}^\tau b(t)\; dt.
\]
The result follows by noting that $W_2^2(\rho_0, \tilde \rho_0) = 0$ and setting $\tau = 1$.

\subsection{Comparison of straightness parameters}
\label{sec: proof of lemma straightness comparison}

\begin{lemma}
\label{lemma: straightness comparison}
    The AS and PWS parameters satisfy $ \gamma_{2,T}(\cZ)\ge \gamma_1(\cZ) \ge S(\cZ)$. Moreover, $S(\cZ) = 0$ if and only if $\gamma_1(\cZ) = \gamma_{2,T}(\cZ) = 0$. 
\end{lemma}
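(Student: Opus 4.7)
The plan is to verify the chain of inequalities in order, relying on two simple ingredients: (i) a max-vs-average comparison for the PWS and AS parameters, and (ii) the fundamental theorem of calculus applied to the flow map to connect $\dot v_t(Z_t)$ with the straightness functional $S(\cZ)$.

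\textbf{First inequality $\gamma_{2,T}(\cZ)\ge \gamma_1(\cZ)$.} I would start by writing
\[
\gamma_1(\cZ)=\sum_{i=1}^T\int_{t_{i-1}}^{t_i}\bbE\|\dot v_t(Z_t)\|_2^2\,dt=\frac{1}{T}\sum_{i=1}^T\frac{1}{t_i-t_{i-1}}\int_{t_{i-1}}^{t_i}\bbE\|\dot v_t(Z_t)\|_2^2\,dt,
\]
since the partition is equispaced with $t_i-t_{i-1}=1/T$. The right side is the arithmetic mean of $T$ nonnegative quantities and hence is bounded above by the maximum, which is exactly $\gamma_{2,T}(\cZ)$.

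\textbf{Second inequality $\gamma_1(\cZ)\ge S(\cZ)$.} The key observation is that along the flow $\dot Z_t=v_t(Z_t)$, so $Z_1-Z_0=\int_0^1 v_s(Z_s)\,ds$. Subtracting $v_t(Z_t)$ and applying the fundamental theorem of calculus to the map $s\mapsto v_s(Z_s)$ yields
\[
Z_1-Z_0-v_t(Z_t)=\int_0^1\bigl(v_s(Z_s)-v_t(Z_t)\bigr)\,ds=\int_0^1\!\!\int_t^s\dot v_u(Z_u)\,du\,ds.
\]
I would apply Jensen's inequality to the outer $ds$-integral and Cauchy--Schwarz to the inner integral with $|s-t|\le 1$, and then extend the $u$-integration range to $[0,1]$, obtaining the pointwise bound $\|Z_1-Z_0-v_t(Z_t)\|_2^2\le\int_0^1\|\dot v_u(Z_u)\|_2^2\,du$. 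Taking expectations and integrating over $t\in[0,1]$ gives $S(\cZ)\le\gamma_1(\cZ)$.

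\textbf{The equivalence.} If $\gamma_1(\cZ)=\gamma_{2,T}(\cZ)=0$, then $S(\cZ)=0$ is immediate from the second inequality above. Conversely, suppose $S(\cZ)=0$. Since the integrand is nonnegative, $Z_1-Z_0=v_t(Z_t)$ almost surely for a.e.\ $t\in[0,1]$. Because $v_t(Z_t)=\dot Z_t$ along the flow, this says $\dot Z_t$ equals the constant (in $t$) random vector $Z_1-Z_0$, a.s.\ for a.e.\ $t$. Using the twice-differentiability of the flow, I differentiate once more to get $\dot v_t(Z_t)=\ddot Z_t=0$ a.s.\ for a.e.\ $t$, which forces both $\gamma_1(\cZ)=0$ and $\gamma_{2,T}(\cZ)=0$. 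The only mild obstacle is the regularity bookkeeping around the ``a.e.\ $t$'' statement when invoking the differentiability of $t\mapsto v_t(Z_t)$; this is handled by the standing twice-differentiability hypothesis on $\cZ$.
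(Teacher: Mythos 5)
Your proposal is correct and follows essentially the same route as the paper: the max-versus-weighted-average comparison for $\gamma_{2,T}\ge\gamma_1$, the identity $Z_1-Z_0=\int_0^1 v_s(Z_s)\,ds$ combined with Jensen and Cauchy--Schwarz (with $|s-t|\le 1$ and enlargement of the integration range) for $\gamma_1\ge S$, and differentiation of the a.s.\ identity $v_t(Z_t)=Z_1-Z_0$ for the converse direction of the equivalence. The only cosmetic difference is that the paper deduces $\gamma_{2,T}(\cZ)=0$ from $\gamma_1(\cZ)=0$ via the auxiliary bound $\gamma_{2,T}(\cZ)\le T\gamma_1(\cZ)$, whereas you read it off directly from $\dot v_t(Z_t)=0$; both are immediate.
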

Recall that $S(\cZ) = \int_{0}^1 \bbE\norm{Z_1 - Z_0 - v_t(Z_t)}_2^2\; dt$. Also, note that $Z_1 - Z_0 = \int_{0}^1 v_u(Z_u)\; du$. Therefore, we have 
\begin{align*}
    S(\cZ) & = \int_{0}^1 \bbE \norm{ \int_{0}^1 [v_u(Z_u) - v_t(Z_t)] \; du }_2^2\; dt\\
    & = \int_{0}^1 \bbE \norm{ \int_{0}^1 \int_t^u \dot v_\tau(Z_\tau) \;d\tau \; du }_2^2\; dt \\
    & \le \int_{0}^1 \bbE \left[\int_0^1 \abs{t - u} \int_{t \wedge u}^{t \vee u} \norm{\dot v_\tau(Z_\tau)}_2^2 \; d\tau \; du \right] \; dt\\
    & \le \int_0^1 \bbE \int_0^1 \int_0^1 \norm{\dot v_\tau(Z_\tau)}_2^2 \; d\tau \; du\\
    & \le \int_0^1 \bbE \norm{\dot v_\tau(Z_\tau)}_2^2 \; d\tau = \gamma_1(\cZ).
\end{align*}

Moreover, note that 
\begin{equation}
\label{eq: gamma inequality}
\gamma_1(\cZ) = \sum_{i=1}^T (t_i - t_{i-1}) . \frac{1}{t_i - t_{i-1}}\int_{t_{i-1}}^{t_i} \bbE \norm{\dot v_\tau(Z_\tau)}_2^2 \; d\tau \le \gamma_{2, T}(\cZ). 
\end{equation}
This shows that $S(\cZ) \le \gamma_1(\cZ) \le \gamma_{2,T}(\cZ)$.

For the second part, first note that the $t_{i} - t_{i-1} = 1/T$. Therefore, 
\[
\gamma_1(\cZ) = \frac{1}{T} \sum_{i=1}^T \frac{1}{t_i - t_{i-1}}\int_{t_{i-1}}^{t_i} \bbE \norm{\dot v_\tau(Z_\tau)}_2^2 \; d\tau \ge \frac{\gamma_{2,T}(\cZ)}{T}.
\]
The above inequality along with \eqref{eq: gamma inequality} tells that $\gamma_1(\cZ) = 0$ iff $\gamma_{2,T}(\cZ) = 0$. 

Now, due to the inequality $S(\cZ) \le \gamma_1(\cZ)$, we have $S(\cZ) = 0$ if $\gamma_1(\cZ) = 0$. For the other direction, let us assume $S(\cZ) = 0$. This shows that $v_t(Z_t) = Z_1 - Z_0$ almost surely in $t$ and $(Z_0, Z_1)$. This shows that $\dot v_t(Z_t) = 0$ almost surely. Hence the result follows.

\subsection{Proof of Theorem \ref{thm: W2 ode disc}}
\label{sec: proof W2 ode disc}
Recall that for a given partition $0 = t_0 < t_1 < \ldots < t_T = 1$ of the interval $[0,1]$ of equidistant points $\{t_i\}_{0\le i \le T}$ with $h := T^{-1}$, we follow the Euler discretized version of the of ODE \eqref{eq: ode-approx} to obtain the sample estimates:
\[
\widehat{Y}_{t_i} = \widehat{Y}_{t_{i-1}} +  h \widehat{v}_{t_i}(\widehat{Y}_{t_i}), \quad \widehat{Y}_0 = Z_0. 
\]


Before analyzing the discretization error, we introduce the following interpolation process for $t \in [t_i, t_{i+1}]$ and each $i \in \{0, \ldots, T\}$:
\begin{equation}
    \label{eq: interpolation proc}
    \ddt\bar{Y}_t = \widehat{v}_{t_i}(\bar{Y}_{t_i}) ,\quad \bar{Y}_{t_i} = \widehat{Y}_{t_i}.
\end{equation}
The above ODE flow gives us a continuous interpolation between $\widehat{Y}_{t_i}$ and $\widehat{Y}_{t_{i+1}}$. Coupled with the above flow equation and the ODE flow \eqref{eq: emp-ode-disc}, we have the following \textit{almost sure} differential inequality for $ t \in [t_i, t_{i+1}]$:

\begin{equation}
\label{eq: ddt decomp}
    \begin{aligned}
        \ddt\Norm{Z_t - \bar{Y}_t}_2^2 & = 2 \innerprod{Z_t - \bar{Y}_t, \ddt Z_t - \ddt\bar{Y}_t}\\
        & = 2 \innerprod{Z_t - \bar{Y}_t, v_t(Z_t) - \widehat{v}_{t_i}(\bar{Y}_{t_i})}\\
        & \le \widehat{L}\Norm{Z_t - \bar{Y}_t}_2^2 + \Norm{v_t(Z_t) - \widehat{v}_{t_i}(\bar{Y}_{t_i})}_2^2/\widehat{L}
    \end{aligned}
\end{equation}
Multiplying $e^{- \widehat{L}(t-t_i)}$ on both sides of the above inequality and rearranging the terms leads to 

\begin{equation*}
\scriptstyle
    \begin{aligned}
        & e^{-\widehat{L}(t-t_i)}\ddt\Norm{Z_t - \bar{Y}_t}_2^2  -  e^{-\widehat{L}(t-t_i)} \widehat{L}\Norm{Z_t - \bar{Y}_t}_2^2  \le e^{-\widehat{L}(t-t_i)}\Norm{v_t(Z_t) - \widehat{v}_{t_i}(\bar{Y}_{t_i})}_2^2/\widehat{L}.\\ 
        & \Leftrightarrow \ddt \{ e^{-\widehat{L}(t-t_i)}\Norm{Z_t - \bar{Y}_t}_2^2\} \le e^{-\widehat{L}(t-t_i)}\Norm{v_t(Z_t) - \widehat{v}_{t_i}(\bar{Y}_{t_i})}_2^2/\widehat{L}\le \Norm{v_t(Z_t) - \widehat{v}_{t_i}(\bar{Y}_{t_i})}_2^2/\widehat{L}.\\
        & \Leftrightarrow \Norm{Z_{t_{i+1}} - \widehat{Y}_{t_{i+1}}}_2^2 \le e^{\widehat{L}(t_{i+1} - t_i)}\Norm{Z_{t_i} - \widehat{Y}_{t_i}}_2^2  + \frac{e^{\widehat{L}(t_{i+1} - t_i)}}{\widehat{L}} \int_{t_i}^{t_{i+1}} \Norm{v_t(Z_t) - \widehat{v}_{t_i}(\bar{Y}_{t_i})}_2^2\; dt.
    \end{aligned}
\end{equation*}
Define $\Delta_i := \bbE \Norm{Z_{t_{i}} - \widehat{Y}_{t_{i}}}_2^2$. Using the above inequality we have 
\begin{equation}
\label{eq: moment recursion}
\begin{aligned}
& \Delta_{i+1}\\
&\le e^{\widehat{L}h} \Delta_i + \frac{e^{\widehat{L}h}}{\widehat{L}} \int_{t_i}^{t_{i+1}}\bbE \Norm{v_t(Z_t) - \widehat{v}_{t_i}(\widehat{Y}_{t_i})}_2^2 \; dt\\
& \le e^{\widehat{L}h} \Delta_i\\
&  +  \frac{3 e^{\widehat{L}h}}{\widehat{L}} \left\{
\underbrace{\int_{t_i}^{t_{i+1}}\bbE \Norm{v_t(Z_t) - v_{t_i}(Z_{t_i})}_2^2 \; dt}_{T_1} + \underbrace{\int_{t_i}^{t_{i+1}}\bbE \Norm{v_{t_i}(Z_{t_i}) - \widehat{v}_{t_i}(Z_{t_i})}_2^2 \; dt}_{T_2}+ \underbrace{\int_{t_i}^{t_{i+1}}\bbE \Norm{\widehat{v}_{t_i}(Z_{t_i}) -\widehat{v}_{t_i}(\widehat{Y}_{t_i})}_2^2 \; dt}_{T_3} \right\}.
\end{aligned}
\end{equation}
Now we will bound each of the last three terms on the right-hand side of the above inequality. 

\textbf{Bounding $T_1$.}
For the first term, we have 

\begin{equation}
    \label{eq: straightness bound}
    \begin{aligned}
        \bbE \Norm{v_t(Z_t) - v_{t_i}(Z_{t_i})}_2^2 & = \bbE \norm{\int_{t_i}^{t}\ddtau v_\tau(Z_\tau) \; d\tau}_2^2\\
        & \le (t - t_i) \int_{t_i}^{t} \bbE \norm{\ddtau v_\tau(Z_\tau)}_2^2 \; d\tau \quad 
        \\
        & \le h^2 \gamma_i,
    \end{aligned}
\end{equation}
where $ \gamma_i = \frac{1}{t_{i+1} - t_i} \int_{t_i}^{t_{i+1}}  \bbE \Norm{\ddtau v_\tau(Z_\tau)}_2^2 \; d\tau$. 
This shows that $T_1 \le h^3 \gamma_i$.

\textbf{Bounding $T_2$.} The term $T_2$ is bounded by $h \sqevel$ as $\bbE \Norm{v_{t_i}(Z_{t_i}) - \widehat{v}_{t_i}(Z_{t_i})}_2^2 \le \sqevel$ (Assumption \ref{assmp: main assumption}\ref{assmp: estimation error}).

\textbf{Bounding $T_3$.} For the final term we will use that $\widehat{v}_{t_i}$ is $\widehat{L}$-Lipschitz. This entails that $T_3 \le \widehat{L}^2h \Delta_i$. Plugging these bounds in the recursion formula \eqref{eq: moment recursion}, we get 
\[
\Delta_{i+1} \le e^{\widehat{L}h}(1 + 3 \widehat{L} h) \Delta_i + 3e^{\widehat{L}h} (h^3 \gamma_i + h \sqevel)/\widehat{L}.
\]
Solving the recursion yields
\begin{align*}
\Delta_T & \le e^{T \widehat{L} h}(1 + 3 \widehat{L} h)^T \Delta_0 +  \frac{3 h^3}{\widehat{L}} \left\{\sum_{k = 1}^{T} e^{k \widehat{L} h}(1+3 \widehat{L} h)^{k-1} \gamma_{T-k}\right\} + \frac{3h}{\widehat{L}} \left\{\sum_{k = 1}^{T} e^{k \widehat{L} h}(1+3 \widehat{L} h)^{k-1}\right\}\sqevel.
\end{align*}
Recall that $\gamma_{2,T}(\cZ):= \max_{k} \gamma_k$. Note that $\Delta_0 = 0$ as $Z_0 = \widehat{Y}_0$. Therefore, we have 
\[
\Delta_T \le  \frac{ e^{4 \widehat{L}}}{\widehat{L}^2} \left(\frac{\gamma_{2,T}(\cZ)}{T^2} + \sqevel \right). 
\]


Here we used the fact that 
\[
\sum_{k = 1}^{T} e^{k \widehat{L} h}(1+3 \widehat{L} h)^{k-1} \le  \frac{e^{4 \widehat{L}}-1}{1 + 3\widehat{L} h - e^{- \widehat{L} h}} \le \frac{e^{4 \widehat{L}}}{3 \widehat{L}h }.
\]

Therefore, we have $$W_2^2(\widehat{\rho}_1, \rho_1) \le \Delta_T \le \frac{ e^{4 \widehat{L}} }{\widehat{L}^2} \left(\frac{\gamma_{2,T}(\cZ)}{T^2} + \sqevel \right).$$

However, the above upper bound explodes for $\widehat{L} \to 0$. Therefore, we handle the case $\widehat{L} <1$ in a slightly different manner.

\textbf{Separately handling $\widehat{L}<1$ case:} We recall the decomposition \eqref{eq: ddt decomp}. We will only change the last inequality in that decomposition, i.e., for $\alpha >0$ we get
\begin{equation}
\label{eq: ddt decomp2}
    \begin{aligned}
        \ddt\Norm{Z_t - \bar{Y}_t}_2^2 & = 2 \innerprod{Z_t - \bar{Y}_t, \ddt Z_t - \ddt\bar{Y}_t}\\
        & = 2 \innerprod{Z_t - \bar{Y}_t, v_t(Z_t) - \widehat{v}_{t_i}(\bar{Y}_{t_i})}\\
        & \le \alpha \Norm{Z_t - \bar{Y}_t}_2^2 + \Norm{v_t(Z_t) - \widehat{v}_{t_i}(\bar{Y}_{t_i})}_2^2/\alpha
    \end{aligned}
\end{equation}
Therefore, following exactly similar steps as before, we arrive at the following recursion:
\[
\Delta_{i+1} \le e^{\alpha h} \left(1 + \frac{3\widehat{L}^2 h}{\alpha}\right) \Delta_i + \frac{3 e^{\alpha h}}{\alpha} (h^3 \gamma_i + h \sqevel).
\]
Solving this yields
\[
\Delta_T \le \frac{e^{\alpha + 3\widehat{L}^2/\alpha} -1}{1 + 3 \widehat{L}^2 h/\alpha -e^{-\alpha h}} \left(\frac{3h^3}{\alpha}. \gamma_{2,T}(\cZ)  + \frac{3 h}{\alpha}. \sqevel\right)
\]
Note that $e^{\alpha + 3\widehat{L}^2/\alpha} -1 \le e^{\alpha + 3\widehat{L}/\alpha} -1$ as $\widehat{L} <1$. Additionally, 
\[
1 + 3 \widehat{L}^2 h/\alpha -e^{-\alpha h} \ge 1 - e^{-\alpha h} \ge \alpha h e^{-\alpha h}.
\]
Setting $\alpha = 1$, and using the above inequalities along with the fact that $h \le 1$, we get 
\[
 \frac{e^{\alpha + 3\widehat{L}^2/\alpha} -1}{1 + 3 \widehat{L}^2 h/\alpha -e^{-\alpha h}} \le \frac{e^{2+4\widehat{L}}}{h}. 
\]

Finally, using the above inequality we have 
\[
W_2^2(\widehat{\rho}_1, \rho_1) \le \Delta_T \le 27 e^{4 \widehat{L}}\left(\frac{\gamma_{2,T}(\cZ)}{T^2} + \sqevel \right).
\]
Combining this with previous upper bound we finally get the result.


\section{{Proofs for bounding $\gamma_{2,T}$}}

\paragraph{Definitions}:
Let us define the key terms used in this derivation:
\begin{itemize}
    \item \( z_t \): The state (e.g., data point) at time \( t \).
    \item \( p_t(z_t) \): The probability density function of $X_t = (1-t)X_0 + t X_1$.
    \item \( v_t(z_t) \): The velocity field that transports the probability density.
    \item \( s_t(z_t) = \nabla_{z_t} \log p_t(z_t) \): The score function, which is the gradient of the log-probability density.
    \item \( H_t(z_t) = \nabla_{z_t}^2 \log p_t(z_t) \): The Hessian of the log-probability density.
    \item \( Y = \nabla_{z_{t}}\text{div}(v_{t}(z_{t})) \): The gradient of the divergence of the velocity field.
\end{itemize}
The velocity field is defined as:
\begin{equation}
    v_t(z_t) = \frac{z_t}{t} + \left(\frac{1-t}{t}\right)s_t(z_t)
\end{equation}

\subsection{General expression for $\dot v_t(z_t)$(Lemma \ref{lem: acceleration-rf})}
\label{sec: general form v_dot_t}

\paragraph{Derivation of $\dot v_t(z_t)$}:

We begin with the material derivative (or total derivative) of the velocity field \( v_t(z_t) \), denoted by \( \dot{v}_t(z_t) \).
\begin{equation}
    \frac{d}{dt}v_{t}(z_{t}) = \nabla_{z_t} v_t(z_t) \cdot v_t(z_t) + \frac{\partial v_t(z_t)}{\partial t}
\end{equation}
Expanding the partial derivative term using the product rule:
\begin{align*}
    \frac{\partial v_t}{\partial t} &= \frac{\partial}{\partial t}\left(\frac{z_{t}}{t}+\left(\frac{1-t}{t}\right)s_{t}(z_{t})\right) \\
    &= -\frac{z_t}{t^2} + \frac{\partial}{\partial t}\left(\left(\frac{1}{t}-1\right)s_t(z_t)\right) \\
    &= -\frac{z_t}{t^2} - \frac{1}{t^2}s_t(z_t) + \left(\frac{1-t}{t}\right)\frac{\partial s_t(z_t)}{\partial t}
\end{align*}
Substituting this back gives the full expression for the material derivative:
\begin{equation}
    \dot{v}_{t}(z_{t}) = \nabla_{z_{t}}v_{t}(z_{t})\cdot v_{t}(z_{t}) - \frac{z_{t}}{t^{2}} - \frac{1}{t^{2}}s_{t}(z_{t}) + \left(\frac{1-t}{t}\right)\frac{\partial s_{t}(z_{t})}{\partial t}
\end{equation}

Next, we find an expression for \( \frac{\partial s_t}{\partial t} \). We use the continuity equation, \( \frac{\partial p_t}{\partial t} = -\text{div}(p_t v_t) \).
\begin{align*}
    \frac{\partial s_t}{\partial t} &= \frac{\partial}{\partial t} \nabla_{z_t} \log p_t = \nabla_{z_t} \left( \frac{\partial}{\partial t} \log p_t \right) = \nabla_{z_t} \left( \frac{1}{p_t} \frac{\partial p_t}{\partial t} \right) \\
    &= \nabla_{z_t} \left( -\frac{1}{p_t} \text{div}(p_t v_t) \right) \\
    &= \nabla_{z_t} \left( -\frac{1}{p_t} (\nabla_{z_t} p_t \cdot v_t + p_t \text{div}(v_t)) \right) \\
    &= \nabla_{z_t} \left( -(\frac{\nabla_{z_t} p_t}{p_t} \cdot v_t + \text{div}(v_t)) \right) \\
    &= \nabla_{z_t} [-s_t \cdot v_t - \text{div}(v_t)] \\
    &= -\{H_t \cdot v_t + \nabla_{z_t}v_t \cdot s_t + Y_t\},
\end{align*}
where $Y_t(z_t) =\nabla_{z_{t}}\text{div}(v_t(z_{t}))$.
Now, we substitute the expression for \( \frac{\partial s_t}{\partial t} \) into the equation for \( \dot{v}_t \). The derivation follows the specific intermediate steps and cancellations from your notes.
\begin{align*}
    \dot{v}_{t} = \nabla_{z_{t}}v_{t} \cdot v_{t} - \frac{z_{t}}{t^{2}} - \frac{s_{t}}{t^{2}} - \left(\frac{1-t}{t}\right)\{H_{t} \cdot v_{t} + \nabla_{z_{t}}v_{t} \cdot s_{t} + Y_t\}
\end{align*}
Now substitute $s_t(z_t) = \frac{t v_{t}(z_{t})-z_{t}}{1-t}$
\begin{align*}
    = \cancel{\nabla_{z_{t}}v_{t}(z_{t})\cdot v_{t}(z_{t})} &-\frac{z_{t}}{t^{2}}-\left(\frac{1-t}{t}\right)\left\{H_{t}(z_{t})\cdot v_{t}(z_{t}) + \nabla_{z_t}v_t(z_t) \left(\frac{\cancel{t v_{t}(z_{t})}-z_{t}}{1-t}\right)+Y_t(z_t)\right\} - \frac{1}{t^{2}}s_{t}(z_{t})
\end{align*}
Now substitute $\nabla_{z_{t}}v_{t}(z_{t}) = \frac{I+(1-t)H_{t}(z_{t})}{t}$.
\begin{align*}
    \dot{v}_{t}(z_t) &= \cancel{-\frac{z_{t}}{t^{2}}}-\left(\frac{1-t}{t}\right)\left\{H_{t}(z_{t})\left(\cancel{\frac{z_t}{t}}+\left(\frac{1-t}{t}\right)s_{t}(z_{t})\right) - \left(\frac{\cancel{I}+\cancel{(1-t)H_{t}(z_{t})}}{t}\right)\frac{z_t}{1-t} + Y_t(z_t)\right\} - \frac{s_t(z_t)}{t^2} \\
    &= -\left(\frac{1-t}{t}\right)^{2}H_{t}(z_{t})s_{t}(z_{t}) - \left(\frac{1-t}{t}\right)Y_t(z_t) - \frac{1}{t^{2}}s_{t}(z_{t})
\end{align*}
Now, we find an expression for \(Y_t(z_t)\) and substitute it.
\begin{align*}
    Y_t(z_t) &= \nabla_{z_{t}}\text{div}(v(z_{t},t)) = \nabla_{z_{t}}\left(\text{Tr}(\nabla_{z_{t}}v(z_{t},t))\right) \\
    &= \nabla_{z_{t}}\text{Tr}\left(\frac{I}{t}+\left(\frac{1-t}{t}\right)H_{t}(z_{t})\right) \\
    &= \left(\frac{1-t}{t}\right)\nabla_{z_{t}}\text{Tr}(H_{t}(z_{t})) \quad (\text{since } \nabla \text{Tr}(I) = 0)
\end{align*}
Substituting this into the simplified expression for \( \dot{v}_t \) and rearranging gives the final result:
\begin{equation}
    \dot{v}_{t}(z_{t})=-\frac{1}{t^{2}}\left\{(1-t)^{2}[H_{t}(z_{t})s_{t}(z_{t})+\nabla_{z_{t}}\text{Tr}(H_{t}(z_{t}))] + s_t(z_t)\right\}
\end{equation}

\subsection{Expression for $\dot v_t(z_t)$ for a target mixture of Gaussian distributions}
\label{sec: general form of v_dot}
Let $p_1(x) = \sum_{i=1}^{K} \pi_i\; p_{i, 1}(x)$, where $p_{i, 1}(x) = N(x \mid \mu_i, \sigma^2 I_d)$ be the target mixture of $K$ Gaussians. Note that the density of $X_t = tX_1 + (1-t)X_0$ is another $K$-Gaussian mixture given by $p_1(x) = \sum_{i=1}^{K} \pi_i\;  p_{i, t}(x)$, where $p_{i, t}(x) = N(x \mid t\mu_i, \sigma_t^2 I_d)$ and $\sigma_t^2=t^2\sigma^2 + (1-t)^2.$
\paragraph{Derivation of the Score \(s_t(z)\)}:
The score of the mixture is the gradient of its log-density.
\begin{align*}
    s_t(z) &= \nabla_z \log p_t(z) \\
    &= \frac{\nabla_z p_t(z)}{p_t(z)} && \text{(Chain rule for log)} \\
    &= \frac{\nabla_z \left( \sum_{i=1}^K \pi_i p_{i,t}(z)\right)}{p_t(z)} && \text{(Substitute definition of \(p_t\))} \\
    &= \frac{\sum_{i=1}^K \pi_i \nabla_z p_{i,t}(z)}{p_t(z)} && \text{(Linearity of gradient)}
\end{align*}
We use the identity \( \nabla_z p_{i,t}(z) = p_{i,t}(z) \nabla_z \log p_{i,t}(z) = p_{i,t}(z) s_{i,t}(z) \).
\begin{align*}
    s_t(z) &= \frac{ \sum_{i=1}^K \pi_i p_{i,t}(z) s_{i,t}(z)}{p_t(z)} \\
    &= \sum_{i=1}^K \frac{\pi_i p_{i,t}(z)}{ \cdot p_t(z)} s_{i,t}(z) && \text{(Rearrange terms)}
\end{align*}
Recognizing the definition of the weight, \( w_{i,t}(z) = \frac{\pi_i p_{i,t}(z)}{\cdot p_t(z)} \), we arrive at the final expression:
\begin{equation}
    s_t(z) = \sum_{i=1}^K w_{i,t}(z) s_{i,t}(z)
\end{equation}
This shows the score of the mixture is the weighted average of the scores of its components.

\paragraph{Derivation of the Hessian \(H_t(z)\)}:
The Hessian is the Jacobian of the score vector, \( H_t(z) = \nabla_z s_t(z)^T \). We differentiate the expression for \(s_t(z)\) using the product rule.
\begin{align*}
    H_t(z) &= \nabla_z \left( \sum_{i=1}^K w_{i,t}(z) s_{i,t}(z) \right)^T \\
    &= \sum_{i=1}^K \nabla_z (w_{i,t}(z) s_{i,t}(z))^T \\
    &= \sum_{i=1}^K \left( (\nabla_z w_{i,t}(z)) s_{i,t}(z)^T + w_{i,t}(z) (\nabla_z s_{i,t}(z)^T) \right)
\end{align*}
First, we find the gradient of the weight \(w_{i,t}\).
\[ \nabla_z w_{i,t} = \nabla_z \left( \frac{p_{i,t}}{\sum_j p_{j,t}} \right) = \frac{(\nabla_z p_{i,t})(\sum_j p_{j,t}) - p_{i,t}(\sum_j \nabla_z p_{j,t})}{(\sum_j p_{j,t})^2} \]
Substituting \( \nabla_z p_{i,t} = p_{i,t} s_{i,t} \) and dividing by \( (\sum_j p_{j,t})^2 \), we get:
\[ \nabla_z w_{i,t} = \frac{p_{i,t} s_{i,t}}{\sum_j p_{j,t}} - \frac{p_{i,t}}{(\sum_j p_{j,t})^2} \sum_j p_{j,t} s_{j,t} = w_{i,t} s_{i,t} - w_{i,t} s_t = w_{i,t}(s_{i,t} - s_t) \]
Now, substitute this back into the expression for the Hessian. Note that \( \nabla_z s_{i,t}(z)^T = H_{i,t}(z) = -I_d/\sigma_t^2 \).
\begin{align*}
    H_t(z) &= \sum_{i=1}^K \left( w_{i,t}(s_{i,t} - s_t)s_{i,t}^T + w_{i,t} H_{i,t} \right) \\
    &= \left( \sum_i w_{i,t} H_{i,t} \right) + \left( \sum_i w_{i,t} s_{i,t}s_{i,t}^T \right) - s_t \left( \sum_i w_{i,t} s_{i,t}^T \right) \\
    &= \left( \sum_i w_{i,t} (-\frac{I_d}{\sigma_t^2}) \right) + \mathbb{E}_{w_t}[s_{\cdot,t}s_{\cdot,t}^T] - s_t s_t^T \\
    &= -\frac{I_d}{\sigma_t^2} + \text{Cov}_{w_t}(s_{\cdot,t})(z)
\end{align*}
where $\text{Cov}_{w_t}(s_{\cdot,t})(z) := \mathbb{E}_{w_t}[s_{\cdot,t}s_{\cdot,t}^T] - s_t s_t^T$.

\paragraph{Derivation of \(\nabla_{z}\text{Tr}(H_t(z)) \)}:
We start with the trace of the Hessian.
\[ \text{Tr}(H_t(z)) = \text{Tr}\left(-\frac{I_d}{\sigma_t^2}\right) + \text{Tr}(\text{Cov}_{w_t}(s_{\cdot,t})) = -\frac{d}{\sigma_t^2} + \mathbb{E}_{w_t}[\|s_{\cdot,t}\|^2] - \|s_t\|^2 \]
Now we take the gradient of this expression with respect to \(z\).
\[ \nabla_{z}\text{Tr}(H_t(z)) = \nabla_z \left( -\frac{d}{\sigma_t^2} + \sum_{i=1}^K w_{i,t}(z)\|s_{i,t}(z)\|^2 - \|s_t(z)\|^2 \right) \]
The first term is constant and its gradient is zero. We apply the product rule to the second term and the chain rule to the third.
\begin{align*}
    \nabla_{z}\text{Tr}(H_t(z)) &= \sum_{i=1}^K \left( (\nabla_z w_{i,t})\|s_{i,t}\|^2 + w_{i,t}(\nabla_z\|s_{i,t}\|^2) \right) - \nabla_z\|s_t\|^2 \\
    &= \sum_{i=1}^K \left( w_{i,t}(s_{i,t}-s_t)\|s_{i,t}\|^2 + w_{i,t}\left(\frac{-2s_{i,t}}{\sigma_t^2}\right) \right) - 2H_t s_t \\
    &= \underbrace{\sum_i w_{i,t} s_{i,t}\|s_{i,t}\|^2}_{\mathbb{E}_{w_t}[s_{\cdot,t}\|s_{\cdot,t}\|^2]} - \underbrace{s_t\left(\sum_i w_{i,t} \|s_{i,t}\|^2\right)}_{\text{This is } s_t \mathbb{E}_{w_t}[\|s_{\cdot,t}\|^2]} - \underbrace{\frac{2}{\sigma_t^2}\sum_i w_{i,t} s_{i,t}}_{\frac{2}{\sigma_t^2}s_t} - 2H_t s_t
\end{align*}
Grouping the terms gives the final expression:
\begin{equation}
    \nabla_{z}\text{Tr}(H_t(z)) = \mathbb{E}_{w_t}[s_{\cdot,t} \|s_{\cdot,t}\|^2] - s_t\mathbb{E}_{w_t}[\|s_{\cdot,t}\|^2] - \frac{2s_t}{\sigma_t^2} - 2H_ts_t
\end{equation}

\paragraph{Final Simplified Expression for \( \dot{v}_t(z_t) \)}:
We now substitute the derived expression for \(Y(z)\) into the general formula for \( \dot{v}_t(z_t) \).
The general formula is:
\[ \dot{v}_{t}(z_{t})=-\frac{1}{t^{2}}\left\{(1-t)^{2}[H_{t}(z)s_{t}(z)+Y(z)] + s_t(z)\right\} \]
Let's first simplify the term inside the square brackets, \( H_t s_t + Y \).
\begin{align*}
    H_t s_t + Y &= H_t s_t + \left( \mathbb{E}_{w_t}[s_{\cdot,t} \|s_{\cdot,t}\|^2] - s_t\mathbb{E}_{w_t}[\|s_{\cdot,t}\|^2] - \frac{2s_t}{\sigma_t^2} - 2H_t s_t \right) \\
    &= \mathbb{E}_{w_t}[s_{\cdot,t} \|s_{\cdot,t}\|^2] - s_t\mathbb{E}_{w_t}[\|s_{\cdot,t}\|^2] - \frac{2s_t}{\sigma_t^2} - H_t s_t
\end{align*}
This expression can be made more compact by defining the covariance between the random vector \(s_{\cdot,t}\) and the random scalar \(\|s_{\cdot,t}\|^2\) under the discrete distribution \(w_t\):
\[ \text{Cov}_{w_t}(s_{\cdot,t}, \|s_{\cdot,t}\|^2) = \mathbb{E}_{w_t}[s_{\cdot,t} \|s_{\cdot,t}\|^2] - \mathbb{E}_{w_t}[s_{\cdot,t}]\mathbb{E}_{w_t}[\|s_{\cdot,t}\|^2] = \mathbb{E}_{w_t}[s_{\cdot,t} \|s_{\cdot,t}\|^2] - s_t \mathbb{E}_{w_t}[\|s_{\cdot,t}\|^2] \]
This term is a vector that captures the asymmetry (or skewness) in the distribution of component scores. So, the simplified term is:
\[ H_t s_t + Y = \text{Cov}_{w_t}(s_{\cdot,t}, \|s_{\cdot,t}\|^2) - \frac{2s_t}{\sigma_t^2} - H_t s_t \]
Substituting this back into the formula for \( \dot{v}_t \):
\begin{equation}
\label{eq: final form v_dot_t}
    \dot{v}_{t}(z_{t})=-\frac{\sigma^2}{\sigma_t^{2}}s_t -\frac{(1-t)^{2}}{t^2}\left[\text{Cov}_{w_t}(s_{\cdot,t}, \|s_{\cdot,t}\|^2) - \text{Cov}_{w_t}(s_{\cdot,t})s_t\right]
\end{equation}
This is the final, simplified analytical expression for the time evolution of the velocity field for a Gaussian Mixture Model. It reveals that the dynamics are driven by the score (mean field), the Hessian-score product (curvature), and a third-order term related to the asymmetry of the mixture components.

\subsection{Proof of Lemma \ref{lemma: bound on gamma_2}}
\label{sec: bound on v_dot_t for GMM}
For ease of notation, we begin by deriving the expected square norm of the two covariance terms in Equation \eqref{eq: final form v_dot_t} for general Gaussian mixtures and then adapt it to the interpolating structure of RF.



\begin{lemma}[Bound on the Expected Squared Norm of the Hessian]
\label{lemma: bound on Cov}
Let $p(x)$ be a mixture of $K$ Gaussians, given by $p(x) = \sum_{k=1}^K \pi_k\; p_k(x)$, with $p_k(x) = \mathcal{N}(x\mid \mu_k, \Sigma)$, $\Sigma = \sigma^2 I_d$ and $D:= \max_{i, j} \norm{\mu_i - \mu_j}_2$. Let $s_k(x) = \Sigma\inv (\mu_k - x)$ be the score and $\gamma_k(x) = \pi_k p_k(x)/p(x)$ be the posterior weight of the $k^{th}$ mixture component. Then:
$$
 \Ee{x}{\norm{\text{Cov}_{\gamma(x)}(\{s_k(x)\})}_2^2} \le  \frac{D^4}{\sigma^8}.
$$
\end{lemma}
where $\text{Cov}_{\gamma(x)}(\{s_k(x)\}) := \sum_{k=1}^K \gamma_k(x) s_k(x)s_k(x)^\top - s(x)s(x)^\top$.
\begin{proof}
By the property of covariance, we have $\text{Cov}_{\gamma(x)}(\{s_k\}) = \Sigma^{-1} \text{Cov}_{\gamma(x)}(\{\mu_k\}) \Sigma^{-1}$. We recall that $\max_{i \ne j} \norm{\mu_1 - \mu_j}_2 \le D$. Therefore, we have 
\begin{align*}
    \norm{\text{Cov}_{\gamma(x)}(\{\mu_k\})}_2 & \le \sum_{k} \gamma_k(x) \norm{\mu_k - m_{\gamma(x)}(\{\mu_k\})}_2^2 \quad \text{ where } m_{\gamma(x)}(\{\mu_k\}):= \sum_{k}\gamma_k(x) \mu_k\\
    & \le D^2.
\end{align*}
The above bound yields $\norm{\text{Cov}_{\gamma(x)}(\{s_k\})}_2 \le D^2 \norm{\Sigma^{-1}}_2^2 = \frac{D^2}{\sigma^4}$, and which gives:

$$
 \E{\norm{\text{Cov}(\{s_k\})}_2^2} \le  \frac{D^4}{\sigma^8}.
$$
\end{proof}

\begin{lemma}[$D$-based Bound on a Third-Order Covariance Vector]
\label{lemma: bound on C}
Under the same GMM assumptions as Lemma 1, let $\mathbf{C}(x) = \text{Cov}_{\gamma(x)}(\|s_.(x)\|^2, s_.(x))$, where $s_k(x) = \Sigma^{-1}(x-\mu_k)$. Let $D = \max_{i,j} \|\mu_i - \mu_j\|$. The expected squared norm of $\mathbf{C}(x)$ is bounded by:
$$
\mathbb{E}\left[\|\mathbf{C}(x)\|_2^2\right] \le \frac{D^2 d(d+2)}{\sigma^8}
$$
\end{lemma}

\begin{proof}
From its definition, the norm of $\mathbf{C}(x)$ can be bounded as:
$$
\|\mathbf{C}(x)\|_2 = \left\| \sum_{i,j}\gamma_{i}\gamma_{j}\|s_{i}\|^{2}(s_{i}-s_{j}) \right\|_2 \le \sum_{i,j}\gamma_{i}\gamma_{j}\|s_{i}\|^{2}\|s_{i}-s_{j}\|_2.
$$
We introduce a uniform bound on the pairwise distance term:
$$
\|s_i - s_j\|_2 = \|\Sigma^{-1}(\mu_j - \mu_i)\|_2 \le \|\Sigma^{-1}\|_2 \|\mu_j - \mu_i\|_2 \le \|\Sigma^{-1}\|_2 D
$$
Substituting this into the sum and simplifying, using $\sum_j \gamma_j = 1$:
$$
\|\mathbf{C}(x)\|_2 \le \sum_{i,j}\gamma_{i}\gamma_{j}\|s_{i}\|^{2} (\|\Sigma^{-1}\|_2 D) = (\|\Sigma^{-1}\|_2 D) \left( \sum_i \gamma_i \|s_i\|^2 \right)
$$
Squaring both sides and taking the expectation gives:
$$
\mathbb{E}[\|\mathbf{C}(x)\|_2^2] \le (\|\Sigma^{-1}\|_2 D)^2 \cdot \mathbb{E}\left[ \left( \sum_i \gamma_i(x) \|s_i(x)\|^2 \right)^2 \right]
$$
We bound the expectation term using Jensen's inequality:
$$
\mathbb{E}\left[ \left( \sum_i \gamma_i \|s_i\|^2 \right)^2 \right] \le \mathbb{E}\left[ \sum_i \gamma_i \|s_i\|^4 \right] = \sum_i \pi_i \mathbb{E}_{x \sim \mathcal{N}_i}[\|s_i(x)\|^4]
$$
For the isotropic case, this sum evaluates to $\frac{d(d+2)}{\sigma^4}$. We also have $\|\Sigma^{-1}\|_2 = 1/\sigma^2$. Substituting these into the main inequality yields the final result:
$$
\mathbb{E}[\|\mathbf{C}(x)\|_2^2] \le \left(\frac{1}{\sigma^2} D\right)^2 \cdot \left( \frac{d(d+2)}{\sigma^4} \right) = \frac{D^2 d(d+2)}{\sigma^8}
$$
\end{proof}

\paragraph{Final bound on $\gamma_{2,T}$}:
By \citet[Lemma F.3]{gupta2024improvedsamplecomplexitybounds}, we know that $\bbE \norm{s_t}_2^2 = O(d)$, as $p_t$ is mixture of Gaussian distribution (sub-Gaussian).  We note that the maximum inter-mean distance at time $t$ is $t D$, and the variance of each component at time $t$ is $\sigma_t^2 = t^2 \sigma^2 + (1-t)^2$. Recall that from the proof of Lemma \ref{lemma: bound on Cov}, we have $\norm{\text{Cov}_{\gamma(x)}(\{s_k\})}_2 \le \frac{D^2}{\sigma^4}$. If we apply Lemma \ref{lemma: bound on Cov} and Lemma \ref{lemma: bound on C} on \eqref{eq: final form v_dot_t}, then we get 
\[
\bbE \norm{\dot v_t(z_t)}_2^2   \le C_\sigma \left(d + D^2 d^2 + D^4 d \right),
\]
where $C_\sigma>0$ is positive constant depending on $\sigma$.

\paragraph{Explicit bound on Wasserstein convergence rate}

\begin{lemma}
\label{lemma: bound on L for GMM}
    Let the target distribution $\rho_1 = \sum_{k \in [K]} \pi_k N(\mu_k, \sigma^2 I_d)$, and let $ \max_{i \ne j} \Vert \mu_i - \mu_j\Vert_2 \le D$. Consider the ODE flow \eqref{eq: emp-ode-disc} with $T$ discretization steps and the true RF drift field given in Lemma \ref{lem: drift-gmm}. Then, the Lipschitz constant of the RF drift is $\frac{(1+\sigma^2)^2}{\sigma^2} \br{1 + \frac{D^2}{2 \sigma^2}}$ and hence, the squared-Wasserstein convergence error scales as $W_2^2(\widehat \rho_1, \rho_1) = \cO(d^2/T^2)$.
\end{lemma}

\begin{proof}
To provide bound on the Lipschitz constant, we will simply obtain upper bound on the operator norm of $A_t := \nabla v_{z_t}(z_t, t)$. 
Following the notations in Lemma \ref{lemma: GMM general covariance v}, we have each of the component variances to be $\Sigma_i =  \sigma^2 I$ for all $i \in [K]$. This means that $\Sigma_{i, t} = \sigma_t^2 I$, where $\sigma_t^2 = (1-t)^2 + t^2 \sigma^2$. 
Using \eqref{eq: nabla_v}, we have
  \[
A_t = \frac{1}{t}\{I - \frac{(1-t)}{\sigma_t^{2}} I\} +  \frac{t(1-t)}{2 \sigma_t^4} \underbrace{\sum_{i \ne j}w_{i,t} w_{j,t} (\mu_i - \mu_j) (\mu_i - \mu_j)^\top}_{B_t}, 
\]
Now, we have
\[
\|B_t\|_{op} \le \frac{1}{\sigma_t^4} \times \max_{i \ne j}\Vert\mu_i - \mu_j\Vert_2^2 \times (\sum_{i \ne j}w_{i,t} w_{j,t}) \le \frac{1}{\sigma_t^4} \times \max_{i \ne j}\Vert\mu_i - \mu_j\Vert_2^2.
\]
The last inequality follows from the fact that $\sum_{i \ne j}w_{i,t} w_{j,t} = 1 -( \sum_{k \in [K]} w_{k,t}^2) \le 1$.
Recall that $\Sigma = \sigma^2 I_d$, and this entails that $\Sigma_t = \sigma_t^2 I_d$, where $\sigma_t^2 := (1-t)^2 + t^2 \sigma^2$.
Therefore, using the above expressions we have 
\[
\Vert A_t\Vert_{op} \le \frac{\vert(1+\sigma^2)t -1\vert}{\sigma_t^2}  + \frac{D^2}{2 \sigma_t^4}.
\]

Next, we note that $\max_{t \in [0,1]} \vert (1 + \sigma^2)t -1\vert = \max\{\sigma^2 , 1\} \le 1+ \sigma^2$. This is due to the fact that $\vert (1 + \sigma^2)t -1\vert$ is strictly decreasing between $t \in [0, \frac{1}{1+\sigma^2}]$ and strictly increasing when $t \in [\frac{1}{1+\sigma^2}, 1]$. 

Also, we define $\sigma_*^2 :=\min_{t \in [0,1]} \sigma_t^2$. An elementary calculus shows that $\sigma_*^2 = \frac{\sigma^2}{1 + \sigma^2}$. Therefore, we have 

\[\Vert A_t\Vert_{op} \le \frac{\max\{\sigma^2, 1\}} {\sigma_*^2} + \frac{D^2}{2 \sigma_*^4} = \frac{(1+\sigma^2)^2}{\sigma^2} + \frac{D^2 (1+\sigma^2)^2}{2 \sigma^4} = \frac{(1+\sigma^2)^2}{\sigma^2} \br{1 + \frac{D^2}{2 \sigma^2}}\]
Hence the Lipschitz constant is $\frac{(1+\sigma^2)^2}{\sigma^2} \br{1 + \frac{D^2}{2 \sigma^2}}$. Now, the Wasserstein result is a direct application of Theorem \ref{thm: W2 ode disc} and Lemma \ref{lemma: bound on gamma_2} in conjunction, since the estimation error is zero.
\end{proof}

\section{{Proofs for straightness of 2-RF}}
\label{sec: proof of main straightness results}
\subsection{Proof of Theorem~\ref{thm:ItoSigma}}
\label{sec: proof gaussian RF}
Let $X_0\sim \cN(0, I)$ and $X_1\sim \cN(\mu, \Sigma)$. Let $\Sigma_t=t^2\Sigma+(1-t)^2I$.
Then we have that $X_t \sim \cN(t\mu, \Sigma_t),$. Let the density of $X_t$ be $\xi_t$ and the score $s_t(x) = \nabla_x \log \xi_t(x) = \Sigma_t^{-1}(t\mu-x)$. Therefore, by using (\ref{eq: drift-score}), the drift is given by:
\bas{
v(x,t)&=\frac{x}{t}+\frac{1-t}{t}\Sigma_t^{-1}(t\mu-x)\\
&=(1-t)\Sigma_t^{-1}\mu+\frac{1}{t}\br{I-(1-t)\Sigma_t^{-1}}x
}
Therefore, we have $\nabla_x v(x,t) = \frac{1}{t}\br{I-(1-t)\Sigma_t^{-1}}$. This shows that commutativity condition in Theorem \ref{prop: 1-rf Monge} is satisfied and the $z_0 \mapsto Z_1(z_0)$ is the Monge map. To obtain the exact form,
we want to solve the following ODE:
\begin{align}
\label{eq:covar-one-one}
\frac{dZ_t}{dt}-\frac{1}{t}\br{I-(1-t)\Sigma_t^{-1}}Z_t=(1-t)\Sigma_t^{-1}\mu; \quad Z_0 = z_0
\end{align}


Now we look at the structure of $I-(1-t)\Sigma_t^{-1}$. Let the eigendecomposition of $\Sigma=U\Lambda U^\top$. We will assume $\Sigma$ is full rank.
So, 
\bas{
I-(1-t)\Sigma_t^{-1}=U\Lambda_t U^\top
}
where
$\frac{\Lambda_t}{t}=\frac{1}{t}\{I-(1-t)(t^2\Lambda+(1-t)^2I)^{-1}\}$. This can also be written as:
\[
\lambda_{t,i}=\frac{1}{t}\left\{1-\frac{1-t}{t^2\lambda_i+(1-t)^2}\right\}=\frac{t(1+\lambda_i)-1}{t^2\lambda_i+(1-t)^2}.
\]
Substituting this into Equation~\eqref{eq:covar-one-one}, we have:
\[
\frac{dZ_t}{dt}-U \diag(\lambda_{t,1}, \ldots, \lambda_{t,d}) U^\top Z_t=(1-t)\Sigma_t^{-1}\mu.
\]

So, we first get the integrating factors of each eigenvalue.
\bas{
I_i(t)=\frac{1}{\sqrt{(1+\lambda_i)t^2-2t+1}}
}
Multiplying $U \diag(I_1(t), \ldots, I_d(t)) U^\top$ on both sides of the ODE and then solving we get:
\bas{
U\Lambda'_t U^\top Z_t= U\Lambda''_t U^\top\mu+\text{constant}
}
where $\lambda'_{t,i}=\frac{1}{\sqrt{(1+\lambda_i)t^2-2t+1}}$ and $\lambda''_{t,i}=\frac{t}{\sqrt{(1+\lambda_i)t^2-2t+1}}$

This yields,
\ba{
& \Sigma^{-1/2} Z_1(z_0)-z_0=\Sigma^{-1/2}\mu \nonumber\\
& 
\Rightarrow Z_1(z_0)=\Sigma^{1/2}z_0+\mu. \label{eq: gaus-gaus}
}
This finishes the proof.

\subsection{Proof of Theorem~\ref{thm:2gauss_rd}}
\label{sec:twogaussrd}

We point out that straightness of 1-RF can be obtained via an intuitive argument in this case. First note that straightness of RF is invariant under rotation. Under a proper rotation, the $d$ dimensional target distribution can be reduced to another where the means of the two components of the Gaussian mixture are sparse with two non-zero coefficients each, one of which is equal (lets say coordinate 1). Due to this it follows that ODE \eqref{eq: modified ode} gets decoupled and it can be analyzed coordinate-wise. So, essentially the $d$-dimensional problem gets reduced to one-dimensional case and the result follows from Proposition \ref{prop: 1-dim RF}. We elaborate more on this below.
\paragraph{Intuitive proof of straightness:}
\begin{proof}
Let \( \tilde{X}_0, \tilde{X}_1 \in \mathbb{R}^d \) for \( d \geq 2 \), where \( \tilde{X}_0 \sim \cN(0, I) \) and \( \tilde{X}_1 \sim \sum_{i=1}^2 \pi_i \, \cN(\tilde{\mu}_i, \sigma^2 I) \) with {\blue{$\sigma^2=1$ (for simplicity)}}. We start with the matrix \( \tilde{M} = \begin{bmatrix} \tilde{\mu}_1 & \tilde{\mu}_2 \end{bmatrix} \) and perform a QR decomposition: \( \tilde{M} = \tilde{Q} \tilde{R} \), where \( \tilde{Q} \in \mathbb{R}^{d \times 2} \) is an orthonormal matrix that spans the subspace of \( \tilde{\mu}_1 \) and \( \tilde{\mu}_2 \).

Next, we extend \( \tilde{Q} \) to a complete orthonormal basis for \( \mathbb{R}^d \) using \( \tilde{Q}' \in \mathbb{R}^{d \times (d-2)} \), which spans the orthogonal complement of the column space of $\tilde{Q}$. We define \( Q = \begin{bmatrix} \tilde{Q} & \tilde{Q}' \end{bmatrix}^\top \). This projection guarantees that:
\[
Q \tilde{\mu}_1 = (x_1, y_1, 0, \ldots, 0)^\top, \quad Q \tilde{\mu}_2 = (x_2, y_2, 0, \ldots, 0)^\top
\]
i.e., only the first two components are non-zero.

To equalize one of the components, we apply a rotation matrix \( R(\theta) \in \mathbb{R}^{d \times d} \), which rotates the first two components while leaving the others unchanged:
\[
R(\theta) = \begin{bmatrix} \cos\theta & -\sin\theta & 0 \\ \sin\theta & \cos\theta & 0 \\ 0 & 0 & I_{d-2} \end{bmatrix}
\]
We set \( \theta \) as:
\[
\theta = \tan^{-1}\left( \frac{y_2 - y_1}{x_1 - x_2} \right)
\]
This ensures that the second components of \( R(\theta) Q \tilde{\mu}_1 \) and \( R(\theta) Q \tilde{\mu}_2 \) are identical.

Finally, we define the overall transformation as \( P = R(\theta) Q \). This matrix \( P \in \mathbb{R}^{d \times d} \) is orthonormal (and hence, invertible) since it is the product of two orthonormal matrices. The transformation $P$, not only makes the last $d-1$ coordinates of the means identical but also reduces the effective dimension of the flow to two. 

Now, we rotate our space using the linear transformation $P$ and obtain the distributions $X_0 = P \tilde X_0 \sim \cN(0, I)$ and $X_1 = P \tilde X_1 \sim \sum_{i=1}^2 \pi_i\,  \cN(\mu_i, \Sigma)$, where $\mu_i = P\tilde \mu_i$, $ \Sigma = P \tilde \Sigma P^\top = I$. Also note that by the above construction of the transformation $P$, $\mu_{1, k} = \mu_{2, k} := c_k.$ for all $k \in [d]\backslash \bc{1}$. We first show that $(Z_0, Z_1) = \text{Rectify}(X_0, X_1)$ is straight and then argue that an invertible transformation does not hamper straightness.\\
To proceed, we apply the Rectify procedure on $( X_0,  X_1)$ and obtain the following ODE:  
\begin{align*}
    v_t( Z_t) = \frac{d Z_t}{dt}&=\frac{(2t-1) Z_t}{\sigma_t^2}+\frac{1-t}{\sigma_t^2}\sum_{i=1}^2w_i( Z_t) \mu_i
\end{align*}
For $k \in [d]\backslash \bc{1}$, we have that
\begin{align*}
    \frac{d Z_{t, k}}{dt}&=\frac{(2t-1) Z_{t, k}}{\sigma_t^2}+ c_k
\end{align*}
Hence, using (\ref{eq: gaus-gaus}) the final mapping is just a translation given by $Z_{1, k} = Z_{0, k} + c_k$.
However, for the first co-ordinate, for $g_t( Z_{t, 1}) = \log \br{\frac{\pi_2}{\pi_1}}-\frac{1}{2\sigma_t^2}\br{\br{ Z_{t, 1}-t \mu_{2, 1}}^2 - \br{ Z_{t, 1}-t \mu_{1, 1}}^2}$, we have 
\begin{align*}
    \frac{d Z_{t, 1}}{dt}&= \underbrace{\frac{(2t-1) Z_{t, 1}}{\sigma_t^2}+\frac{1-t}{\sigma_t^2} \br{\frac{ \mu_{1, 1} +  \mu_{2, 1}\exp\br{g_t( Z_{t, 1})}}{1+\exp\br{g_t( Z_{t, 1})}}}}_{v_1(Z_{t,1}, t)}
    \label{eq: ode-y-2-mix}
\end{align*}
Now using \eqref{eq: nabla_v}, it is easily verifiable that $\nabla v_1(Z_{t,1},t )$ is bounded, i.e., $v_1(Z_{t,1},t )$ is Lipschitz. Therefore, $Z_{t,1}(\cdot)$ is an increasing function. As a result $z_0 \mapsto Z_1(z_0)$ is co-ordinate wise increasing function and $H_t(z_0):= (1-t)z_0 + t Z_1(z_0)$ is an invertible map. Therefore, straightness of the resulting coupling follows.
\end{proof}

\subsection{Proof of Theorem~\ref{thm:mixtomix}}
\label{sec:proofgmmtogmm}

\begin{proof}
Consider $\bmu_{01} = (0,a)^\top, \bmu_{02} = (0,-a)^\top$ and $\bmu_{11} = (a,a)^\top, \bmu_{12} = (a,-a)^\top$ for some $a>0$.
Let $$X_0 \sim 0.5 \cN(\bmu_{01}, I) + 0.5 \cN(\bmu_{02}, I) ,\quad  X_1 \sim 0.5 \cN(\bmu_{11}, I) + 0.5 \cN(\bmu_{12},I).$$
In this case, the velocity functions in $x$ and $y$-direction for 1-rectification turns out to be
\begin{align*}
&u_t(x) = \frac{\left(2t-1\right)x}{\sigma_t^2} + \frac{(1-t)a}{\sigma_t^2},\\
& v_t(y) = \frac{\left(2t-1\right)y}{\sigma_t^2}\\
& 
+\frac{a}{\sigma_t^2}\cdot\frac{\exp\left(-\frac{\left(y-a\right)^{2}}{2\sigma_t^{2}}\right)\left(1-2t\right)-\ \exp\left(-\ \frac{\left(y+a\right)^{2}}{2\sigma_t^2}\right)\left(1-2t\right)+\exp\left(-\frac{\left(y\ -\ \left(2t-1\right)a\right)^{2}}{2\sigma_t^2}\right)-\exp\left(-\frac{\left(y\ +\ \left(2t-1\right)a\right)^{2}}{2\sigma_t^2}\right)}{\exp\left(-\frac{\left(y-a\right)^{2}}{2\sigma_t^2}\right)+\ \exp\left(-\ \frac{\left(y+a\right)^{2}}{2\sigma_t^2}\right)+\exp\left(-\frac{\left(y\ -\ \left(2t-1\right)a\right)^{2}}{2\sigma_t^2}\right)+\exp\left(-\frac{\left(y\ +\ \left(2t-1\right)a\right)^{2}}{2\sigma_t^2}\right)}.
\end{align*}

Next, we will take the derivative of $v_t(y)$ with respect to $y$. For notational brevity, let us define 
\begin{align*}
& e_1(y) = \exp\left(-\frac{\left(y-a\right)^{2}}{2\sigma_t^{2}}\right)(1-2t),\\
& e_2(y) = \exp\left(-\frac{\left(y + a\right)^{2}}{2\sigma_t^{2}}\right)(1-2t),\\
& e_3(y) = \exp\left(-\frac{\left(y-a (2t-1)\right)^{2}}{2\sigma_t^{2}}\right),\\
& e_4(y) = \exp\left(-\frac{\left(y + a (2t-1)\right)^{2}}{2\sigma_t^{2}}\right).
\end{align*}
Then we have 
\[
\abs{\frac{d v_t(y)}{dy}} \le \frac{2t-1}{\sigma_t^2} + \frac{a^2}{\sigma_t^4} \cdot \frac{4 \{e_1(y) e_2(y) + e_2(y) e_3(y) + e_3(y) e_4(y) + e_4(y) e_1(y)\}}{(\sum_{j=1}^4 e_j(y))^2} \le 2 + 4a^2.
\]
We used the basic inequalities $4(ab+bc+cd+da) \le (a+b+c+d)^2$ and $\sigma_t^2 \ge 1/2$ in the last step of the above display.

This shows that $v_t(y)$ is uniformly Lipschitz. This entails that the map $\cT: \bbR \to \bbR$ that sends $y_0$ to a point $y_1\in \bbR$, and defined through the ODE
\[
\ddt Y_t = v_t(Y_t); \;Y_0 = y_0,
\]
is an injective map due to the uniqueness of the solution of the above ODE. Also, we denote by $Y_t^{y_0}$ the solution of the above ODE.

To show the strict increasing property of $\cT$, let us consider the same ODE with $Y_0 = \tilde{y}_0 < y_0$. We also consider the solution $Y_t^{\tilde{y}_0}$. Consider the function $L_t := Y_t^{y_0} - Y_t^{\tilde{y}_0}$, which is also continuous in $t\in [0,1]$. To prove increasing property, it is enough to show that $L_1>0$. Let us assume that $L_1\le 0$. We already know $L_0>0$, and hence by Intermediate Value Property, we have there exists a $\tau\in(0,1]$ such that $L_\tau = 0$. This entails that there exists $y_\tau \in \bbR$ such that $Y_{\tau}^{y_0} = Y_{\tau}^{\tilde{y_0}} = y_\tau$. This shows that we have two different solutions of the ODE passing through $(\tau, y_\tau)$, which is a contradiction. This proves the coveted strict increasing property of $\cT$. Hence, \textit{we have a straight coupling} by similar argument as in previous section.
\end{proof}

\subsection{Proof of Theorem \ref{thm: straightness}}
\label{appendix: 1-rf straight}
Recall that we need to show that $\bbE (Z_1 - Z_0 \mid t Z_1 + (1-t)Z_0) = Z_1 - Z_0$ almost surely in $t, Z_0$. Therefore, it suffices to show that the function $H_t(z_0):= (1-t)z_0 + t Z_1(z_0)$ is an invertible map. We will equivalently show that $H_t(\cdot)$ is locally inveritible and a proper function \cite{plastock1974homeomorphisms}. 

\paragraph{$H_t$ is locally invertible}: Note that $\nabla_{z_0} H_t(z_0) = (1-t) I_d + t J_1^{z_0}$. Therefore, due to Assumption \ref{assumption: J1 psd} we can conclude that 
$$
u^\top \nabla_{z_0} H_t(z_0) u \ge (1-t) \quad \text{ for all $ t \in [0,1]$ and $u \in \{w \in \bbR^d \mid \norm{w}_2 = 1\}$.}$$
Specifically, $\nabla_{z_0} H_t(z_0)$ is invertible if $t <1$. This shows that $H_t$ is locally invertible for all $z_0$, as long as $t <1$.

\paragraph{$H_t$ is proper}:  We will show that $\norm{H_t(z_0)}_2 \to \infty$ as $\norm{z_0}_2 \to \infty$ for $t <1$. Let us fix $z_1 \in \bbR^d$ and define the function 
$h_t(\lambda) = \innerprod{H_t(\lambda z_0 + \bar{\lambda} z_1), z_0 - z_1}$, where $\lambda \in [0,1]$ and $\bar{\lambda} = 1- \lambda$. By the Taylor's formula we have the following for some $\tilde{\lambda} \in (0,1)$:

\begin{align*}
    & h_t(1) = h_t(0) + (z_0 - z_1)^\top \nabla H_t(\tilde z_\lambda) (z_0 - z_1) \quad \quad ; \tilde z_\lambda = \tilde \lambda z_0 + (1 - \tilde \lambda) z_1\\
    & \Rightarrow \innerprod{H_t(z_0) - H_t (z_1), z_0 - z_1} \ge (1-t) \norm{z_0 - z_1}_2^2\\
    & \Rightarrow \frac{\norm{H_t(z_0) - H_t(z_1)}_2^2}{2 (1-t)} \ge \frac{(1-t)}{2} \norm{z_0 - z_1}_2^2\quad \left(\text{Young's inequality: $ab \le \frac{a^2}{2 \eta}  + \frac{\eta b^2}{2}$}\right)
\end{align*}
The final inequality shows that $\lim_{\norm{z_0}_2 \to \infty}\norm{H_t(z_0)}_2 = \infty$.

\paragraph{$H_t$ is globally invertible}: $H_t$ is locally invertible and proper. Then $H_t$ is globally invertible due to Corollary 2.1 of \cite{plastock1974homeomorphisms}.

\paragraph{Straightness}: This shows that $H_t(Z_0)$ is invertible almost surely in $Z_0 \sim N(0, I_d)$ for all $ t \in [0,1)$. 
Then, we have 

\begin{equation}\label{eq: V_straightness}
\begin{aligned}
     V(Z_0, Z_1) &:= \bbE_{Z_0, t}\norm{Z_1 - Z_0 - \bbE \{Z_1 - Z_0 \mid H_t(Z_0)\}}_2 \\
     & = \int_{0}^1 \bbE_{Z_0}\Big[\norm{Z_1 - Z_0 - \bbE \{Z_1 - Z_0 \mid H_t(Z_0)\}}_2 \Big] \;dt\\
     & = 0.
\end{aligned}
\end{equation}
Therefore, $(Z_0, Z_1)$ is a straight coupling.

\subsection{Proof of Theorem \ref{prop: K-mixture rf}}
\label{appendix: proof of K-mixture rf}
We will show that Assumption \ref{assumption: J1 psd} is satisfied in this case. For notational convenience we drop the superscript $z_0$ in $J_t^{z_0}$ and denote it by $J_t$. We start with the ODE 
\[
\ddt J_t = \nabla v (Z_t(z_0), t) J_t, \quad J_0 = I_d
\]
Let $U_t = J_t^{-1}$. Then, elementary calculation shows that 
\[
\ddt U_t = - J_t^{-1} \dot J_t J_t^{-1} = -U_t \nabla v(Z_t(z_0),t ).  
\]
Using \eqref{eq: nabla_v}, we get 
\[
A_t := \nabla v(Z_t(z_0), z_0) = \frac{1}{t}\bs{I_d - (1-t) \Sigma_t^{-1}} +  \frac{t(1-t)}{2} \underbrace{\sum_{i \ne j}w_{i,t} w_{j,t} \Sigma_t^{-1}(\mu_i - \mu_j) (\mu_i - \mu_j)^\top \Sigma_t^{-1}}_{B_t}, 
\]
Recall that $\Sigma = \sigma^2 I_d$, and this entails that $\Sigma_t = \sigma_t^2 I_d$, where $\sigma_t^2 := (1-t)^2 + t^2 \sigma^2$. Therefore, 
\[
\norm{B_t}_{op} \le \frac{1}{\sigma_t^4} \times \underbrace{\max_{i \ne j}\norm{\mu_i - \mu_j}_2^2}_{=: D}.
\]

\paragraph{Bounding $\norm{U_t}_{op}$}:
Take any unit vector $u$. Since $A_t$ is symmetric,
\begin{align*}
    \ddt u^T U_t U_t^T u & = - 2u^T U_t A_t U_t^T u\\
     & = - \frac{2}{t} u^\top U_t \bs{I_d - (1-t) \Sigma_t^{-1}} U_t^\top u  - \underbrace{ t (1-t) u^\top U_t B_t U_t^\top u}_{\ge 0}\\
     & \le - 2 \frac{(1 + \sigma^2)t -1}{\sigma_t^2} \br{u^\top U_t U_t^\top u} \\
\end{align*}
Hence for any unit vector $u$,
\begin{align*}
    u^T U_t U_t^\top u\leq \frac{1}{\sigma_t^2 } \Rightarrow \Norm{U_t U_t^\top}_{op} \le \frac{1}{\sigma_t^2 }
\end{align*}
Hence 
\begin{align*}
    \|U_t\|^2_{op}\leq \|U_t U_t^T\|_{op} \le \frac{1}{\sigma_t^2 }.
\end{align*}

\paragraph{Lower bounding on $u^T U_t u$}:
Assuming $\norm{u}_2 = 1$, we consider the evolution:

\begin{equation*}
    \ddt \left( u^T U_t u \right) = -\frac{(1 +\sigma^2)t - 1}{\sigma_t^2} u^T U_t u - \frac{t(1 - t)}{2} u^T U_t B_t u.
\end{equation*}

Using the bound:

\begin{equation*}
    |u^T  U_t B_t u| \le \Norm{U_t}_{op} 
 \Norm{B_t}_{op} \leq \frac{D}{\sigma_t^5} ,
\end{equation*}

we get:

\begin{equation*}
    \ddt \left( u^T U_t u \right) \geq -\frac{(1+\sigma^2)t - 1}{\sigma_t^2} u^T U_t u - \frac{t(1 - t)}{2\sigma_t^5} D.
\end{equation*}


Define:

\begin{equation*}
    I(t) = \int_0^t \frac{(1+\sigma^2)s - 1}{\sigma_s^2} ds = \frac{1}{2} \log(\sigma_t^2).\Rightarrow e^{I(t)} = \sigma_t
\end{equation*}

Multiplying by the integrating factor, we obtain:

\begin{equation*}
    \ddt \left( e^{I(t)} u^T U_t u \right) \geq - e^{I(t)} \cdot \frac{t(1 - t)}{2\sigma_t^5} D = -\frac{t(1 - t)}{2\sigma_t^4} D .
\end{equation*}

Integrating both sides, we obtain:

\begin{equation*}
    \sigma u^T U_1 u \geq 1- \frac{D}{2}\int_0^1 \frac{s(1 - s)}{\sigma_s^{4}} ds = 1 - \frac{D}{4 \sigma^2}.  
\end{equation*}

Therefore, if $D \le 4 \sigma^2$ then $u^T U_1 u \ge 0$ for all unit vector $u$. This ensures that Assumption \ref{assumption: J1 psd} is satisfied as Lemma \ref{lemma: lower bund on matrix} yields

\[
\lambda_{min}(J_1 + J_1^T) = 2 \min_{u : \norm{u}_2 =1} u^T J_1 u \ge 2\br{\min_{u : \norm{u}_2 =1} u^\top U_1 u} \times \lambda_{min}(J_1^T J_1)\ge 0.
\]
Moreover, Assumption \ref{assumption : osgood} is automatically satisfied (see Section \ref{sec: app gaussian to general mixture}).
Therefore, the straightness of 1-RF follows from Theorem \ref{thm: straightness}.

\subsection{A general version of Theorem \ref{prop: K-mixture rf}}
\label{sec: proof of general version of K-mixture}
In this section, we present a slightly general version of Theorem \ref{prop: K-mixture rf} as follows.

\begin{theorem}
    \label{prop: K-mixture rf general}
    Let $(X_0, X_1) \sim N(0, I_d) \otimes \rho_1$ where $\rho_1 := \sum_{j=1}^K \pi_j N(\mu_j,  \Sigma )$ with mixture proportions $\{\pi_j\}_{j\in [K]}\in (0,1)^K$. Let $m, M$ be minimum and maximum eigenvalues of $\Sigma^{1/2}$ respectively, and $\kappa := M/ m$ be its  condition number. If $\max_{i \ne j} \norm{\mu_i - \mu_j}_2^2 \le 2m^2 (3 - \kappa^2)$, then 1-RF yields a straight coupling.
\end{theorem}

\begin{proof}
Our target distribution is $\pi_1 = \sum_{j = 1}^K \pi_j N(\mu_j, \Sigma)$. WLOG, we can assume that 
\[
\Sigma = \diag(\sigma_1^2, \ldots, \sigma_d^2).
\]
If $\Sigma$ is not diagonal, the let $\Sigma = P \Lambda P^\top$ be the spectral decomposition of $\Sigma$, where $\Lambda$ is diagonal matrix and $PP^\top = I_d$.
Now, recall that straightness is invariant under rotation. Therefore, we can always restrict ourselves to $P^\top_{\#} \rho_0 = N(0, I_d)$ and $P^\top_{\#} \rho_1 = \sum_{j= 1}^K \pi_j N(P^\top \mu_j, \Lambda)$. If RF leads to straight coupling in the rotated frame, the it also does the same in the un-rotated one. 

 Keeping this in mind, 
we start with the ODE 
\[
\ddt J_t = \nabla v (Z_t(z_0), t) J_t, \quad J_0 = I_d.
\]
Let $U_t = J_t^{-1}$. Then, elementary calculation shows that 
\[
\ddt U_t = - J_t^{-1} \dot J_t J_t^{-1} = -U_t \nabla v(Z_t(z_0),t ).  
\]
We know
\[
A_t := \nabla v(Z_t(z_0), z_0) = \frac{1}{t}\bs{I_d - (1-t) \Sigma_t^{-1}} +  \frac{t(1-t)}{2} \underbrace{\sum_{i \ne j}w_{i,t} w_{j,t} \Sigma_t^{-1}(\mu_i - \mu_j) (\mu_i - \mu_j)^\top \Sigma_t^{-1}}_{B_t}, 
\]
Recall that $M = \max_{i\in [d]} \sigma_i$ and $m = \min_{i\in [d]} \sigma_i$. We also define $M_t^2 := (1-t)^2 + t^2 M^2$ and $m_t^2 := (1-t)^2 + t^2 m^2$.
Therefore, 
\[
\norm{B_t}_{op} \le \frac{1}{m_t^4} \times \underbrace{\max_{i \ne j}\norm{\mu_i - \mu_j}_2^2}_{=: D}.
\]

\paragraph{Bounding $\norm{U_t}_{op}$}:
Take any unit vector $u$. Since $A_t$ is symmetric,
\begin{align*}
    \ddt u^T U_t U_t^T u & = - 2u^T U_t A_t U_t^T u\\
     & = - \frac{2}{t} u^\top U_t \bs{I_d - (1-t) \Sigma_t^{-1}} U_t^\top u  - \underbrace{ t (1-t) u^\top U_t B_t U_t^\top u}_{\ge 0}\\
     & =  u^\top U_t\;\diag\br{- 2 \frac{(1 + \sigma_1^2)t -1}{(1-t)^2 + t^2 \sigma_1^2}, \ldots, - 2 \frac{(1 + \sigma_d^2)t -1}{(1-t)^2 + t^2 \sigma_d^2}} \;U_t^\top u  - \underbrace{ t (1-t) u^\top U_t B_t U_t^\top u}_{\ge 0}\\
     & \le - 2 \frac{(1 + m^2)t -1}{m_t^2} \br{u^\top U_t U_t^\top u} \\
\end{align*}
The last inequality is true because the function $f_t(a) := -  \frac{(1 + a)t -1}{(1-t)^2 + t^2 a} $ is non-increasing over $a>0$ as $\frac{\partial f_t(a)}{\partial a} = - \frac{ t(1-t)}{((1-t)^2 + t^2 a)^2}<0$.
Hence, the following holds for any unit vector $u$:
\begin{align*}
    u^T U_t U_t^\top u\leq \frac{1}{m_t^2 } \Rightarrow \Norm{U_t U_t^\top}_{op} \le \frac{1}{m_t^2 }.
\end{align*}
Hence, 
\begin{equation}
    \label{eq: U_t bound}\|U_t\|^2_{op}\leq \|U_t U_t^T\|_{op} \le \frac{1}{m_t^2 }.
\end{equation}

\paragraph{Lower bounding on $u^T U_t u$}:
Assuming $\norm{u}_2 = 1$, we again consider the ODE:

\begin{equation*}
\begin{aligned}
    \ddt \left( u^T U_t u \right) & = -\frac{1}{t} u^\top U_t \bs{I_d - (1-t) \Sigma_t^{-1}}  u  - \frac{t(1-t)}{2} u^T U_t B_t u\\
    & = u^\top U_t [ f_t(m^2) I_d + \diag\left(  f_t(\sigma_1^2) -  f_t(m^2), \ldots,  f_t(\sigma_d^2) -  f_t(m^2)\right)  ] u -  \frac{t(1-t)}{2} u^T U_t B_t u\\
    & = -\frac{(1 + m^2)t -1}{m_t^2} u^\top U_t u  + u^\top U_t \diag\left(  f_t(\sigma_1^2) -  f_t(m^2), \ldots,  f_t(\sigma_d^2) -  f_t(m^2)\right) u - \frac{t(1-t)}{2} u^T U_t B_t u\\
    & = -\frac{(1 + m^2)t -1}{m_t^2} u^\top U_t u  - u^\top U_t \diag\left(  f_t(m^2) - f_t(\sigma_1^2)   , \ldots,  f_t(m^2) - f_t(\sigma_d^2) \right) u - \frac{t(1-t)}{2} u^T U_t B_t u
\end{aligned}
\end{equation*}
As $f_t(\cdot)$ is decreasing function, we have 
\begin{align*}
f_t(m^2) - f_t(\sigma_k^2) & = - \frac{t (1-t)}{((1-t)^2 + t^2 \xi_k^2)^2} .( m^2 - \sigma_k^2) \quad (\text{where $m \le \xi_k \le \sigma_k$})\\
& = \frac{t (1-t)}{((1-t)^2 + t^2 \xi_k^2)^2} .(\sigma_k^2 - m^2)\\
& \le \frac{t(1-t)}{m_t^4} . (M^2 - m^2) = \frac{t(1-t)}{m_t^4} . (\kappa^2 -1)m^2
\end{align*}

Next, Using \eqref{eq: U_t bound} we get:

\begin{equation*}
    |u^T  U_t B_t u| \le \Norm{U_t}_{op} 
 \Norm{B_t}_{op} \leq \frac{D}{m_t^5} ,
\end{equation*}
\[
|u^\top U_t \diag\left(  f_t(m^2) - f_t(\sigma_1^2)   , \ldots,  f_t(m^2) - f_t(\sigma_d^2) \right) u| \le \frac{t(1-t)}{m_t^5} (\kappa^2 -1) m^2.
\]
we get:

\begin{equation*}
    \ddt \left( u^T U_t u \right) \geq -  \frac{(1+m^2)t - 1}{m_t^2} u^T U_t u - \frac{t(1-t)}{m_t^5} (\kappa^2 -1) m^2 - \frac{t(1 - t)}{ 2 m_t^5} D.
\end{equation*}


Define:

\begin{equation*}
    I(t) =  \int_0^t \frac{(1+m^2)s - 1}{m_s^2} ds = \frac{1}{2}\log(m_t^2).\Rightarrow e^{I(t)} = m_t
\end{equation*}

Multiplying by the integrating factor, we obtain:

\begin{equation*}
    \ddt \left( e^{I(t)} u^T U_t u \right) \geq  - e^{I(t)}. \frac{t(1-t)}{m_t^5} (\kappa^2 -1)m^2 - e^{I(t)} \cdot \frac{t(1 - t)}{m_t^5} D = - \frac{t(1-t)}{m_t^4}(\kappa^2 -1)m^2 -\frac{ t(1 - t) }{2 m_t^4} D .
\end{equation*}

Integrating both sides, we obtain:

\begin{equation*}
\begin{aligned}
    m u^T U_1 u & \geq 1- \bc{\frac{D}{2} + (\kappa^2 -1)m^2}\int_0^1 \frac{s(1 - s)}{m_s^{4}} ds \\
    & =  1 - \bc{\frac{D}{2} + (\kappa^2 -1)m^2} . \frac{ 1}{2 m^2}.  
\end{aligned}
\end{equation*}
The last inequality follows from Lemma \ref{lemma: mindless integration}.
Therefore, if we have 
$$
D \le  2 m^2 (3 - \kappa^2),$$
then $u^T U_1 u \ge 0$ for all unit vector $u$. This ensures that Assumption \ref{assumption: J1 psd} is satisfied as Lemma \ref{lemma: lower bund on matrix} yields

\[
\lambda_{min}(J_1 + J_1^T) = 2 \min_{u : \norm{u}_2 =1} u^T J_1 u \ge 2\br{\min_{u : \norm{u}_2 =1} u^\top U_1 u} \times \lambda_{min}(J_1^T J_1)\ge 0.
\]
Moreover, Assumption \ref{assumption : osgood} is automatically satisfied (see Section \ref{sec: app gaussian to general mixture}).
Therefore, the straightness of 1-RF follows from Theorem \ref{thm: straightness}.
\end{proof}

\section{{Proofs for Monge optimality of 2-RF}}
\label{sec: monge map appendix}

\subsection{Proof of Theorem \ref{prop: 1-rf Monge}}
\label{appendix: 1-rf monge}
Recall the ODE \eqref{eq: jacobian ODE}
\begin{equation}
     \frac{dJ_t^{z_0}}{dt} = \nabla_{Z_t}v_t (Z_t(z_0)) J_t^{z_0}; \quad J_0^{z_0} = I_d. 
 \end{equation}
Due to the commutativity assumption, the unique solution to the above the ODE can be written in the following form 
\begin{equation}
J_t^{z_0} = \exp \left(\int_0^t \nabla_{Z_u}v_u (Z_u(z_0)) \; du\right),
\label{eq: J_t solution}
\end{equation}
where $\exp(A) := \sum_{k=0}^\infty A^k/k!$ for a $d\times d$ matrix $A$. We point the readers to \cite{ma2009note} and Section 5 of \cite{magnus1954exponential} for discussions related to \eqref{eq: J_t solution}.  Also, note that $\nabla_{Z_u}v_u (Z_u(z_0))$ is a symmetric matrix which ensures that $J_t^{z_0}$ is a symmetric positive definite matrix for all $t \in [0,1]$. In particular, $J_1^{z_0} = \nabla_{z_0} Z_1(z_0)$ is also a symmetric positive definite matrix. Now, we will show that there exists a convex function $\varphi$ such that $Z_1 = \nabla \varphi$. To show this, we will essentially use the symmetry of $\nabla_{z_0} Z_1(z_0)$. Let us define 
\[
\varphi(z) = \int_{0}^1 \innerprod{Z_1(t z), z}\; dt,
\]
and note the following algebraic identity \begin{align*}
\frac{\partial Z_{1j}(tz)}{\partial t} & = \lim_{h \to 0} \frac{Z_{1j}(t z +  h z) - Z_{1j}(tz)}{h} \\
& = \sum_{k=1}^d z_k . \frac{\partial Z_{1j}(u)}{\partial u_k}\Big \vert_{u = tz}\\
& = \frac{1}{t}\sum_{k=1}^d z_k . \frac{\partial Z_{1j}(tz)}{\partial z_k}
\end{align*}
Therefore, we have 
\begin{align*}
    \frac{\partial \varphi(z)}{\partial z_j} &= \int_0^1 \sum_{k = 1}^d \frac{\partial (z_k Z_{1k}(tz))}{\partial z_j}\; dt\\
    & = \int_0^1 Z_{1j}(tz)\; dt + \int_0^1 \sum_{k = 1}^d z_k \blue{\frac{\partial Z_{1k}(tz)}{\partial z_j}}\; dt\\
    & = \int_0^1 Z_{1j}(tz)\; dt + \int_0^1 \sum_{k = 1}^d z_k \blue{\frac{\partial Z_{1j}(tz)}{\partial z_k}}\; dt \quad \text{(Due to Symmetry)}\\
    & = \int_0^1 Z_{1j}(tz)\; dt + \int_0^1 t . \frac{\partial Z_{1j}(tz)}{\partial t}\; dt\\
    & = \int_0^1 Z_{1j}(tz) + t Z_{1j}(tz) \big\vert_{t = 0}^{t=1} - \int_{0}^1 Z_{1j}(tz)\; dt\\
    & = Z_{1j}(z).
    \end{align*}
This shows that $Z_1 = \nabla \varphi$, and in fact $\nabla^2 \varphi = \nabla Z_1 \succ 0$. Therefore, $\varphi$ is also a convex function. Then the optimality of the coupling $(Z_0,Z_1)$ follows from Theorem 1.48 of \cite{santambrogio2015optimal}.

\bk


\subsection{RF in one-dimension yields Monge coupling} 
\label{sec: RF in 1-D}
1-RF always yields a straight coupling as long as the solution to ODE \eqref{eq: modified ode} exists and is unique. This is because, in the one-dimensional setting, the commutativity condition in Theorem \ref{prop: 1-rf Monge} is trivially met. Thus, we derive the following conclusion.

\begin{proposition}
    \label{prop: 1-dim RF}
    Let $(X_0,X_1) \sim N(0,1) \times \rho_1$ be a bivariate random vector in $\bbR^2$, and let the conditions in Theorem \ref{thm: gaussian straight flow} hold. Then 1-RF yields the Monge transport map between $N(0,1)$ and $\rho_1$, and hence it also produces a straight coupling.
\end{proposition}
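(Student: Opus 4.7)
The plan is to assemble Proposition \ref{prop: 1-dim RF} by directly invoking the preceding results from Section \ref{sec: straightness and Monge}, exploiting the fact that in one dimension the algebraic obstructions that complicate the $d$-dimensional case collapse to trivialities about positive scalars.

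First, I would verify that the conditions of Theorem \ref{thm: gaussian straight flow} deliver a unique, well-defined flow $Z_t(z_0)$ for every initial value $z_0 \in \bbR$. Since $\bbE|X_1| < \infty$ is assumed and the Osgood-type criterion (Assumption \ref{assumption : osgood}) is part of the hypotheses of Theorem \ref{thm: gaussian straight flow}, existence and uniqueness of the ODE \eqref{eq: modified ode} follow immediately, and the velocity $v_t(\cdot)$ inherits the $\cC^{1,1}$ regularity of Assumption \ref{assumption: locally L-lipschitz}.

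Second, I would deduce straightness by checking that Assumption \ref{assumption: J1 psd} is vacuously true in one dimension. By Remark \ref{remark: 1-dim straight}, the scalar Jacobian satisfies $J_t^{z_0} = \exp\bigl(\int_0^t \partial_z v_u(Z_u(z_0))\, du\bigr) > 0$ for all $t \in [0,1]$ and all $z_0$, so the one-by-one ``matrix'' $J_1^{z_0} + J_1^{z_0\top} = 2 J_1^{z_0}$ is strictly positive. Theorem \ref{thm: straightness} then applies and yields a straight coupling. For the Monge map conclusion, I would invoke Proposition \ref{prop: 1-rf Monge}: its hypothesis is the commutativity $\nabla_{Z_t}v_t(Z_t(z_0))\,\nabla_{Z_s}v_s(Z_s(z_0)) = \nabla_{Z_s}v_s(Z_s(z_0))\,\nabla_{Z_t}v_t(Z_t(z_0))$ for all $s \ne t$, but in one dimension each factor is a scalar and the identity is trivial. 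Proposition \ref{prop: 1-rf Monge} then gives that the induced transport map $z_0 \mapsto Z_1(z_0)$ solves the Monge problem \eqref{eq: Monge problem} between $N(0,1)$ and $\rho_1$ under the squared-$\ell_2$ cost.

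The main obstacle is essentially cosmetic: there is no genuinely new argument to carry out, only a careful check that the two obstructions that make the multidimensional theory delicate—positive semidefiniteness of $J_1^{z_0} + J_1^{z_0\top}$ and commutativity of the velocity Jacobians along trajectories—both dissolve in $d=1$ because all relevant objects are positive scalars. The one point worth stating explicitly is the consistency between the two conclusions: the straightness derived from Theorem \ref{thm: straightness} and the Monge optimality from Proposition \ref{prop: 1-rf Monge} are compatible because, as noted in the discussion following Proposition \ref{prop: 1-rf Monge}, Brenier's theorem forces the Monge map in one dimension to coincide with a monotone (hence straight-coupling-inducing) rearrangement of $N(0,1)$ onto $\rho_1$.
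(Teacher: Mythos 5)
Your proposal is correct and follows essentially the same route as the paper: the paper justifies Proposition \ref{prop: 1-dim RF} in the surrounding text by noting that Remark \ref{remark: 1-dim straight} makes Assumption \ref{assumption: J1 psd} automatic in one dimension (the scalar Jacobian is a positive exponential), so Theorem \ref{thm: straightness} gives straightness, and that the commutativity hypothesis of Proposition \ref{prop: 1-rf Monge} is trivially satisfied for scalars, giving the Monge map. Your closing remark about consistency with the monotone-rearrangement/Brenier picture matches the paper's own follow-up discussion via Lemma D.9 of \cite{liu2023flow}.
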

The above theorem shows that 1-RF yields the Monge map for any target distribution (with a second moment) in one dimension. For example, the target distribution $\rho_1$ can be any log-concave or a general $K$-mixture of Gaussian distribution, and 1-RF will yield the Monge map between $N(0,1)$ and $\rho_1$. 

However, it is instructive to point out that the straightness of the 1-RF can be understood through a much more intuitive and fundamental argument \citep[Theorem D.10]{liu2023flow}. In the one-dimensional case, uniqueness of the solution of ODE \eqref{eq: modified ode} implies that the map $z_0\mapsto Z_t(z_0)$ is a monotonically increasing function for all $t \in (0,1]$. Then the straightness follows immediately from Lemma D.9 of \cite{liu2023flow}. 
In addition, the monotonicity property also ensure that all the quantiles are preserved:
\begin{lemma}
\label{lem:quantile_invariance}
   Let $z_0\in \bbR$ and write $z_t:= Z_t(z_0)$. If the solution of ODE \eqref{eq: ode-true} is unique, then $\bbP(Z_t \le z_t)$ is a constant depending on $z_0$ for all $t$.
\end{lemma}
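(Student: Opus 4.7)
The plan is to exploit uniqueness of the ODE in one dimension to show that trajectories cannot cross, which makes the flow map $z_0 \mapsto Z_t(z_0)$ a strictly increasing function of the initial condition for every fixed $t$. Once monotonicity is in hand, the event $\{Z_t \le z_t\}$ reduces to $\{Z_0 \le z_0\}$, whose probability is independent of $t$.

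In more detail, I would proceed as follows. First I would fix two distinct initial points $z_0 < z_0'$ and consider the two deterministic trajectories $t \mapsto Z_t(z_0)$ and $t \mapsto Z_t(z_0')$. Suppose for contradiction that there exists $t^* \in (0,1]$ with $Z_{t^*}(z_0) = Z_{t^*}(z_0')$. Then running the ODE \eqref{eq: modified ode} backward in time from the common point $Z_{t^*}(z_0)$ would yield two distinct solutions at time $0$, contradicting the assumed uniqueness of the ODE starting from $Z_{t^*}(z_0)$. Hence the trajectories never coincide. By continuity of $t \mapsto Z_t(z_0) - Z_t(z_0')$ and the fact that at $t=0$ this difference is negative, it remains negative throughout $[0,1]$. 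Therefore, for every $t \in [0,1]$, the map $z_0 \mapsto Z_t(z_0)$ is strictly increasing on $\bbR$.

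Next I would use this monotonicity to rewrite the event in question. Since the stochastic flow $Z_t$ is obtained by applying the (deterministic) flow map to the random initial condition $Z_0 \sim \rho_0$, namely $Z_t = Z_t(Z_0)$, strict monotonicity of $Z_t(\cdot)$ gives the almost sure equivalence
\[
\{Z_t \le Z_t(z_0)\} \;=\; \{Z_t(Z_0) \le Z_t(z_0)\} \;=\; \{Z_0 \le z_0\}.
\]
Taking probabilities on both sides yields $\bbP(Z_t \le z_t) = \bbP(Z_0 \le z_0)$, which depends only on $z_0$ and not on $t$, establishing the lemma.

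The main (and essentially only) obstacle is the backward-uniqueness argument: I need to be careful that uniqueness of the forward ODE \eqref{eq: ode-true} from any starting time and starting point is inherited from the hypothesis of the lemma. Under Assumption \ref{assumption: locally L-lipschitz}, $v_t$ is $\cC^{1,1}$ in the spatial variable, so the ODE is time-reversible and uniqueness holds both forward and backward in time; this is the content invoked by Proposition \ref{thm: general straight flow}. Once this is pinned down, the rest of the argument is a short topological consequence of the one-dimensional ordering.
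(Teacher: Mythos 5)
Your proposal is correct and follows essentially the same route as the paper: uniqueness of the one-dimensional ODE forbids trajectory crossings, continuity preserves the initial ordering, and monotonicity of $z_0 \mapsto Z_t(z_0)$ turns $\{Z_t \le z_t\}$ into $\{Z_0 \le z_0\}$. The only difference is that you are more careful than the paper in noting that the non-crossing argument really needs two-sided (backward) uniqueness through the hypothetical meeting point, which the $\cC^{1,1}$/Lipschitz hypothesis does supply; the paper's proof invokes ``uniqueness'' at the crossing time without spelling this out.
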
 

\textit{Proof.}
    We recall the ODE $\dot Z_t = v_t(Z_t)$ with $Z_0  = z_0$. As $x \mapsto v_t(x)$ is uniformly Lipschitz, there exists a unique solution $\{Z_t\}_{t \in [0,1]}$ such that $Z_0 = z_0$.  Moreover, the map $Z_t : z_0 \mapsto z_t$ is monotonically increasing. To see this, let us assume  $z_0 > \tilde z_0$, but $z_t < \tilde z_t$. Note that $G(\tau) : = Z_\tau(z_0) - Z_{\tau}(\tilde z_0)$ is continuous in $\tau$. Also, $G(0)>0$ and $G(t)<0$. By the intermediate value property, there exists a $t_0 \in [0,1]$ such that $G(t_0) = 0$, i.e., $z_{t_0} = \tilde z_{t_0}$. This violates the uniqueness condition of the ODE solution. Hence, $Z_t$ is monotonically increasing.
By monotonicity, it follows that   
\[
\pr(Z_t \le z_t) 
= \pr(Z_0 \le z_0).
\]

In addition, this monotonicity property also ensures that $Z_1 = \nabla \varphi$ for some convex function $\varphi$. This immediately shows that $Z_1(\cdot)$ is the Monge map \citep[Theorem 1.48]{santambrogio2015optimal}. However, such arguments can not be easily generalized in higher dimensions and require deeper theoretical treatments as in Theorem \ref{prop: 1-rf Monge}.
In the next sections, we move to examples in higher dimensions.

\subsection{Proof of Proposition \ref{prop: monge map gaussian & 2-mix}}

\label{sec: proof of monge map gaussian & 2-mix}

\paragraph{Gaussian to Gaussian case}:
As shown in Section \ref{sec: proof gaussian RF}, we have 
\[\nabla_x v(x,t) = \frac{1}{t}\br{I-(1-t)\Sigma_t^{-1}}.\]
Therefore, It is clear that $\nabla v(Z_t(z_0),t)$ and $\nabla v(Z_s(z_0),s)$ are commutative. Hence, the result follows from Theorem \ref{prop: 1-rf Monge}.

\paragraph{Gaussian to 2-mixture of Gaussian case}:
First, note that Assumption \ref{assumption : osgood} is satisfied by the discussion in Section \ref{sec: app gaussian to general mixture}. Therefore, 
it suffices to prove the commutativity of $\nabla v(Z_t(z_0), t)$ and $\nabla v(Z_s(z_0), s)$ for some $t < s$. Using \eqref{eq: nabla_v}, we have 
\[
A_t := \nabla v(Z_t(z_0), z_0) = \frac{(1+\sigma^2)t - 1}{\sigma_t^2}I_d +  \frac{t(1-t)}{\sigma_t^4} w_{1,t} w_{2,t} (\mu_1 - \mu_2) (\mu_1 - \mu_2)^\top, 
\]
where $\sigma_t^2 : = t^2 \sigma^2 + (1-t)^2$. Now, it is evident that $A_t A_s = A_s A_t$. Hence, the result follows from Theorem \ref{prop: 1-rf Monge}.

\section{{Proofs for Existence of RF}}
In this section, we collect the proofs of the main results of Appendix \ref{sec: general straightness}.

\subsection{Proof of Theorem \ref{thm: gaussian straight flow}
}
\label{sec: app gaussian straight flow}
 We start by analyzing the velocity function. Recall that 
    \[
    v_t(x) = \begin{cases}
        \frac{x}{t} + \left(\frac{1-t}{t}\right) s_t(x) & , 0 < t < 1\\
        \bbE (X_1) - x  & ,\quad t =0\\
        x &, \quad t =1.
    \end{cases}
    \]
    where $s_t(x)$ is the (data) score function of $(1-t)X_0 + tX_1$. Let $\phi$ denote the standard gaussian density function in $\bbR^d$. 
    
    \textbf{Verifying Assumption \ref{assumption: locally L-lipschitz}:} For $t\in [0,1)$ we have 
    \begin{align*}
        s_t(x) &= \nabla_x \log \left(\int_{- \infty}^\infty (1-t)^{-d/2} \phi\left(\frac{ x - ty}{1-t}\right) \; \rho_1(dy) \right)\\
        & = \frac{ \frac{1}{1-t}\int_{-\infty}^{\infty} \left(\frac{ty - x}{1-t}\right)\phi\left(\frac{ x - ty}{1-t}\right)  \; \rho_1(dy)}{\int_{-\infty}^\infty \phi\left(\frac{ x - ty}{1-t}\right)  \; \rho_1(dy) }\\
        & = \frac{t}{(1-t)^2}. \frac{\int_{-\infty}^{\infty} y\phi\left(\frac{ x - ty}{1-t}\right)   \; \rho_1(dy)}{\int_{-\infty}^\infty \phi\left(\frac{ x - ty}{1-t}\right)   \;\rho_1(dy) } - \frac{x}{(1-t)^2}.
    \end{align*}
    Therefore, $v_t(x) = \frac{\int_{-\infty}^{\infty} \left(\frac{y-x}{1-t}\right)\phi\left(\frac{ x - ty}{1-t}\right) \; \rho_1(dy)}{\int_{-\infty}^\infty \phi\left(\frac{ x - ty}{1-t}\right) \; \rho_1(dy)}$ for $t \in [0,1)$.

It is quite clear that $v_0(x)$ and $v_1(x)$ are $\cC^2$ functions. Moreover, one can show that $v_t(x)$ is also $\cC^2$ function for every $t \in (0,1)$ ($\nabla_x$ and $\int$ are interchangeable due to moment condition). It suffices to show that $\Psi_1(x)  := \int_{-\infty}^{\infty} y \phi\left(\frac{ x - ty}{1-t}\right) \; \rho_1(dy)$ and $\Psi_2(x):= \int_{-\infty}^{\infty} \phi\left(\frac{ x - ty}{1-t}\right) \; \rho_1(dy)$ are $\cC^2$ functions and $\Psi_2 >0$. Note that, $\Psi_2(x) = \bbE_{X_1 \sim \rho_1} \phi\left(\frac{x - t X_1}{1-t}\right)>0$. Now, we will show that $\Psi_1(x) $ is $\cC^1$. One can similarly show that it is also $\cC^2$ by following a similar argument.

We define
\[
D(x, y):= \nabla_x \left[y \phi \left(\frac{x - ty}{1-t}\right)\right] = \frac{1}{(1-t)^2} y \left(ty - x\right)^\top \exp \left( - \frac{\norm{x - ty}_2^2}{2 (1-t)^2}\right).
\]

Note that if $\norm{y}_2^2 \ge 4 \norm{x}_2^2/t^2$, we have $\innerprod{ u, D(x,y) u} \le \frac{t \norm{y}_2^2 + \norm{y}_2 \norm{x}_2}{(1-t)^2} \exp( - t\norm{y}_2^2/4)$ for all $u \in \bbS^{d-1}$, as $\norm{ty - x}_2^2 \ge (t^2/2)  \norm{y}_2^2 - \norm{x}_2^2 \ge (t^2/4)  \norm{y}_2^2$. In addition, the upper bound is integrable w.r.t $\rho_1(dy)$. For $\norm{y}_2^2 \le 4 \norm{x}_2^2/t^2$, we have $\innerprod{u,D(x,y)u } \le  \frac{t \norm{y}_2^2 + \norm{y}_2 \norm{x}_2}{(1-t)^2} \le \frac{ 6 \norm{x}_2^2}{t (1-t)^2}$, and the upper bound is obviously integrabel w.r.t $\rho_1(dy)$. Therefore, we have 
\[
\nabla \Psi_1(x) = \int_{-\infty}^\infty D(x,y) \; \rho_1(dy).
\]
The continuity also follows from generalized DCT. One can take a further derivative to show that $\Psi_1$ is $\cC^2$ function, and follow the similar argument for $\Psi_2(x)$.
    

\textbf{Non-explosive:}
For notational brevity, we write $X_t$ instead of $X_t(z_0)$. Note that 
\[
\ddt \norm{X_t}_2^2 = \innerprod{X_t, v_t(X_t)} \le h(\norm{X_t}_2^2).
\]
Write $U_t := \norm{X_t}_2^2$. Let $V_t$ be a sequence of maps such that 
\[
\ddt V_t = h(V_t); \quad V_0 = U_0.
\]
Due to Condition \eqref{eq: osgood}, we have $V_t < \infty$. 
Next, we claim that $U_t \le V_t$ for all $t \in [0,1]$. 

\underline{\textit{Under local-lipschitz property:}} If not, then there exist times $t_0, t_1$ such that 
\[
U_{t_0} = V_{t_0}, \quad \text{and }\quad U_t > V_t\quad \text{for all $t_0 < t \le t_1$.}
\]
Define $\Delta(t):= U_t - V_t$. Therefore, we have $\Delta(t_0)=0$ and $\Delta(t)>0$ for all $t \in (t_0, t_1]$. Let $w = U_{t_0} = V_{t_0}$. Due to local-Lipschitz property of $h$, there exists $\delta_w>0$ and $L_{w}>0$ such that 
\[
\abs{w_1 - w} \vee \abs{w_2-w} <\delta_w \Rightarrow \abs{h(w_1) - h(w_2)} \le L_w \abs{w_1 - w_2}.
\]
Due to continuity of $U_t$ and $V_t$ at $t = t_0$, there exists $\eta>0$ such that $t+\eta < t_1$ and for all $\eta^\prime \le \eta$ we have $\abs{U_{t_0+\eta^\prime} - w} \vee \abs{V_{t_0 + \eta^\prime} - w} <\delta_w$. For , $t \in [t_0, t_0 + \eta]$, we consider the ODE
\begin{align*}
\dot \Delta (t) & = \dot U_t - \dot V_t\\
& = h(U_t) - h(V_t)\\
& \le L_w \abs{U_t - V_t} \quad (\text{local-Lipshcitzness})\\
& = L_w \Delta(t) \quad \quad\quad (\text{as $\Delta(t)>0$}).
\end{align*}
Therefore, by Gronwall's lemma we have $\Delta (t) \le \Delta(t_0) \exp(L_w t)$. This implies that $\Delta(t) \le 0$ for $t \in (t_0 , t_0 +\eta]$, which is a contradiction to the fact that $\Delta(t)>0$ for all $t \in (t_0, t_1]$. Hence, we have $U_t \le V_t <\infty$ for all $t \in [0,1]$.  This establishes the non-explosive property (Condition \eqref{eq: non-explosive}) of the ODE.

\underline{\textit{Under strictly increasing property:}}
In this case, we will show a stronger result, i.e., $U_t< V_t$ for all $t \in (0,1]$.
If not, let $\tau := \inf\{t>0: U_t \ge V_t \}$. By definition, we have $\tau>0$ and $U_\tau \ge V_\tau$. This implies that 
\[
\int_0^\tau h(U_t) - h(V_t) \; dt\ge 0 \Rightarrow \text{$\exists s \in (0, \tau)$ such that $h(U_s) \ge h(V_s)$.}
\]
Therefore, we have $U_s \ge V_s$, which contradicts the definition of $\tau$. Hence, we have $U_t <V_t$ for all $t \in (0,1]$.

Now the result follows by applying Proposition \ref{thm: general straight flow}.

\subsection{Non-explosivity: Gaussian to a general mixture of Gaussian}
\label{sec: app gaussian to general mixture}
First, for notational brevity, we write $\norm{u}_{\Sigma} = \sqrt{u^\top \Sigma^{-1} u}$ for a positive-definite matrix $\Sigma$.
Let $X_0 \sim N\br{0, I_d}$ and $X_1 \sim \sum_{i=1}^K \pi_i N\br{\mu_i, \Sigma_i}$. Let $X_t = tX_1 + (1-t)X_0$, then we have
\begin{align} 
    v_t(x) = \frac{x}{t} + \frac{1-t}{t}s_t(x)
\end{align}
where, $s_t(x) = \nabla_x \log p_t(x)$ is given by
\[s_t(x) = \sum_i w_{i, t}(x) \Sigma_{i,t}^{-1}\br{t\mu_i-x},\]
$\Sigma_{i,t} = (1-t)^2 I_d + t^2\Sigma_{i}$ and 
\[w_{i, t}(x) = \frac{\pi_i \exp\br{\frac{-\norm{x-t\mu_i}^2_{\Sigma_i}}{2}}}{\sum_j\pi_j\exp\br{\frac{-\norm{x-t\mu_j}^2_{\Sigma_i}}{2}}}.\]

Therefore, we have

\[
v_t(x) = \sum_i w_{i,t}(x) \left(I_d - (1-t) \Sigma_{i,t}^{-1}\right)\frac{x}{t} + (1-t) \sum_{i} w_{i,t}(x) \Sigma_{i,t}^{-1} \mu_i
\]

Note that, if $\lambda$ is an eigenvalue of $\Sigma_{i}$, then the corresponding eigenvalue of $\frac{1}{t}(I_d - (1-t)\Sigma_{i,t}^{-1})$ is $\frac{t^2(1+\lambda) -1}{(1-t)^2 + t\lambda^2} \le (1+ \lambda^{-1})$. Therefore, $\norm{\frac{1}{t}(I_d - (1-t)\Sigma_{i,t}^{-1})}_{op} \le 1 + \norm{\Sigma_i^{-1}}_{op}=: A_i$. Similar argument shows that $\norm{\Sigma_{i,t}^{-1}}_{op} \le A_i$. Therefore, we have
\[
\innerprod{x, v_t(x)} \le \underbrace{(\max_i A_i)}_{A} \norm{x}_2^2 +  \underbrace{(\max_i A_i \norm{\mu_i}_2)}_{B} \norm{x}_2.
\]
Therefore, Assumption \ref{assumption : osgood} is satisfied with $h(u) = A u + B \sqrt{u}$ which is strictly monotonic function and $\int_{u_0}^\infty (Au + B \sqrt{u})^{-1} \; du = \infty$ for all $u_0>0$. Moreover, we have $\bbE \norm{X_1}_2 < \infty$. Therefore, by Theorem \ref{thm: gaussian straight flow} we conclude that the solution to the ODE \eqref{eq: modified ode} is unique.

\section{{Auxiliary results}}
    
\begin{lemma}
    \label{lemma: lower bund on matrix}
    Let $A \in \bbR^{d\times d}$ be an invertible matrix. Define $q(A):= \min_{u : \norm{u}_2 = 1} u^\top A u$. Then the following inequality is true provided $q(A^{-1})\ge 0$:
    \[
    q(A) \ge q(A^{-1}) q(A^\top A).
    \]
    
\end{lemma}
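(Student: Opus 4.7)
The whole argument rests on one algebraic identity that rewrites $u^\top A u$ in terms of $A^{-1}$ and $A^\top A$. Specifically, for any unit vector $u \in \mathbb{R}^d$, I would observe that $u = A^{-1}(Au)$, so
\[
u^\top A u \;=\; u^\top (Au) \;=\; (A^{-1}(Au))^\top (Au) \;=\; (Au)^\top (A^{-1})^\top (Au).
\]
Since the right-hand side is a scalar, it equals its own transpose, which lets me replace $(A^{-1})^\top$ by $A^{-1}$. Writing $v := Au/\|Au\|_2$ (well-defined because $A$ is invertible, so $Au \ne 0$), this becomes
\[
u^\top A u \;=\; \|Au\|_2^{2}\,\bigl(v^\top A^{-1} v\bigr).
\]

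Now I would bound the two factors separately. The first factor is $\|Au\|_2^{2} = u^\top A^\top A u \ge q(A^\top A)$, which is automatically non-negative because $A^\top A$ is positive semidefinite. The second factor satisfies $v^\top A^{-1} v \ge q(A^{-1}) \ge 0$ by the hypothesis on $A^{-1}$. Since both factors are non-negative, I can multiply the inequalities to get
\[
u^\top A u \;\ge\; q(A^\top A)\,q(A^{-1}).
\]
Taking the infimum over unit vectors $u$ on the left-hand side yields the claim.

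There is essentially no obstacle: the only subtle step is recognising that $(Au)^\top (A^{-1})^\top (Au) = (Au)^\top A^{-1} (Au)$ via the scalar/transpose trick, and the only place the hypothesis $q(A^{-1}) \ge 0$ is used is to justify multiplying two lower bounds without flipping inequalities.
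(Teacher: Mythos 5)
Your proof is correct and follows essentially the same route as the paper's: both rewrite $u^\top A u$ as a quadratic form in $A^{-1}$ evaluated at $Au$ via the scalar-transpose trick, bound $\|Au\|_2^2$ below by $q(A^\top A)$ and the normalized quadratic form below by $q(A^{-1})$, and use $q(A^{-1})\ge 0$ to multiply the two non-negative lower bounds. The only cosmetic difference is that the paper works with a minimizing unit vector $u$ directly while you argue for arbitrary $u$ and then take the infimum.
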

\begin{proof}
    Let $u$ be a unit vector such that $u^\top A u  = q (A)$. As $A$ is invertible, there exists a $v \in \bbR^d$ such that $u =  A^{-1}v$. Note that we have $\norm{v}_2^2 = u^\top A^\top A u \ge q (A^\top A)\ge 0$.  Then, we have
    \begin{align*}
        q(A) & = u^\top A u \\
        & = v^\top (A^{-1})^\top v\\
        & = v^\top (A^{-1}) v\\
        & \ge q(A^{-1}) \norm{v}_2^2 \ge q(A^{-1}) q (A^\top A).
    \end{align*}
\end{proof}

\begin{lemma}
\label{lemma: drift-score}
Let $(X_0, X_1) \sim N(0, I_d) \otimes \rho_1$. 
Let the density of $X_t = tX + (1-t)Z$ be $p_t$, and the score to be $s_t(x) = \nabla \log p_t(x)$. Then, we have
\[
v_t(x) = \frac{x}{t} + \br{\frac{1-t}{t}} s_t(x).
\]
\end{lemma}

\begin{proof}
First, note that due to Tweedie's formula \citep{robbins1992empirical} we have $\bbE{tX \mid X_t = x} = x + (1-t)^2 s_t(x)$.
 Using this, we have
\begin{align}
    v_t(x) &= \bbE[X-Z \mid X_t = x] \nonumber\\
    & = \bbE[\frac{X - X_t}{1-t} \mid X_t = x] \nonumber\\\
    & = \frac{x+(1-t)^2s_t(x)}{t(1-t)} - \frac{x}{(1-t)} \quad (\text{applying Tweedie's formula} ) \nonumber\\\ 
    & = \frac{x}{t} + \br{\frac{1-t}{t}} s_t(x) .
\end{align}
\end{proof}

\begin{lemma}
\label{lemma: drift_GMM_to_GMM}
Let $X_0 \sim \frac{1}{K_0}\sum_{i=1}^{K_0}  N(\mu_{0i}, \sigma^2I)$, and $X_1 \sim \frac{1}{K_1}\sum_{i=1}^{K_1}  N(\mu_{1i}, \sigma^2I)$ be independent, and define $X_t = tX_1 + (1-t)X_0$. Then, we have 
\[
v_t(x) = \frac{x}{t} + \frac{(1-t)\sigma^2}{t}\br{\frac{1}{K_0}\sum_{i=1}^{K_0} \frac{p_{t}^{(i)}(x)}{ p_t(x)} \br{s_t^{(i)}(x) - \frac{\mu_{0i}}{1-t}}},
\]
where $p_t^{(i)}(x)  = \frac{1}{K_1} \sum_{j=1}^{K_1} N(\underbrace{t\mu_{1j} + (1-t)\mu_{0i}}_{\mu_{tj}^{(i)}}, \sigma_t^2)$, $\sigma_t^2 = (t^2+(1-t)^2)\sigma^2$.
$$s_t^{(i)}(x) = \nabla_x \log p_t^{(i)}(x) = \frac{1}{\sigma^2_t}\br{\sum_{j=1}^{K_1} w_j^{(i)}(x) \mu^{(i)}_{tj} - x},$$
and 
\[w_j^{(i)}(x) = \frac{\exp\br{\frac{-\norm{x-\mu^{(i)}_{tj}}^2}{2\sigma_t^2}}}{\sum_j\exp\br{\frac{-\norm{x-\mu^{(i)}_{tj}}^2}{2\sigma_t^2}}}\]

\end{lemma}

\begin{proof}
\begin{align*}
    v_t(x) & = \E{X_1 - X_0 \mid X_t = x}\\
    & = \E{\frac{X_1 - X_t}{1-t} \mid X_t = x}\\
    & = \frac{1}{t(1-t)}\br{\E{tX_1 \mid X_t = x} - tx}\\
    & = \frac{1}{t(1-t)}\br{\frac{1}{K_0}\sum_{i=1}^{K_0} \frac{p_{t}^{(i)}(x)}{ p_t(x)} \E{tX_1 \mid X_t^{(i)} = x}- tx}\\
    & = \frac{1}{t(1-t)}\br{\frac{1}{K_0}\sum_{i=1}^{K_0} \frac{p_{t}^{(i)}(x)}{ p_t(x)} \br{x - (1-t)\mu_{0i}+ \tilde{\sigma}_t^2 s_t^{(i)}(x)}- tx}, \quad \where \tilde\sigma_t^2 = (1-t)^2\sigma^2\\
    & = \frac{x}{t} + \frac{(1-t)\sigma^2}{t}\br{\frac{1}{K_0}\sum_{i=1}^{K_0} \frac{p_{t}^{(i)}(x)}{ p_t(x)} \br{s_t^{(i)}(x) - \frac{\mu_{0i}}{1-t}}} 
\end{align*}
\end{proof}

\begin{lemma}\label{lem: drift-gmm}
Let $X_0 \sim \cN\br{0, I_d}$ and $X_1 \sim \sum_i \pi_i N\br{\mu_i, \sigma^2_iI}$ be independent. Let $X_t = tX_1 + (1-t)X_0$, with density $p_t$. Then, using Lemma \ref{lemma: drift-score}, we have
\begin{align} \label{eq: drift-gmm-true}
    v_t(x) = \frac{x}{t} + \frac{1-t}{t}s_t(x)
\end{align}
where, $s_t(x) = \nabla_x \log p_t(x)$ is given by

\[s_t(x) = \sum_i w_{i, t}(x) \br{\frac{t\mu_i-x}{\sigma^2_{i,t}}},\]
$\sigma^2_{i,t} = (1-t)^2 + t^2\sigma^2_i$ and 

\[w_{i, t}(x) = \frac{\pi_i \exp\br{\frac{-\norm{x-t\mu_i}^2}{2\sigma_{i, t}^2}}}{\sum_j\pi_j\exp\br{\frac{-\norm{x-t\mu_j}^2}{2\sigma_{i, t}^2}}}\]
\end{lemma}

\begin{proof}
    The result directly follows from Lemma \ref{lemma: drift_GMM_to_GMM} with $K_0 = 1$.
\end{proof}

\begin{lemma}
\label{lemma: GMM general covariance v}
Let $X_0 \sim N\br{0, I}$ and $X_1 \sim \sum_i \pi_i N\br{\mu_i, \Sigma_i}$ be independent random variables. Let $X_t = tX_1 + (1-t)X_0$ with density $p_t$. Then, we have
\begin{align} 
    v_t(x) = \frac{x}{t} + \frac{1-t}{t}s_t(x),
\end{align}
where, $s_t(x) = \nabla_x \log p_t(x)$ is given by
\[s_t(x) = \sum_i w_{i, t}(x) \Sigma_{i, t}^{-1}\br{t\mu_i-x},\]
$\Sigma_{i,t} = (1-t)^2 I_d + t^2\Sigma_i$ and 
\[w_{i, t}(x) = \frac{\frac{\pi_i}{\sqrt{\det(\Sigma_{i, t})}} \exp\br{\frac{-(x-t\mu_i)^\top {\Sigma_{i,t}^{-1}}(x - t\mu_i)}{2}}}{\sum_j \frac{\pi_j}{\sqrt{\det(\Sigma_{j, t})}}\exp\br{\frac{-(x-t\mu_j)^\top {\Sigma_{j,t}^{-1}}(x - t\mu_j)}{2}}}.\]
\end{lemma}

\paragraph{Note:}
One can also evaluate the exact derivative the drift $v_t$ in the above case.
For notational brevity, we define 
$\delta_{i,t} := \Sigma_{i,t}^{-1}(t\mu_i-z_t)$. Then, we have

\begin{equation}
\label{eq: nabla_v}
\begin{aligned}
\nabla_{z_t} v(z_t,t) &= \left\{\sum_i \frac{1}{t}\left\{ I_d - (1-t) \Sigma_{i,t}^{-1}\right\}. w_{i,t}(z_t)\right\} \\
&\quad + {\left(\frac{1-t}{t}\right). \left\{
\sum_{i <j} w_{i,t}(z_t) w_{j,t}(z_t) (\delta_{i,t} - \delta_{j,t}) (\delta_{i,t} - \delta_{j,t})^\top
 \right\}}
\end{aligned}
\end{equation}

\begin{lemma}
\label{lemma: mindless integration}
For $a,b>0$, define
\[
I(a,b)=\int_{0}^{1} s(1-s)\,
\frac{\sqrt{(1-s)^2+a^{2}s^{2}}}{\bigl((1-s)^2+b^{2}s^{2}\bigr)^{5/2}}\,ds.
\]
Then
\[
I(a,b)=\frac{a^{2}+ab+b^{2}}{3\,b^{3}(a+b)}.
\]
\end{lemma}

\begin{proof}
Begin with the substitution
\[
u=\frac{s}{1-s},\qquad s=\frac{u}{1+u},\qquad ds=\frac{du}{(1+u)^{2}}.
\]
Direct algebra shows that the integral becomes
\[
I(a,b)=\int_{0}^{\infty}
u\,\frac{\sqrt{1+a^{2}u^{2}}}{(1+b^{2}u^{2})^{5/2}}\,du.
\]
Next let $v=u^{2}$, so $u\,du=\frac{1}{2}dv$.  Then
\[
I(a,b)=\frac12 \int_{0}^{\infty}
\frac{\sqrt{1+a^{2}v}}{(1+b^{2}v)^{5/2}}\,dv.
\]
Now scale by $y=b^{2}v$, so $dv=\tfrac{1}{b^{2}}dy$ and
\[
I(a,b)=\frac{1}{2b^{2}} \int_{0}^{\infty}
\frac{\sqrt{\,1+\frac{a^{2}}{b^{2}}y\,}}{(1+y)^{5/2}}\,dy.
\]
Set
\[
r=\frac{a^{2}}{b^{2}}.
\]
To convert the improper integral to a bounded interval, use
\[
y=\frac{t}{1-t},\qquad dy=\frac{dt}{(1-t)^{2}},\qquad t\in[0,1].
\]
A straightforward simplification yields
\[
\int_{0}^{\infty} \frac{\sqrt{1+r y}}{(1+y)^{5/2}}\,dy
= \int_{0}^{1} \sqrt{\,1+(r-1)t\,}\,dt .
\]

If $r\ne 1$, the elementary antiderivative gives
\[
\int_{0}^{1}\sqrt{1+(r-1)t}\,dt
= \frac{2}{3}\,\frac{r^{3/2}-1}{r-1}.
\]
Therefore,
\[
I(a,b)
= \frac{1}{2b^{2}}\cdot \frac{2}{3}\,\frac{r^{3/2}-1}{r-1}
= \frac{1}{3b^{2}} \frac{r^{3/2}-1}{r-1}.
\]

Substitute back $r=\frac{a^{2}}{b^{2}}$:
\[
r^{3/2}=\frac{a^{3}}{b^{3}},
\qquad
r-1=\frac{a^{2}-b^{2}}{b^{2}},
\]
so
\[
I(a,b)
= \frac{1}{3b^{2}}
\cdot
\frac{\frac{a^{3}}{b^{3}} - 1}{\frac{a^{2}-b^{2}}{b^{2}}}
=
\frac{a^{3}-b^{3}}{3b^{3}\,(a^{2}-b^{2})}.
\]
Factor numerator and denominator:
\[
a^{3}-b^{3}=(a-b)(a^{2}+ab+b^{2}),\qquad
a^{2}-b^{2}=(a-b)(a+b).
\]
Thus
\[
I(a,b)=
\frac{a^{2}+ab+b^{2}}{3\,b^{3}(a+b)}.
\]

\end{proof}

\end{document}